\newtheorem{theorem}{Theorem}
\newtheorem*{T1}{Theorem~\ref{thm:volumetric_approximation_of_coverage}}
\newtheorem{lemma}[theorem]{Lemma}
\title{\method: Surface Coverage Optimization in Unknown Environments by Volumetric Integration}
\author{%
  Antoine Guédon \qquad Pascal Monasse \qquad Vincent Lepetit 
  \\
  LIGM, Ecole des Ponts, Univ Gustave Eiffel, CNRS, France \\
  \texttt{\{antoine.guedon,pascal.monasse,vincent.lepetit\}@enpc.fr} \\
}
\newif\ifshowedits
\newcommand{\addeditor}[3]{%
  \definecolor{#1color}{rgb}{#3}
  \expandafter\newcommand\csname #1\endcsname[1]{%
  \ifshowedits
    {\color{#1color} ##1}%
  \else
    {##1}%
  \fi
  }%
  \expandafter\newcommand\csname #1rmk\endcsname[1]{%
  \ifshowedits
    {\color{#1color} {\bf [#2: ##1]}}
  \fi
  }%
  \expandafter\newcommand\csname #1rpl\endcsname[2]{%
  \ifshowedits
    {\color{#1color} ##1 \sout{##2}}
  \else
    {##1}
  \fi
  }%
}
\newcommand{\mycomment}[1]{}
\newcommand{\createtextvar}[1]{
  \expandafter\newcommand\csname #1\endcsname{%
  {\text{#1}}
}%
}
\newcommand{\textvars}[1]{\forcsvlist{\createtextvar}{#1}}
\newcommand{\calC}{{\cal C}}
\newcommand{\calL}{{\cal L}}
\newcommand{\calP}{{\cal P}}
\newcommand{\IR}{{\mathds{R}}}
\newcommand{\ie}{\emph{i.e.}}
\newcommand{\eg}{\emph{e.g.}}
\newcommand{\method}{SCONE\xspace}
\newcommand{\occfield}{\sigma}
\newcommand{\probfield}{\hat{\sigma}} 
\newcommand{\absvis}{v} 
\newcommand{\vis}{\nu} 
\newcommand{\newvis}{g}  
\newcommand{\camhist}{H}
\newcommand{\proxy}{\hat{\chi}}
\newcommand{\softmax}{\text{softmax}}
\begin{document}

\maketitle

\begin{abstract}
  Next Best View computation (NBV) is a long-standing problem in robotics, and consists in identifying the next most informative sensor position(s) for reconstructing a 3D object or scene efficiently and accurately. Like most current methods, we consider NBV prediction from a depth sensor like Lidar systems. Learning-based methods relying on a volumetric representation of the scene are suitable for path planning, but have lower accuracy than methods using a surface-based representation. However, the latter do not scale well with the size of the scene and constrain the camera to a small number of poses. To obtain the advantages of both representations, we show that we can maximize surface metrics by Monte Carlo integration over a volumetric representation. In particular, we propose an approach, \method, that relies on two neural modules: The first module predicts occupancy probability in the entire volume of the scene. Given any new camera pose, the second module samples points in the scene based on their occupancy probability and leverages a self-attention mechanism to predict the visibility of the samples. Finally, we integrate the visibility to evaluate the gain in surface coverage for the new camera pose. NBV is selected as the pose that maximizes the gain in total surface coverage. Our method scales to large scenes and handles free camera motion: It takes as input an arbitrarily large point cloud gathered by a depth sensor as well as camera poses to predict NBV. We demonstrate our approach on a novel dataset made of large and complex 3D scenes.
\end{abstract}

\section{Introduction}

Next Best View computation~(NBV) is a long-standing problem in robotics~\cite{connolly-father-paper, yamauchi-97}, which consists in identifying the next most informative sensor position(s) for reconstructing a 3D object or scene efficiently and accurately. Typically, a position is evaluated on how much it can increase the total coverage of the scene surface. 
Few methods have relied on Deep Learning (DL) for the NBV problem, even though DL can provide useful geometric prior to obtain a better prediction of the surface coverage~\cite{zeng-icirs20-pcnbv, Mendoza-2019, vasquez-21-3DCNN}. 
Like most current methods, we consider NBV prediction from a depth sensor. Existing methods based on a depth sensor rely either on a volumetric or on a surface-based representation of the scene geometry. Volumetric mapping-based methods can compute collision efficiently, which is practical for path planning in real case scenarios~\cite{potthast-14-probabilistic-framework, vasquez-14-volumetric-nbv, vasquez-17-view-state-planning, kriegel-12-next-best-scan, bissmarck-15-efficient-nbv, daudelin-17-adaptable-probabilistic}. However, they typically rely on voxels or a global embedding~\cite{hepp-corr18-learntoscore, charrow-15-information, song-17-online-inspection, song-20-online-coverage, wang-20-efficient-autonomous, cieslewski-17-ICIRS} for the scene, which results in poor accuracy in reconstruction and poor performance in NBV selection for complex 3D objects. On the contrary, surface mapping-based methods that process directly a dense point cloud of the surface as gathered by the depth sensor are efficient for NBV prediction with high-detailed geometry. They are however limited to very specific cases, generally a single, small-scale, isolated object with the camera constrained to stay on a sphere centered on the object~\cite{lee-tra20-mechanical-parts,vasquez-17-view-state-planning, delmerico-18-acomparison, chen-smc05-visionsensorplanning,kriegel-11-surface-based, kriegel-12-next-best-scan,kriegel-12-next-best-scan,zeng-icirs20-pcnbv, Mendoza-2019, vasquez-21-3DCNN}. Thus, they cannot be applied to the exploration of 3D scenes.


\begin{figure}%
    \centering
    \begin{tabular}{cccc}
      \includegraphics[width=3.15cm, trim={0 0 3cm 0}, clip]{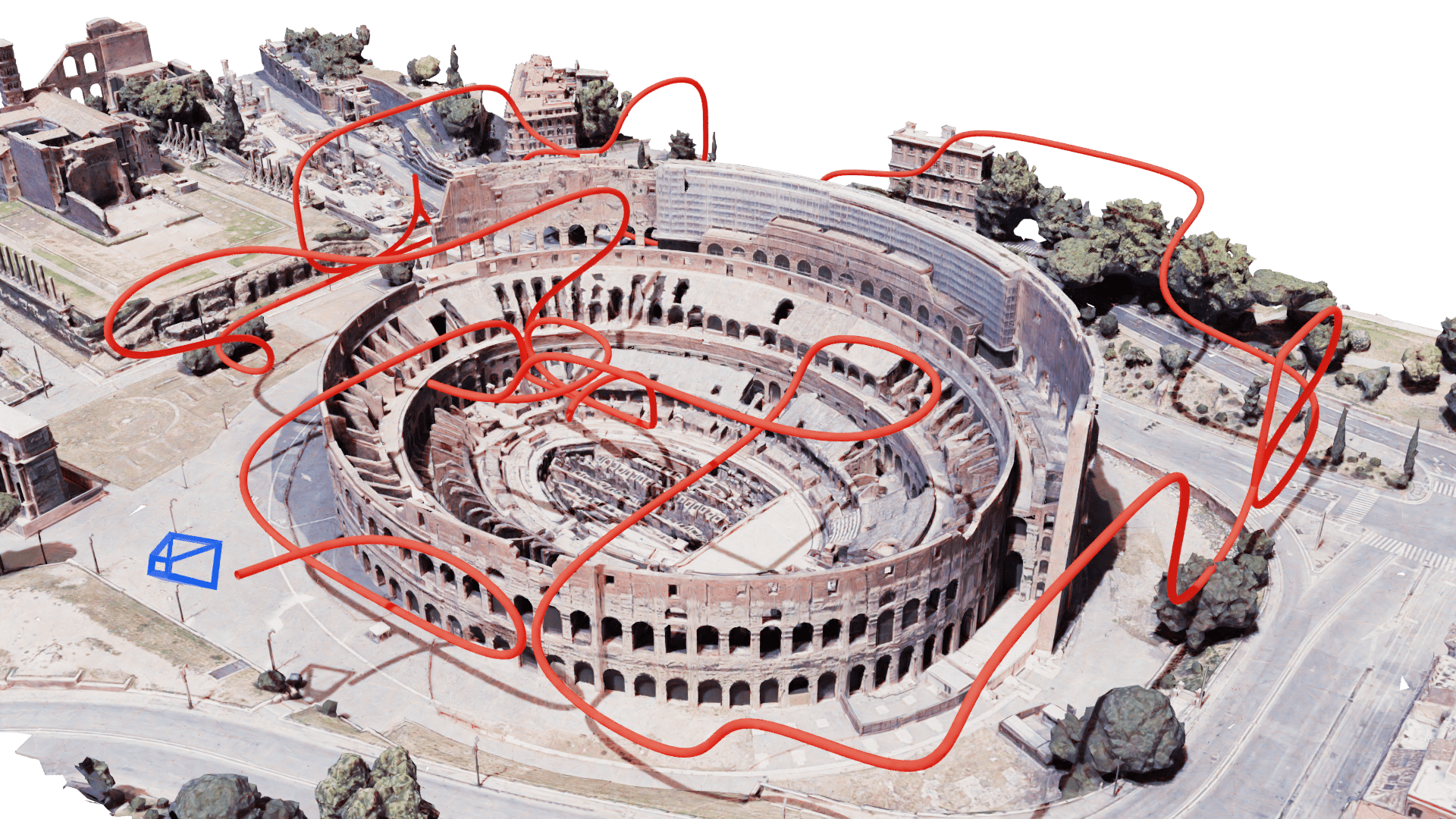} & 
      \includegraphics[width=3.15cm, trim={3cm 0 3cm 0}, clip]{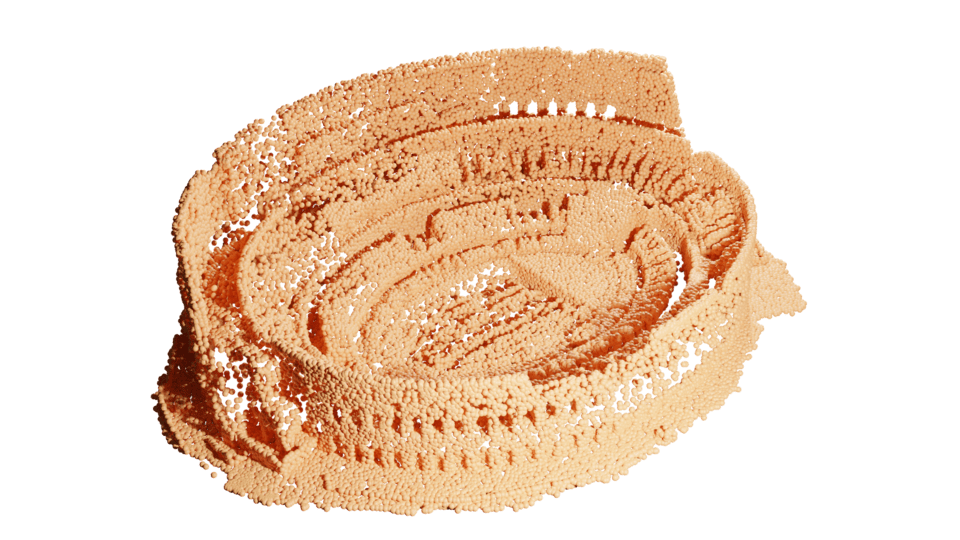} &
      \includegraphics[width=3.15cm, trim={0 0 0 0}, clip]{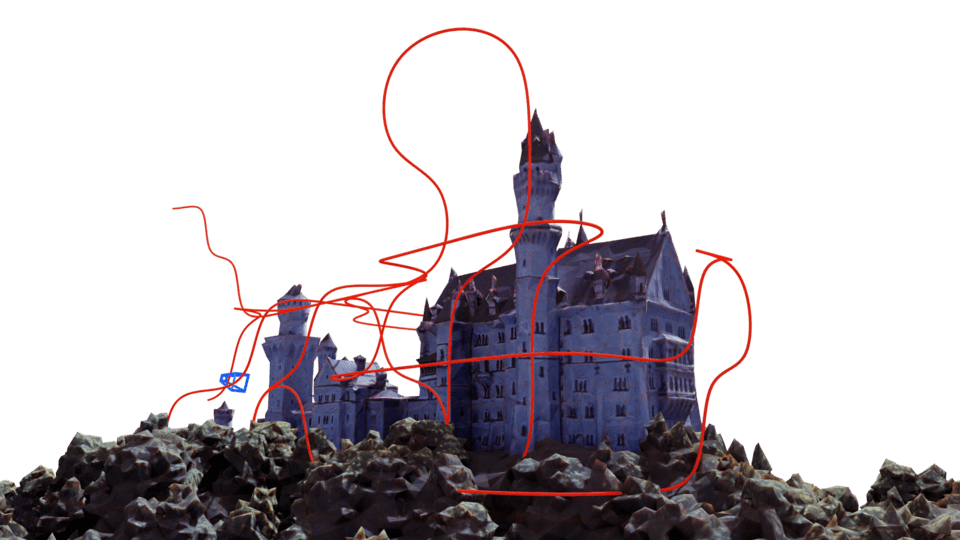} & 
      \includegraphics[width=2.8cm, trim={6cm 0 3cm 3cm}, clip]{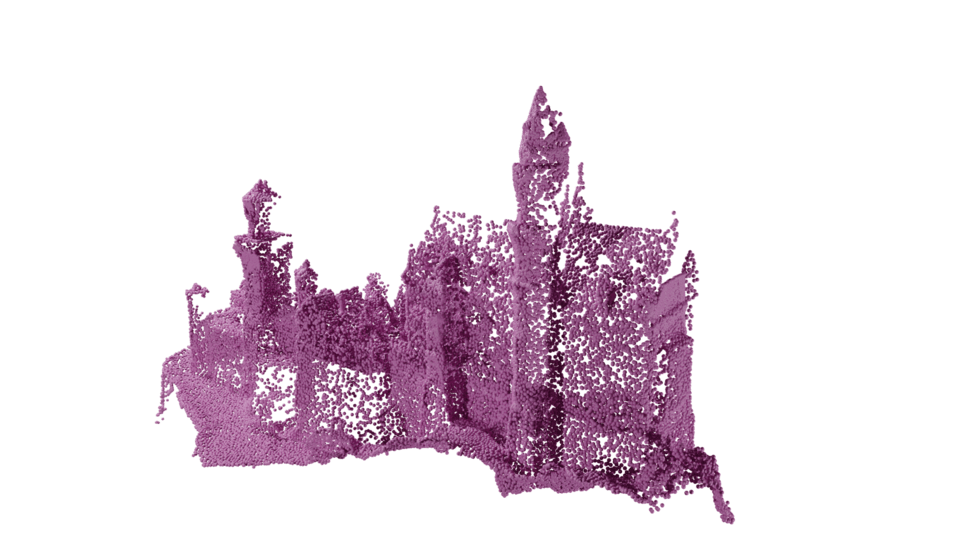}\\
    \end{tabular}
    \caption{{\bf Our Next Best View~(NBV) method \method  can handle unknown large-scale 3D scenes} to produce accurate 3D reconstruction inside a given 3D bounding box. Here, we call \method iteratively within a naive path planning algorithm to compute a complete camera trajectory avoiding collisions and obtain a complete 3D model. Despite being trained only on ShapeNet 3D models, it generalizes to complex scenes as shown above. (input 3D model courtesy of Brian Trepanier, under CC License. Downloaded from Sketchfab.)}%
    \label{fig:teaser}%
\end{figure}

As shown in Figure~\ref{fig:teaser}, we introduce a volumetric DL method that efficiently identifies NBVs for unknown large-scale 3D scenes in which the camera can move freely.  Instead of representing the scene with a single global embedding, we choose to use a grid of local point clouds, which scales much better to large and complex 3D scenes. We show how to learn to predict the visibility of unseen 3D points in all directions given an estimate of the 3D scene geometry. We can then integrate these visibilities in the direction of any camera by using a Monte Carlo integration approach, which allows us to optimize the camera pose to find the next most informative views.  We call our method \method, for Surface Coverage Optimization in uNknown Environments.

In this respect, we introduce a theoretical framework to translate the optimization of surface coverage gain, a surface metric on manifolds that represents the ability of a camera to increase the visible area of the surface, into an optimization problem on volumetric integrals. Such a formalism allows us to use a volumetric mapping of geometry, which is convenient not only to scale the model to exploration tasks and scene reconstruction, but also to make probabilistic predictions about geometry.

In particular, given a partial point cloud gathered by a depth sensor, our model learns to make predictions with virtually infinite resolution about the occupancy probability in the scene volume by learning a deep implicit function~\cite{lars-18-occupancy_network, xu-19-disn, mildenhall-20-nerf, yariv-2020-idr, yariv-2021-volsdf, oechsle-21-unisurf}. Such predictions scale to very large point clouds since they depend only on neighborhood geometry. Then, our model leverages a self-attention mechanism~\cite{ashish-2017-attention} to predict occlusions and compute informative functions mapped on a continuous sphere that represent visibility likelihood of points in all directions. The occupancy probability field is finally used as a basis to sample points and compute Monte Carlo integrals of visibility scores.

Since NBV learning-based methods are mostly limited to single, small-scale, centered object reconstruction in literature, we first compare the performance of our model to the state of the art on the ShapeNet dataset~\cite{shapenet2015}, following the protocol introduced in \cite{zeng-icirs20-pcnbv}. While our method was designed to handle more general frameworks such as 3D scene reconstruction and continuous cameras poses in the scene, it outperforms the state of the art for dense reconstruction of objects when the camera is constrained to stay on a sphere centered on the object. We then conduct experiments in large 3D environments using a simple planning algorithm that builds a camera trajectory online by iteratively selecting NBVs with \method.  Since, to the best of our knowledge, we propose the first supervised Deep Learning method for such free 6D motion of the camera, we created a dataset made of several large-scale scenes under the CC License for quantitative evaluation. We will make our code and this dataset available for allowing comparison of future methods with \method on our project webpage: \url{https://github.com/Anttwo/SCONE}.

\vincentrmk{What about the code?}

\antoinermk{Indeed we are the first, for NBV selection of structures with detailed geometry (with free 6D motion, it's true). Maybe we should insist on this point; there are some works on path planning with coarse geometry, but their focus is not on high quality of reconstruction. There are also some RL approaches too but it's really different and in this particular field they all look quite the same in my opinion ahah. I should check once again the SOTA to be sure}

\section{Approach}

\antoine{Let us consider a depth sensor exploring a 3D scene, at time step $t \geq 0$. Using its observations at discrete time steps $j$ with $0 \leq j \leq t$, the sensor has gathered a cloud of points distributed on the surface of the scene. We refer to this cloud as the \emph{partial surface point cloud}, as it describes the part of the surface seen --or \emph{covered}-- by the sensor in the scene. To solve the NBV problem, we want to identify a camera pose that maximizes the coverage of previously unseen surface.

To this end, our method takes as input the partial surface point cloud as well as the history of 6D camera poses at time steps $j \leq t$ (\ie all previous positions and orientations of the sensor). Our approach is built around two successive steps, each relying on a dedicated neural module as shown in figure~\ref{fig:entire_method}: First, we make a prediction about the geometry of the scene, to estimate where the uncovered points could be. Then, we predict the visibility gain of uncovered points from any new camera pose; The NBV is finally selected as the camera with the most new visible points in its field of view.

\begin{figure}
  \centering
  \includegraphics[width=13.5cm, trim={0 0cm 0 0}, clip]{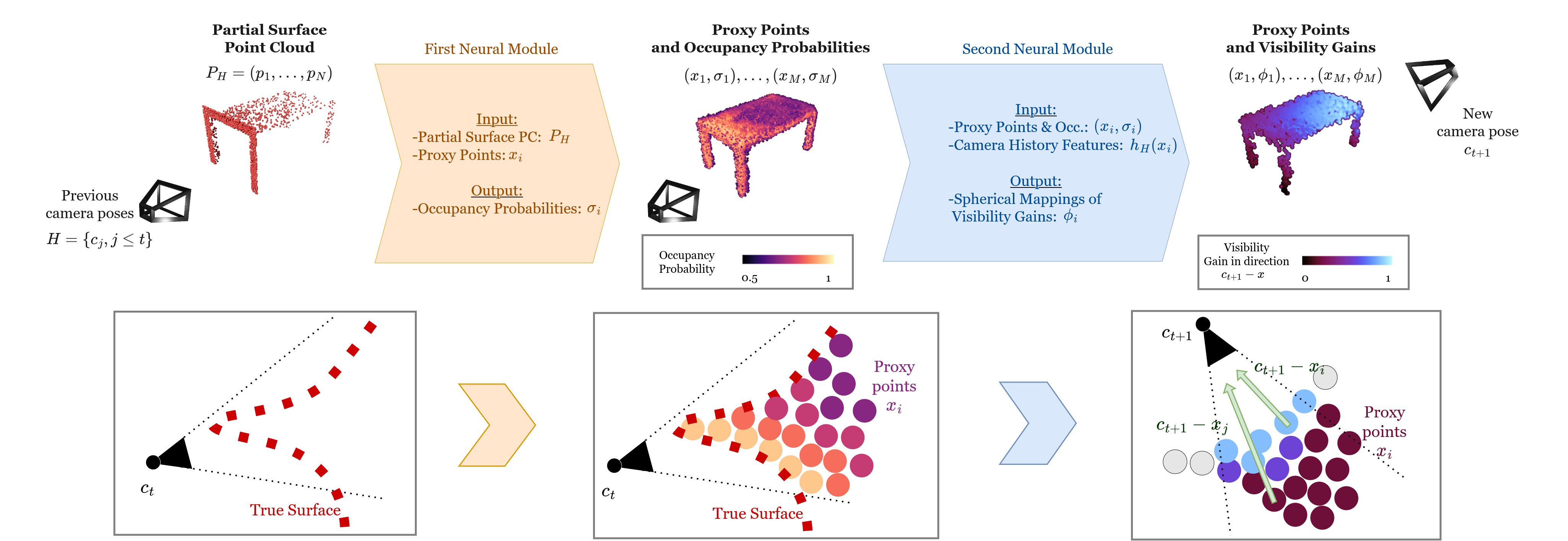}
  \caption{\label{fig:entire_method} {\bf The main steps of our method \method}. At time $t$, the depth sensor has visited camera poses $H=\{c_j, j\leq t\}$ and gathered a partial surface point cloud $P_H$ on the true surface, shown in red in the left image. Using its first neural module, our model predicts from the real surface points $P_H$ an occupancy probability distribution over proxy points $(x_1, ..., x_N)$ shown in the middle. The points $(x_1, ..., x_N)$ are sampled uniformly in the scene; we refer to them as \emph{proxy points} because we use them to encode the volume. For readability, the figure does not show the proxy points with an occupancy value under $0.5$. To compute the coverage gain of any next camera pose $c_{t+1}$, the model samples a subset of proxy points $x_i$ in the field of view of $c_{t+1}$ and uses its second module to predict the visibility gain in direction $c_{t+1}-x_i$ for each point $x_i$, as shown on the right. The proxy points are sampled with probabilities proportional to their occupancy value. Moreover, for each proxy point $x_i$, the second module encodes the relative positions of previous cameras with specific features $h_H(x_i)$. We finally integrate visibility gains over proxy points in the field of view of $c_{t+1}$ to approximate the volumetric coverage gain integral appearing in Equation~\ref{eqn:volumetric_approximation_of_coverage}.}
\end{figure}

Although we seek to maximize a surface metric such as surface coverage gain, our method relies on a volumetric representation of the object or scene to reconstruct. In this regard, we show that we can maximize a surface metric by integrating over a volumetric representation with virtually infinite resolution. As we argue below, such a representation is not only useful for collision-free robot navigation but is also much more efficient for optimizing surface coverage gain than the alternative of identifying the 3D points lying on the surface, which is difficult in an unknown and occluded environment. More exactly, we derive a volumetric integral that is asymptotically proportional to the surface coverage gain metric, which is enough for our maximization problem. 

In the following subsection, we first present this derivation considering volume represented as a perfect binary occupancy map. We then present the two neural modules of SCONE and explain how we use them to predict all terms involved in the volumetric integral, by leveraging neural models and self-attention mechanisms to predict occupancy and occlusions.}

\subsection{Maximizing Surface Coverage Gain on a Binary Occupancy Map}

Here, we consider a binary occupancy map $\occfield:\IR^3 \rightarrow \{0,1\}$ representing the volume of the target object or scene. We will relax our derivations to a probabilistic occupancy map when looking for the next best view in the next subsections. From the binary map $\occfield$, we can define the set $\chi$ of occupied points, \ie, the set of points $x$ verifying $\occfield(x) = 1$, its surface as the boundary $\partial\chi$, and the surface coverage $C(c)$ achieved by a camera pose $c=(c_\pos, c_\rot)\in\calC:=\IR^3\times SO(3)$ as the following surface integral:
\begin{equation}
    \label{eqn:surface_coverage_definition}
    C(c) = \frac{1}{|\partial\chi|_S} \int_{\partial\chi} \absvis_c(x)\,\text{d}x,
\end{equation}
where $|\partial \chi|_S := \int_{\partial\chi}\text{d}x$ is the area of surface $\partial\chi$. $\chi_c \subset \chi$ is the subset of occupied points contained in the field of view of camera $c$, and $\absvis_c(x)$ is the visibility of point $x$ from camera $c$, \ie, $\absvis_c(x) = \mathbb{1}_{\chi_c}(x) \cdot \mathbb{1}\left(\occfield\left(\{(1-\lambda) c_\pos + \lambda x \text{ such that } \lambda\in[0,1)\}\right)=\{0\}\right)$. 

Since we want to maximize the total coverage of the surface by all cameras during reconstruction, we are actually interested in maximizing the coverage of previously unobserved points rather than the absolute coverage.  Given a set of previous camera poses, which we call the \textit{camera history} $\camhist\subset \calC$, and a 3D point $x$, we introduce the \textit{knowledge indicator} $\gamma_H:\IR^3 \rightarrow \{0, 1\}$ such that $\gamma_H(x)=\max\{\absvis_c(x):c\in H\}$. We then define the \textit{coverage gain} $G_H(c)$ of camera pose $c$ as:
\begin{equation}
    \label{eqn:coverage_gain_definition}
    G_H(c) = \frac{1}{|\partial\chi|_S} \int_{\partial\chi} \vis^H_c(x)\,\text{d}x,
\end{equation}
where $\vis^H_c(x) = (1-\gamma_H(x)) \cdot \absvis_c(x)$ is the \textit{visibility gain} of $x$ in $\chi_c$, for camera history $H$. This function is equal to 1 iff $x$ is visible at pose $c$ but was not observed by any camera pose in $H$. Given a camera history $H$, our goal is to identify a pose $c$ that maximizes $G_H(c)$.

Given an occupancy map $\occfield$, we could evaluate the integral in Eq.~\eqref{eqn:coverage_gain_definition} by simply sampling points $p$ on surface $\partial\chi$. \antoine{However, in practice we will estimate the occupancy map iteratively in an unknown environment, and we will only have access to an occupancy probability distribution. Extracting surface points from such a probabilistic occupancy map gives results that can differ a lot from the true surface: Indeed, in 3D, a surface acts as a very concentrated set with zero-measure, and requires high confidence to give meaningful results. Instead of extracting surface points, we extend the properties of such points to a small spherical neighborhood of the surface. This will allow us to replace the maximization of a surface metric by the maximization of a volumetric integral, which is much easier to compute from our volumetric representation.}

More exactly, we assume there exists a quantity $\mu_0 > 0$ such that any volume point in the spherical neighborhood $T(\partial\chi, \mu_0) := \left\{ p \in \IR^3 \ | \ \exists x \in \partial\chi, \|x-p\|_2 < \mu_0 \right\}$ keeps the same visibility property as its neighboring surface points. With such a hypothesis, we give a thickness to the surface, which makes sense when working with discrete points sampled in space to approximate a volume.

To this end, we introduce a new visibility gain function $\newvis_c^H$ to adapt the definition of the former visibility gain $\vis_c^H$. For any $0<\mu<\mu_0$:
\begin{equation}
        \newvis_{c}^H(\mu; x) = 
        \begin{cases}
            1 & \text{if } \exists x_0 \in \partial\chi, \lambda <\mu \text{ such that } x = x_0 + \lambda N(x_0) \text{ and } \vis_c^H(x_0) = 1,\\
            0 & \text{otherwise} \> ,
        \end{cases}
\end{equation}
where $N$ is the inward normal vector field. With further regularity assumptions about the surface that are detailed in the appendix, such quantities are well defined. Assuming $\mu_0$ is small enough, the following explicit formula translates the surface approach into a volume integral for any camera pose $c \in \calC$ and $\mu < \mu_0$:
\begin{equation}
    \label{eqn:link_volume_surface}
    \int_{T(\partial\chi, \mu)}  g_c^H(\mu; x) \text{d}x
    = \int_{\partial\chi} \int_{-\mu}^\mu g_c^H(\mu; x_0 + \lambda N(x_0))\,\det(I-\lambda W_{x_0})\,\text{d}\lambda\,\text{d}x_0,
\end{equation}
%
with $W_{x_0}$ the Weingarten map at $x_0$, that is, the Hessian of the signed distance function on the boundary of $\chi$, which is continuous on the scene surface, assumed to be compact~\cite{gilbarg}.

By developing the determinant, we find that $\det(I - \lambda W_{x_0}) = 1 + \lambda b(\lambda, x_0)$ where $b$ is a bounded function on the compact space $[-\mu, \mu] \times \partial \chi$. Moreover, for all $x_0\in \partial\chi$, we have by definition $g_c^H(\mu;x_0 + \lambda N(x_0)) = g_c^H(\mu;x_0) = \vis_c^H(x_0)$ when $0 \leq \lambda < \mu$, and $g_c^H(\mu;x_0 + \lambda N(x_0))= 0$ when $-\mu < \lambda < 0$. It follows that, for every $0<\mu<\mu_0$:
\begin{equation}
\begin{split}
    \int_{T(\partial\chi, \mu)}  g_c^H(\mu;x) \text{d}x & = \int_{\partial\chi} \int_{0}^\mu g_c^H(\mu;x_0)(1 + \lambda b(\lambda, x_0)) \,\text{d}\lambda \,\text{d}x_0\\
    & = \mu \int_{\partial\chi} g_c^H(\mu;x_0) \,\text{d}x_0 + \int_{\partial\chi} \int_{0}^\mu \lambda g_c^H(\mu;x_0) b(\lambda, x_0)\,\text{d}\lambda \,\text{d}x_0\\
    & = \mu |\partial\chi|_S G_H(c) + \int_{\partial\chi} \int_{0}^\mu \lambda g_c^H(\mu;x_0) b(\lambda, x_0)\,\text{d}\lambda \,\text{d}x_0.
\end{split}
\end{equation}
The complete derivations are given in the appendix.

Function $g_c^H(\mu;\cdot)$ is naturally equal to 0 for every point outside $T(\partial\chi, \mu)$. Moreover, considering the regularity assumptions we made on the compact surface, if $\mu_0$ is chosen small enough then for all $x_0\in \partial \chi, \mu < \mu_0$, the point $x_0 + \mu N(x_0)$ is located inside the volume, such that $\int_{T(\partial\chi, \mu)}  g_c^H(\mu;x)\,\text{d}x = \int_{\chi}  g_c^H(\mu;x)\,\text{d}x$. Since $|g_c^H(\mu;\cdot)|\leq 1$ for all $c\in\mathcal{C}$ and $\mu>0$, we deduce the following theorem by bounding $|b|$ on $[-\mu, \mu] \times \partial \chi$:
\begin{theorem}
    \label{thm:volumetric_approximation_of_coverage}
    Under the previous regularity assumptions on the volume $\chi$ of the scene and its surface $\partial\chi$, there exist $\mu_0 > 0$ and $M>0$ such that for all $\mu < \mu_0$, and any camera $c\in \calC$:
    \begin{equation}
    \label{eqn:volumetric_approximation_of_coverage}
         \left| \frac{1}{|\chi|_V}\int_{\chi}  g_c^H(\mu;x) \text{d}x -  \mu \frac{|\partial\chi|_S}{|\chi|_V} G_H(c) \right| \leq M \mu^2 \> ,
    \end{equation}
    where $|\chi|_V$ is the volume of $\chi$.
\end{theorem}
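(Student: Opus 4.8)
The statement follows by assembling the identities already derived above with one crude estimate, so the plan is mostly bookkeeping once the regularity hypotheses are pinned down. First I would fix $\mu_0 > 0$ small enough that the normal exponential map $(x_0,\lambda)\mapsto x_0 + \lambda N(x_0)$ is a diffeomorphism from $\partial\chi \times (-\mu_0,\mu_0)$ onto the tubular neighborhood $T(\partial\chi,\mu_0)$ --- this is where the $C^2$ compactness assumption on $\partial\chi$ enters, through positivity of the reach --- and simultaneously small enough that for every $x_0 \in \partial\chi$ and $0 \le \lambda < \mu_0$ the point $x_0 + \lambda N(x_0)$ lies in the interior of $\chi$ (the inward half-tube sits inside the solid). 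For such $\mu_0$, the change-of-variables identity Eq.~\eqref{eqn:link_volume_surface} is valid for every $\mu<\mu_0$, and the expansion $\det(I-\lambda W_{x_0}) = 1 + \lambda b(\lambda,x_0)$ holds with $b$ continuous, hence bounded, on the compact set $[-\mu_0,\mu_0]\times\partial\chi$; write $B := \sup|b|$.

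Next I would record the opacity reduction: since $g_c^H(\mu;\cdot)$ vanishes outside $T(\partial\chi,\mu)$, and by the choice of $\mu_0$ the inward part of the tube lies in $\chi$ while the outward part contributes nothing (because $g_c^H(\mu;x_0+\lambda N(x_0)) = 0$ for $-\mu<\lambda<0$), we get $\int_{T(\partial\chi,\mu)} g_c^H(\mu;x)\,\mathrm{d}x = \int_\chi g_c^H(\mu;x)\,\mathrm{d}x$. Combining this with the three-line computation already displayed gives the exact identity
\begin{equation*}
\int_\chi g_c^H(\mu;x)\,\mathrm{d}x - \mu\,|\partial\chi|_S\,G_H(c) = \int_{\partial\chi}\!\int_0^\mu \lambda\, g_c^H(\mu;x_0)\, b(\lambda,x_0)\,\mathrm{d}\lambda\,\mathrm{d}x_0,
\end{equation*}
valid for all $c\in\calC$ and all $0<\mu<\mu_0$.

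The final step is to bound the right-hand side. Using $|g_c^H(\mu;x_0)|\le 1$ uniformly in $c$ and $|b|\le B$, the inner integral over $\lambda$ is at most $B\mu^2/2$ in absolute value, so the double integral is bounded by $|\partial\chi|_S\,B\,\mu^2/2$. Dividing by the positive, finite volume $|\chi|_V$ yields Eq.~\eqref{eqn:volumetric_approximation_of_coverage} with $M := |\partial\chi|_S\,B / (2|\chi|_V)$. The point that matters for the intended application --- optimizing over $c$ --- is that $M$ depends only on the scene geometry, not on the camera: this is automatic here because $b$ is independent of $c$ and the bound $|g_c^H|\le 1$ is uniform in $c$. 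The only genuine obstacle is the first step: verifying the tubular-neighborhood change of variables and the containment of the inward half-tube, i.e. making precise which regularity assumptions guarantee that such a $\mu_0$ exists and that $g_c^H$ is well defined and measurable. This is exactly what is deferred to the supplementary material; everything downstream of it is elementary.
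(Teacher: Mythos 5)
Your proposal is correct and follows essentially the same route as the paper's proof: reduce to the tubular-neighborhood change of variables, expand $\det(I-\lambda W_{x_0})=1+\lambda b(\lambda,x_0)$ with $b$ bounded by compactness, use opacity and the inward-containment of the half-tube to replace $T(\partial\chi,\mu)$ by $\chi$, and bound the remainder by $|\partial\chi|_S B\mu^2/2$ before dividing by $|\chi|_V$. The only step you defer --- that $x_0+\lambda N(x_0)$ lies inside $\chi$ for small $\lambda>0$ --- is exactly the lemma the paper proves explicitly via a second-order Taylor expansion of the signed distance function, and your appeal to the positive reach of a compact $C^2$ boundary is an equivalent justification.
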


This theorem states that, asymptotically for small values of $\mu$, the volume integral $\int_\chi  g_c^H(\mu;x)\,\text{d}x$ gets proportional to the surface coverage gain values $G_H(c)$ that we want to maximize. This result is convenient since a volume integral can be easily approximated with Monte-Carlo integration on the volume and a uniform dense sampling based on the occupancy function $\occfield$. Consequently, the more points we sample in the volume, the smaller $\mu$ we can choose, and the closer maximizing the volume integral of spherical neighborhood visibility gain gets to maximizing the surface coverage gain.

\subsection{Architecture}

\antoine{To approximate the volumetric integral in Equation~\ref{eqn:volumetric_approximation_of_coverage} for any camera pose $c$, we need to compute $\chi_c$ as well as function $g_c^H$. In this regard, we need to compute both the occupancy map and the visibility gains of points for any camera pose. Since the environment is not perfectly known, we predict each one of these functions with a dedicated neural module. The first module takes as input the partial point cloud gathered by the depth sensor to predict the occupancy probability distribution. The second module takes as input a camera pose, a feature representing camera history as well as a predicted sequence of occupied points located in the camera field of view to predict visibility gains.} 

\paragraph{Predicting the occupancy probability field $\probfield$. } The occupancy function $\occfield$ is not known perfectly in practice. \antoine{To represent occupancy, most volumetric NBV methods rely on memory-heavy representations (like an occupancy 3D-grid or a volumetric voxelization), that are generally less efficient for encoding fine details and optimizing dense reconstructions, and will necessarily downgrade the resolution compared to a point cloud directly sampled on the surface. To address this issue while still working with a volumetric representation of the scene, we use a deep implicit function to encode the 3D mapping of occupancy efficiently. Such a function has a virtually infinite resolution, and prevents us from saving a large 3D grid in memory.} We thus approximate $\occfield$ with the first module of our model, which consists of a neural network $\probfield : \calP(\IR^3) \times \IR^3 \rightarrow [0,1]$ that takes as inputs a partial surface point cloud $P_H \subset \IR^3$ and query points $x \in \IR^3$, and outputs the occupancy probability $\probfield(P_H; x)$ of $x$. $P_H$ is obtained by merging together all depth maps previously captured from cameras in $H$.

\begin{figure}
  \centering
  \includegraphics[width=8cm, trim={0 0cm 0 0}, clip]{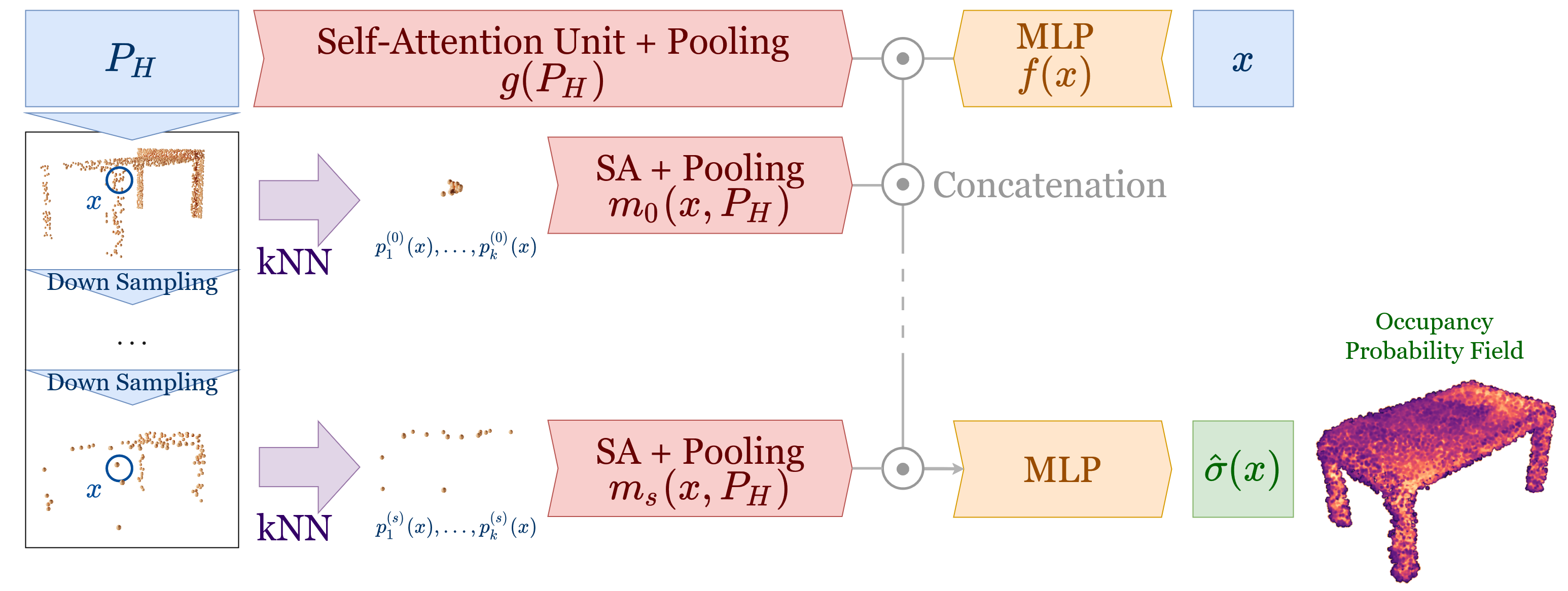}
  \caption{\label{fig:occupancy_probability_prediction} {\bf Architecture of the first module of \method}, which predicts the occupancy probability field $\probfield$. This module predicts the occupancy probability of a point $x$ from several inputs: The point $x$ after transformation by an MLP $f(x)$; A coarse global encoding $g(P_H)$ of the point cloud obtained by applying self-attention units on the sequence of points followed by a pooling operation; Multi-scale neighborhood features $m_i(x, P_H), i=0,...,N$ computed by down-sampling multiple times the point cloud, and encoding the $k$ nearest neighbors of $x$ with self-attention units after each sampling.}
\end{figure}

As shown in Figure~\ref{fig:occupancy_probability_prediction}, rather than using a direct encoding of the global shape of $P_H$ as input to $\probfield$, we take inspiration from \cite{philipp-20-points2surf} to achieve scalability and encode $P_H$ using features computed from the points' neighborhoods. The difference with \cite{philipp-20-points2surf} is that we rely on a multiscale approach: For a given query 3D point $x$, these features are computed from the $k$ nearest neighbors of $x$ computed at different scales. For each scale $s$, we downsample point cloud $P_H$ around $x$ into a sparser point cloud $P_H^{(s)}$ before recomputing the nearest neighbors $p^{(s)}_i(x), i=1,...,k$ of $x$: In this way, the size of the neighborhood increases with scale.

Next, for each value of $s$, we use small attention units \cite{ashish-2017-attention, meng-20-pct} on the sequence of centered neighborhood points ($p^{(s)}_1(x) - x, ..., p^{(s)}_k(x) - x)$ and apply pooling operations to encode each sequence of $k$ neighbors into a single feature that describes the local geometry for the corresponding scale. We finally concatenate these different scale features with another uncentered global feature as well as the query point $x$, and feed them to an MLP to predict the occupancy probability. The last global feature aims to provide really coarse information about the geometry and the location of $x$ in the scene.

This model scales well to large scenes: Adding points from distant views to the current partial point cloud does not change the local state of the point cloud. 
 To avoid computing neighborhoods on the entire point cloud when reconstructing large scenes, we partition the space into cells in which we store the points in $P_H$. Given a query point $x$, we only use the neighboring cells to compute $p^{(s)}_i(x)$. 

\paragraph{Predicting the visibility gain $\newvis^H_{c}$. } 
To maximize surface coverage gain, we need to compute the volumetric integral of visibility gain functions $\newvis^H_{c}$. We do this again by Monte Carlo sampling, however, in unknown environments we cannot compute explicitly occlusions to derive visibility gain functions $\newvis^H_{c}$ since the geometry, represented as a point cloud, is partially unknown and sparser than a true surface. We thus train the second module of our model to predict visibility gain functions by leveraging a self-attention mechanism that helps to estimate occlusion effects in the point cloud $P_H$.

In particular, for any camera pose $c\in \calC$ and 3D point $x\in \chi_c$, the second module derives its prediction of visibility gains from three core features: (i)~The predicted probability $\probfield(P; x)$ of $x$ to be occupied, (ii)~the occlusions on $x$ by the subvolume $\chi_c$ and (iii)~the camera history $H$. To feed all this information to our model in an efficient way, we follow the pipeline presented in Figure~\ref{fig:visibility_gain_computation}. The model starts by using the predicted occupancy probability function $\probfield$ to sample 3D points in the volume $\chi$. These samples will be used for Monte Carlo integration. We refer to these points as \emph{proxy points} as we use them to encode the  volume in the camera field of view, \ie, in a pyramidal frustum view. We write $\proxy$ as the discrete set of sampled proxy points, and $\proxy_c$ as the set of proxy points located in the field of view of the camera $c$.

We first encode these proxy points individually by applying a small MLP on their 3D coordinates and their occupancy probability value concatenated together. Then, our model processes the sequence of these encodings with a self-attention unit to handle occlusion effects of subvolume $\chi_c$ on every individual point. Note there is no pooling operation on the output of this unit: The network predicts per-point features and does not aggregate predictions, since we do it ourselves with Monte Carlo integration. Next, for each proxy point $x\in \proxy$ , we compute an additional feature $h_H(x)$ that encodes the history of camera positions $H$ with respect to this point as a spherical mapping: It consists in the projection on a sphere centered on $x$ of all camera positions for which $x$ was in the field of view. These features are concatenated to the outputs of the self-attention unit.

Our model finally uses an MLP on these features to predict the entire visibility gain functions of every point $x$ as a vector of coordinates in the orthonormal basis of spherical harmonics. With such a formalism, the model is able to compute visibility gains for points inside a subvolume in all directions with a single forward pass. In this regard, we avoid unnecessary computation and are able to process a large number of cameras in the same time when they share the same proxy points in their field of view (\eg, reconstruction of a single object centered in the scene, where $\proxy_c=\proxy$ for all $c$, or when several cameras observe the same part of the 3D scene, \ie, $\proxy_c=\proxy'\subset \proxy$ for several $c$).

Formally, if we denote by $Y_l^m : S^2 \rightarrow \IR$ the real spherical harmonic of rank $(l, m)$ and $\phi_l^m(\proxy_c;x, h_H(x))$ the predicted coordinate of rank $(l,m)$ for proxy point $x\in \proxy_c$ with attention to subset $\proxy_c$ and camera history feature $h_H(x)$, the visibility gain of point $x$ in direction $d\in S^2$ is defined as
\begin{equation}
    \label{eqn:predicted_visibility_gain}
    \sum_{l, m} \phi_l^m(\proxy_c; x, h_H(x)) \cdot Y_l^m \left(d\right)
\end{equation}
so that the coverage gain $G_H(c)$ of a camera pose $c\in \calC$ is proportional to
\begin{equation}
    \label{eqn:predicted_coverage_gain}
    I_H(c) := \frac{1}{|\proxy|} \sum_{x \in \proxy} \left[
    \mathbb{1_{\proxy_c}}(x)
    \sum_{l, m} \phi_l^m(\proxy_c; x, h_H(x)) \cdot Y_l^m \left(\frac{x-c_{pos}}{\|x-c_{pos}\|_2}\right) 
    \right].
\end{equation}
The next best view among several camera positions is finally defined as the camera pose $c^*$ with the highest value for $I_H(c^*)$. \antoine{Equation~\ref{eqn:predicted_coverage_gain} is a Monte Carlo approximation of the volumetric integral in Equation~\ref{eqn:volumetric_approximation_of_coverage}}, where the occupancy map and the visibility gains are predicted with neural networks.
We choose to use a Monte-Carlo integral rather than a neural aggregator because this approach is simple, fast, makes training more stable, has good performance, better interpretability, and can handle sequences of arbitrary size. In particular, it implicitly encourages our model to compute meaningful visibility gains for each point since there is no asymmetry between the points.

We also use spherical harmonics to encode the camera history encoding $h_H(x)$ of each point $x$, which makes it a homogeneous input to the predicted output. Consequently, this input comes at the end of the architecture, and aims to adapt the visibility gain according to previous camera positions. This convenient representation, inspired by \cite{yu-21-plenoxels}, allows us to handle free camera motion; On the contrary, several models in the literature encode the camera directions on a discrete sphere \cite{zeng-icirs20-pcnbv, Mendoza-2019, vasquez-21-3DCNN}.

\begin{figure}
  \centering
  \includegraphics[width=13.5cm]{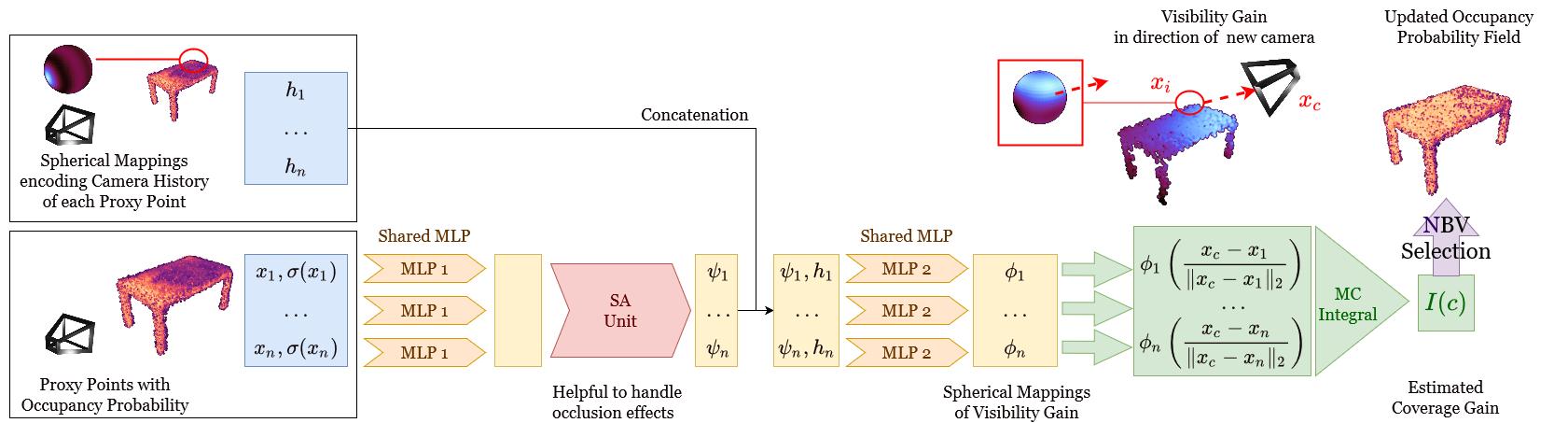}
  \caption{\label{fig:visibility_gain_computation}  {\bf Architecture of the second module of \method}, which predicts a visibility gain for each proxy point. To make this prediction, the model encodes the proxy points $x$ concatenated with their occupancy probability $\probfield(x)$. We use an attention mechanism to take into account occlusion effects in the volume between the proxy points and their consequences on the visibility gains.}
\end{figure}

\paragraph{Training.} We train the occupancy probability module alone with a Mean Squared Error loss with the ground truth occupancy map. We do not compute ground-truth visibility gains to train the second module since it would make computation more difficult and require further assumptions: We supervise directly on $I_H(c)$ by comparing it to the ground-truth surface coverage gain for multiple cameras, with softmax normalization and Kullback-Leibler divergence loss. Extensive details about the training of our modules and the choices we made are given in the appendix.

\section{Experiments}

As discussed in the introduction, deep learning-based NBV models for dense reconstruction are currently limited to single, small-scale, centered object reconstruction. To compare \method to these previous methods in this context, we first constrain the camera pose to lie on a sphere centered on an object. We then introduce our dataset made of 13 large-scale 3D models that we created to evaluate \method on free camera motions (3D models courtesy of Brian Trepanier, Andrea Spognetta, and 3D Interiors, under CC License; all models were downloaded from the website Sketchfab).

\begin{table}
  \caption{\label{tab:auc_shapenet}  {\bf AUCs of surface coverage for several NBV selection methods for dense object reconstruction}, as computed on the ShapeNet test dataset following the protocol of \cite{zeng-icirs20-pcnbv}\vincent{, and after averaging over multiple seeds in the case of our method}\vincentrmk{Correct?}. For this experiment, we constrain the camera to stay on a sphere centered on the objects in order to compare with previous methods. Even if our model is designed to scale to entire scene reconstructions with free camera motion, it is still able to beat other methods trained for the specific case of dense object reconstruction with constrained camera motion.}
  \centering
  \scalebox{0.93}{ 
  {\scriptsize
  \begin{tabular}{@{}lccccccccc@{}}
    \toprule
    \multicolumn{1}{c}{} & \multicolumn{8}{c}{Categories seen during training} \\
    \cmidrule(r){2-9}
     Method & Airplane & Cabinet & Car & Chair & Lamp & Sofa & Table & Vessel & Mean \\
    \midrule
    Random & 0.745 & 0.545 & 0.542 & 0.724 & 0.770 & 0.589 & 0.710 & 0.674 & 0.662 \\
    Proximity Count~\cite{delmerico-18-acomparison} & 0.800 & 0.596 & 0.591 & 0.772 & 0.803 & 0.629 & 0.753 & 0.706 & 0.706\\
    Area Factor~\cite{vasquez-14-volumetric-nbv} & 0.797 & 0.585 & 0.587 & 0.751 & 0.801 & 0.627 & 0.725 & 0.714 & 0.698\\
    NBV-Net~\cite{Mendoza-2019} & 0.778 & 0.576 & 0.596 & 0.743 & 0.791 & 0.599 & 0.693 & 0.667 & 0.680\\
    PC-NBV~\cite{zeng-icirs20-pcnbv} & 0.799 & 0.612 & \textbf{0.612} & \textbf{0.782} & 0.800 & 0.640 & 0.760 & 0.719 & 0.716 \\
    \method (Ours) & \textbf{0.827} & \textbf{0.625} & 0.591 & \textbf{0.782} & \textbf{0.819} & \textbf{0.662} & \textbf{0.792} & \textbf{0.734} & \textbf{0.729} \\
  \end{tabular}}
  }
 \scalebox{0.88}{ 
 {\scriptsize
  \begin{tabular}{@{}lccccccccc@{}}
    \toprule
    \multicolumn{1}{c}{} & \multicolumn{8}{c}{Categories not seen during training} \\
    \cmidrule(r){2-9}
     Method & Bus & Bed & Bookshelf & Bench & Guitar & Motorbike & Skateboard & Pistol & Mean \\
    \midrule
    Random & 0.609 & 0.619 & 0.695 & 0.795 & 0.795 & 0.672 & 0.768 & 0.614 & 0.694 \\
    Proximity Count & 0.646 & 0.645 & \textbf{0.749} & 0.829 & 0.854 & 0.705 & 0.828 & 0.660 & 0.740\\
    Area Factor & 0.629 & 0.631 & 0.742 & 0.827 & 0.852 & 0.718 & 0.799 & 0.660 & 0.732\\
    NBV-Net & 0.654 & 0.628 & 0.729 & 0.824 & 0.834 & 0.710 & 0.825 & 0.645 & 0.731\\
    PC-NBV & 0.667 & 0.662 & 0.740 & \textbf{0.845} & 0.849 & \textbf{0.728} & 0.840 & 0.672 & 0.750 \\
    \method (Ours) & \textbf{0.694} & \textbf{0.689} & 0.746 & 0.832 & \textbf{0.860} & \textbf{0.728} & \textbf{0.845} & \textbf{0.717} & \textbf{0.764} \\
    \bottomrule
  \end{tabular}}
  }
\end{table}

\subsection{Next Best View for Single Object Reconstruction}
\label{sec:single_object}

We first compare the performance of our model to the state of the art on a subset of the ShapeNet dataset~\cite{shapenet2015} introduced in \cite{zeng-icirs20-pcnbv} and following the protocol of \cite{zeng-icirs20-pcnbv}: We sample 4,000 training meshes from 8 specific categories of objects, 400 validation meshes and 400 test meshes from the same categories, and 400 additional test meshes from 8 categories unseen during training.

The evaluation on the test datasets consists of  10-view reconstructions of single objects. Given a mesh in the dataset, camera positions are discretized on a sphere. We start the reconstruction process by selecting a random camera pose, then we iterate NBV selection 9 times in order to maximize coverage with a sequence of 10 views in total. The evaluation metric is the area under the curve~(AUC) of surface coverage throughout the reconstruction. This criterion not only evaluates the quality of final surface coverage, but also the convergence speed toward a satisfying coverage. Results are presented in Table~\ref{tab:auc_shapenet}. \vincentrmk{Still supp mat? :}Further details about the evaluation, the estimation of ground truth surface coverage gains, and the metric computation are available in the appendix.

\subsection{Active View Planning in a 3D Scene}

\begin{figure}%
    \centering
    \subfloat[\centering Dunnottar Castle]{{\includegraphics[width=3.4cm, trim={0 0 0 0}, clip]{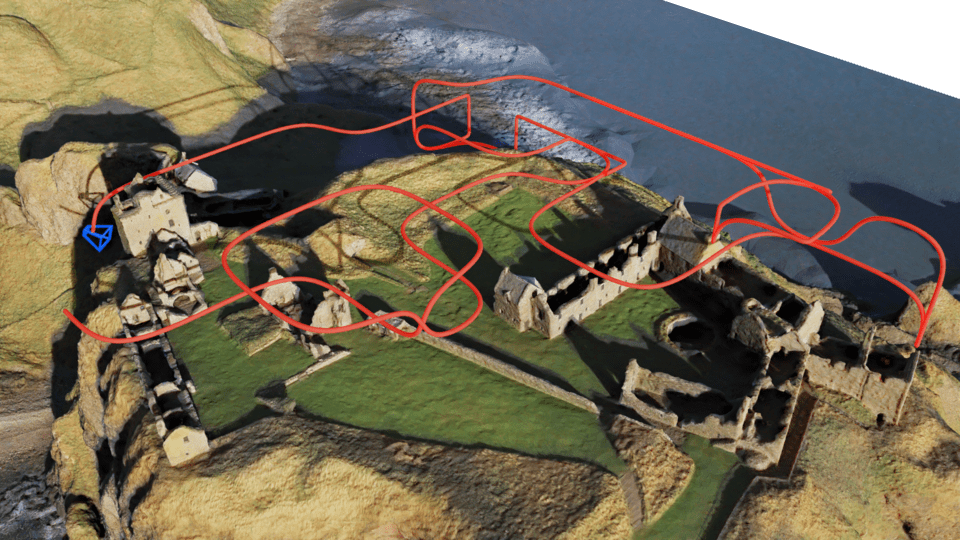} }}%
    \subfloat[\centering Pantheon]{{\includegraphics[width=3.4cm, trim={0 0 0 0}, clip]{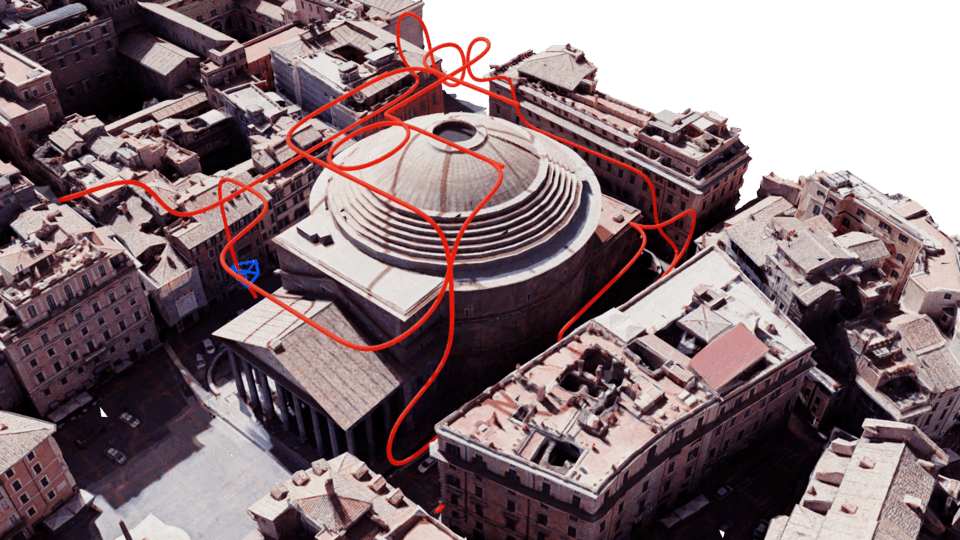} }}%
    \subfloat[\centering Statue of Liberty]{{\includegraphics[width=3.4cm, trim={0 0 0 0}, clip]{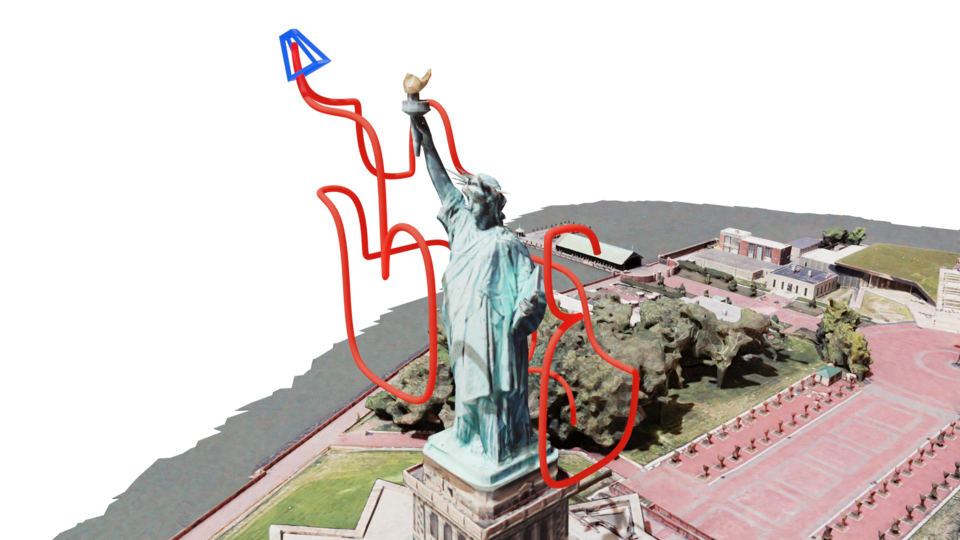} }}%
    \subfloat[\centering Leaning Tower, Pisa]{{\includegraphics[width=3.4cm, trim={0 0 0 0}, clip]{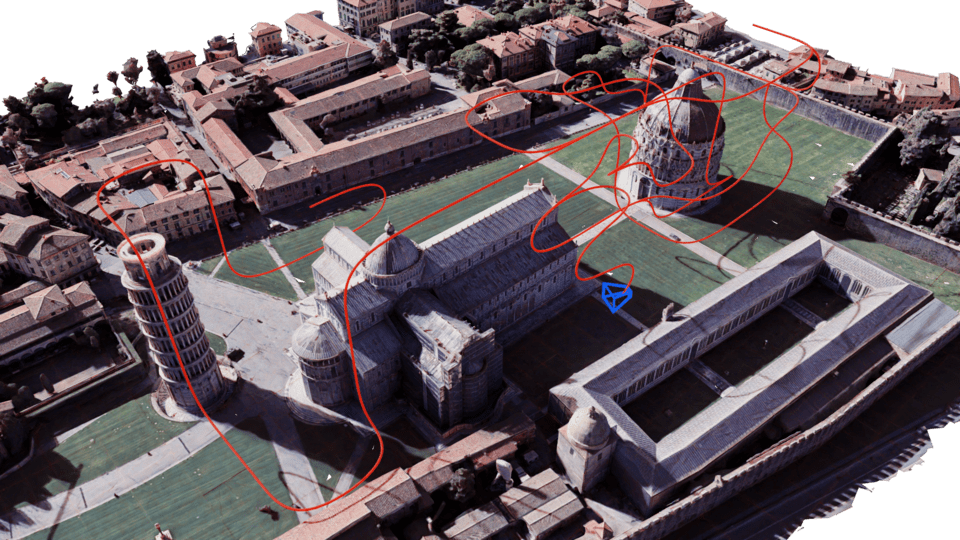} }}\\%
    
    \subfloat[\centering Fushimi Castle]{{\includegraphics[width=3.4cm, trim={0 0 0 0}, clip]{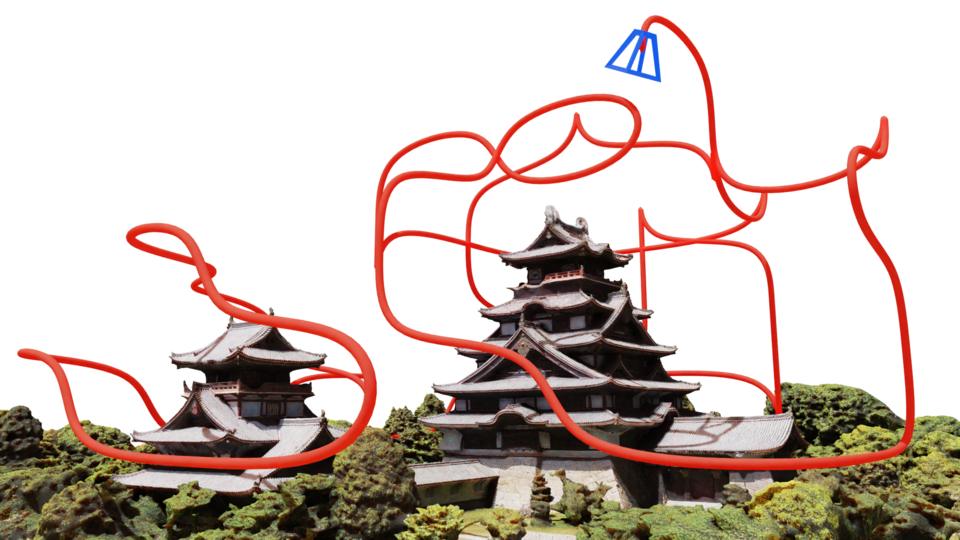} }}%
    \subfloat[\centering Alhambra Palace]{{\includegraphics[width=3.4cm, trim={0 0 0 0}, clip]{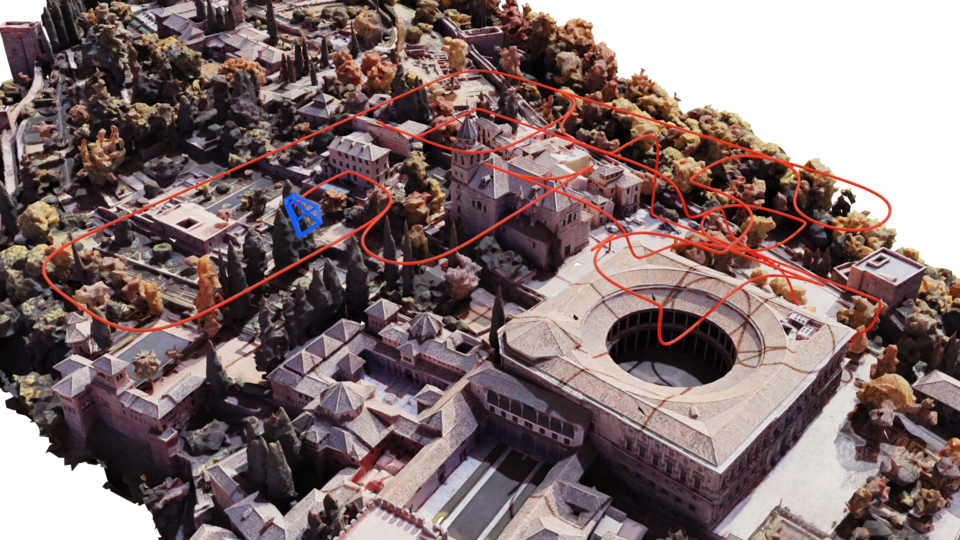} }}%
    \subfloat[\centering Eiffel Tower]{{\includegraphics[width=3.4cm, trim={0 0 0 0}, clip]{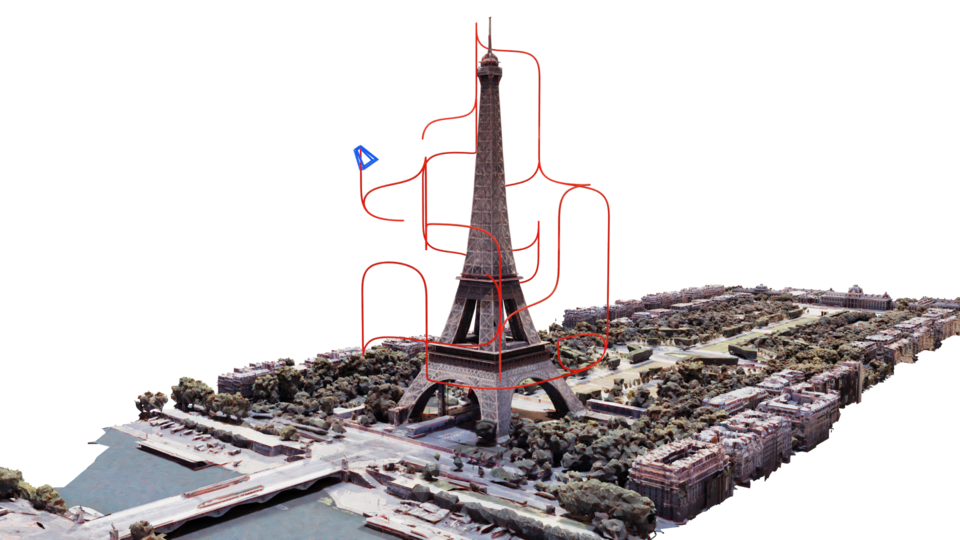} }}%
    \subfloat[\centering Natural History Museum]{{\includegraphics[width=3.4cm, trim={0 0 0 0}, clip]{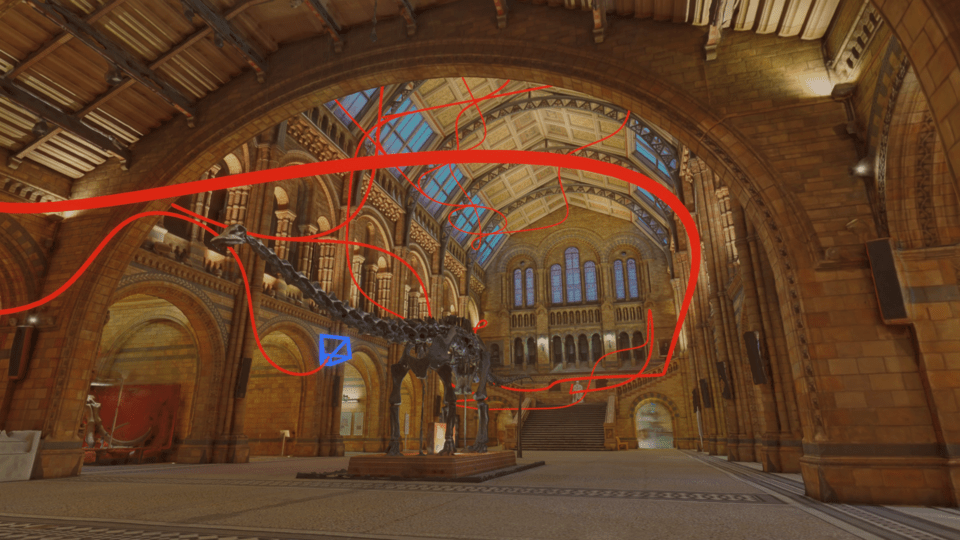} }}\\%
    
    \caption{\label{fig:trajectories} {\bf Application of our NBV approach to the reconstruction of large 3D structures}. The path, constructed iteratively in real time with our method, is shown in red (3D models courtesy of Brian Trepanier, Andrea Spognetta, and 3D Interiors, under CC License; all models were downloaded from the website Sketchfab).}
\end{figure}

\begin{figure}%
    \centering
    \subfloat[\centering Dunnottar Castle]{{\includegraphics[width=3.4cm, trim={0 0 0 0}, clip]{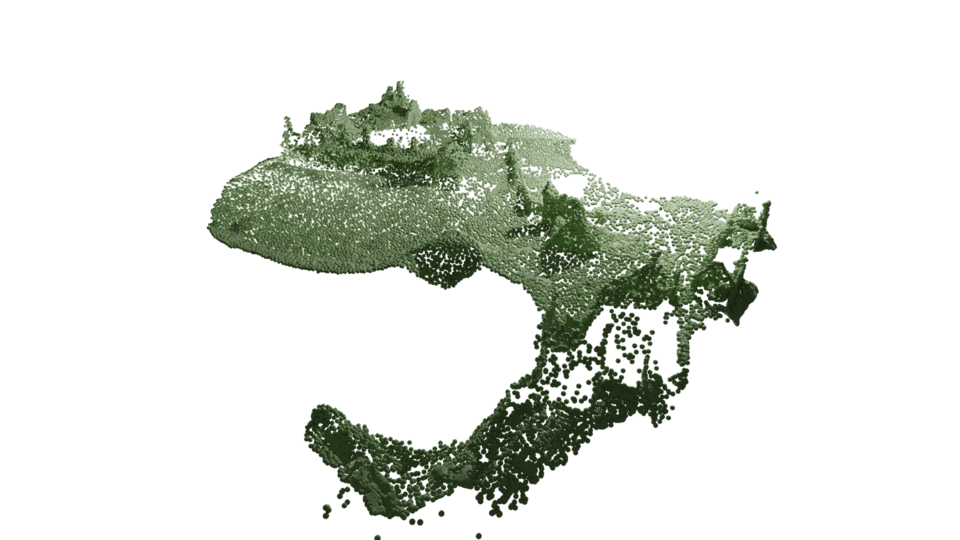} }}%
    \subfloat[\centering Pantheon]{{\includegraphics[width=3.4cm, trim={0 0 0 0}, clip]{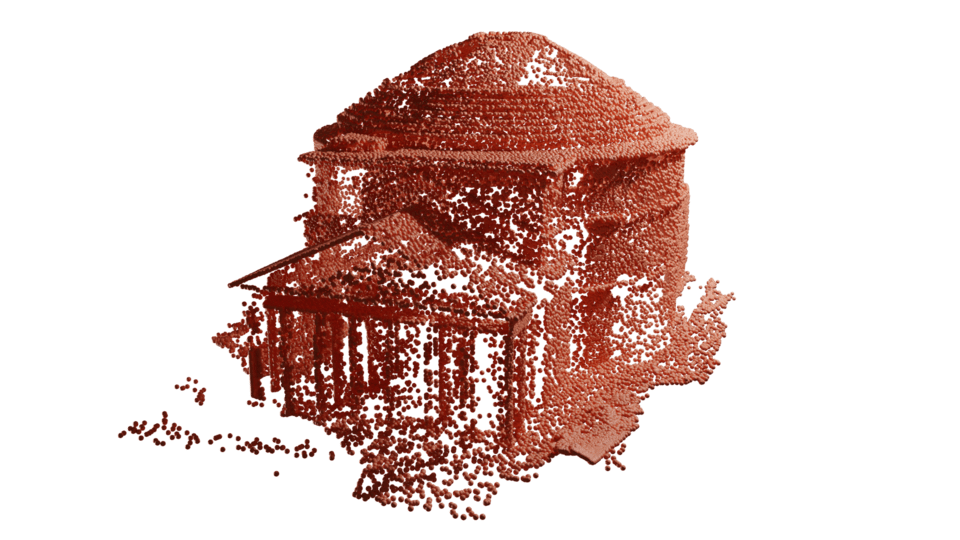} }}%
    \subfloat[\centering Statue of Liberty]{{\includegraphics[width=3.4cm, trim={0 0 0 0}, clip]{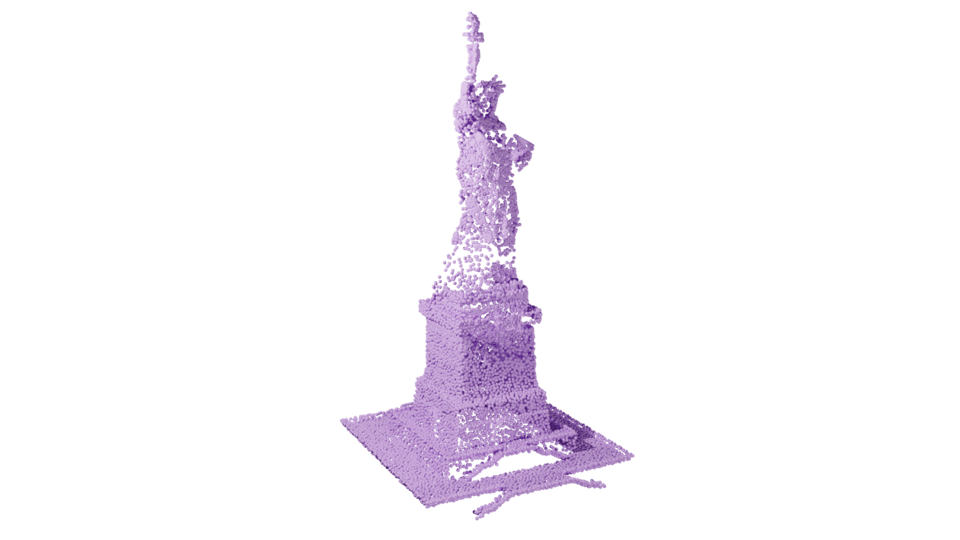} }}%
    \subfloat[\centering Leaning Tower, Pisa]{{\includegraphics[width=3.4cm, trim={0 0 0 0}, clip]{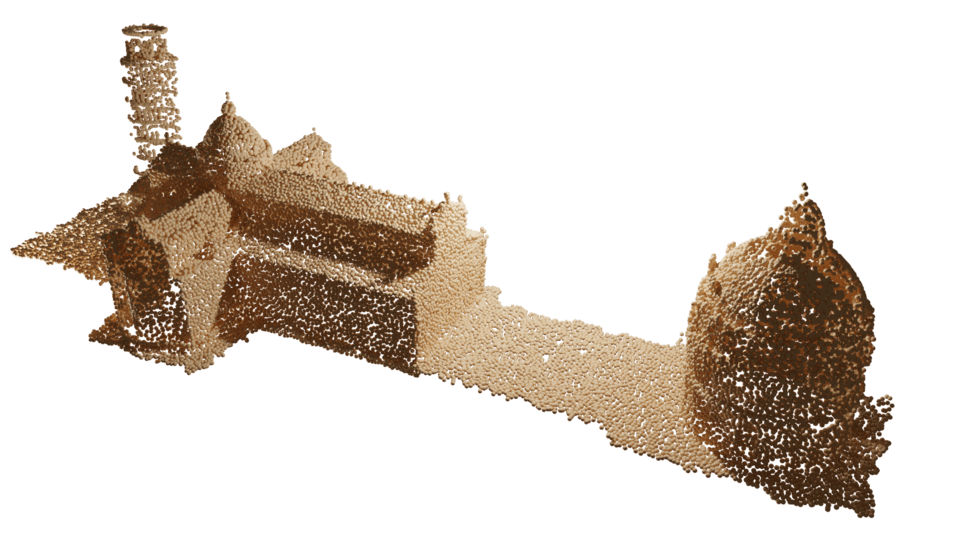} }}\\%
    
    \subfloat[\centering Fushimi Castle]{{\includegraphics[width=3.4cm, trim={0 0 0 0}, clip]{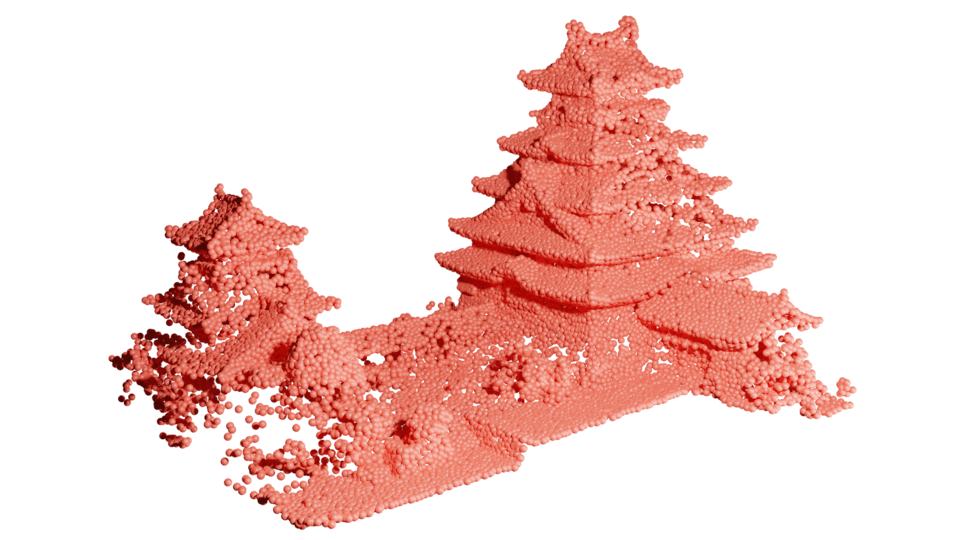} }}%
    \subfloat[\centering Alhambra Palace]{{\includegraphics[width=3.4cm, trim={0 0 0 0}, clip]{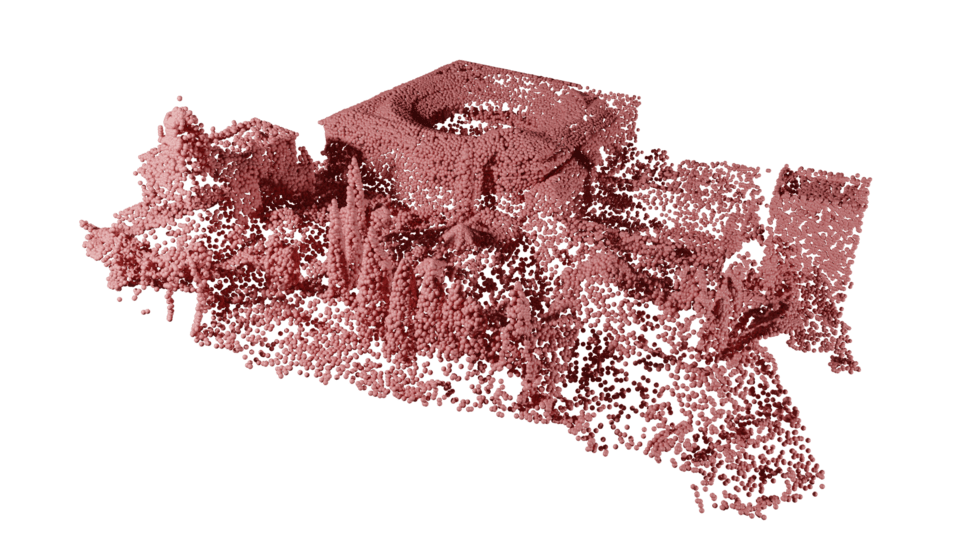} }}%
    \subfloat[\centering Eiffel Tower]{{\includegraphics[width=3.4cm, trim={0 0 0 0}, clip]{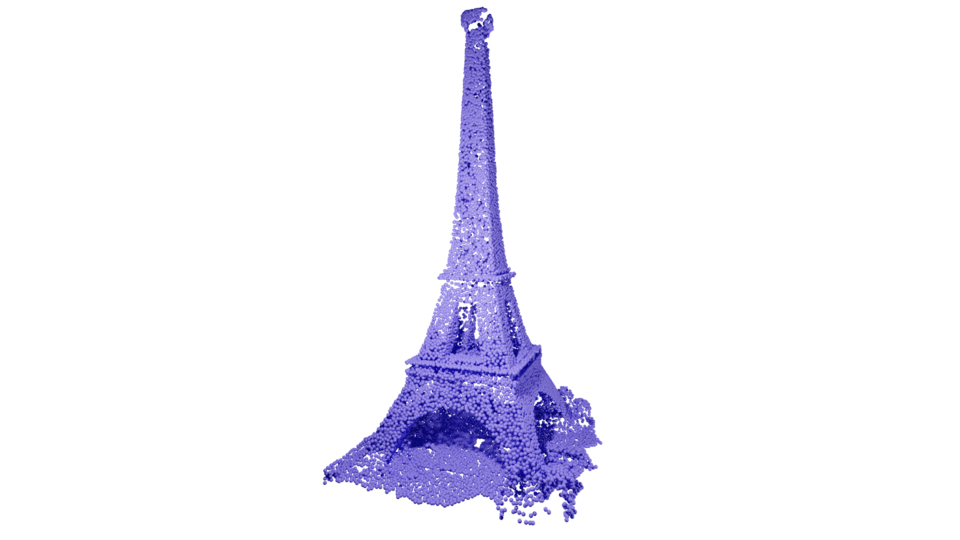} }}%
    \subfloat[\centering Natural History Museum]{{\includegraphics[width=3.4cm, trim={0 0 0 0}, clip]{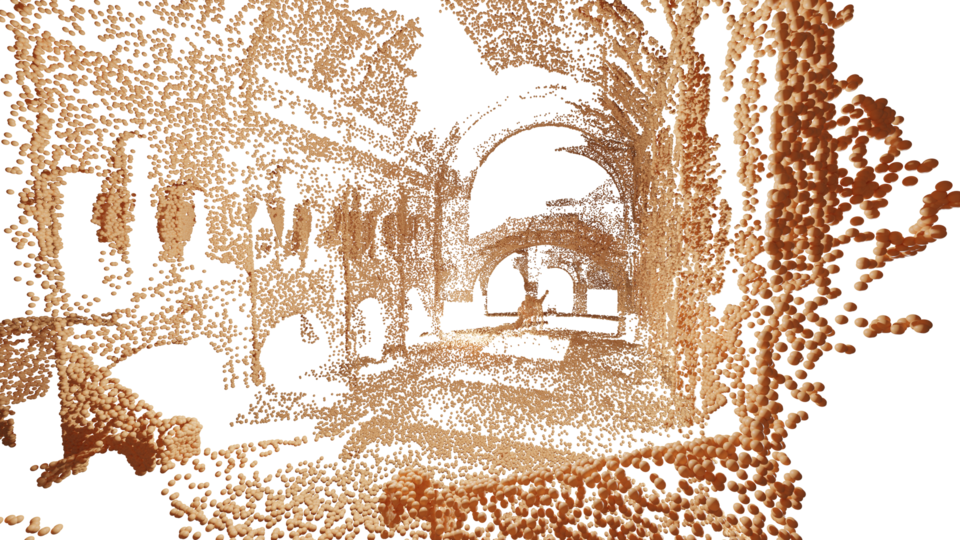} }}\\%
    
    \caption{\label{fig:reconstruction} {\bf Reconstruction of large 3D scenes with our NBV approach \method}. We show the partial point cloud gathered by the depth sensor for each scene.}
\end{figure}

To evaluate the scalability of our model to large environments as well as free camera motion in 3D space, we also conducted experiments using a naive planning algorithm that incrementally builds a path in the scene: We first discretize the camera poses in the scene on a 5D grid, corresponding to coordinates $c_{pos} = (x_c, y_c, z_c)$ of the camera as well as the elevation and azimuth to encode rotation $c_{rot}$. The number of poses depends on the dimensions of the bounding box of the scene. This box is an input to the algorithm, as a way for the user to tell which part of the scene should be reconstructed. In our experiments, the number of different poses is around 10,000. Note we did not retrain our model for such scenes and use the previous model trained on ShapeNet as explained in Section~\ref{sec:single_object}.

The depth sensor starts from a random pose (from which, at least, a part of the main structure to reconstruct is visible). Then, at each iteration, our method estimates the coverage gain of direct neighboring poses in the 5D grid, and selects the one with the highest score. The sensor moves to this position, captures a new partial point cloud and concatenates it to the previous one. Since we focus on online path planning for dense reconstruction optimization and designed our model to be scalable with local neighborhood features, the size of the global point cloud can become very large. 
We iterate the process either 100 times to build a full path around the object in around 5 minutes on a single Nvidia GTX1080 GPU, and recovered a partial point cloud with up to 100,000 points.

Contrary to single, small-scale object reconstruction where the object is always entirely included in the field of view, we simulated a limited range in terms of field of view and depth for the depth sensor, so that the extent of the entire scene cannot be covered in a single view. Thus, taking pictures at long range and going around the scene randomly is not an efficient strategy; the sensor must find a path around the surface to optimize the full coverage. We use a ray-casting renderer to approximate a real Lidar. More details about the experiments can be found in the appendix.\vincentrmk{still supp mat?}

We compared the performance of \method with simple baselines: First, a random walk, which chooses neighboring poses randomly with uniform probabilities. Then, we evaluated an alternate version of our model, which we call \method-Entropy, which leverages the first module of our full model to compute occupancy probability in the scene, then selects the next best view as the position that maximizes the Shannon entropy in its field of view. The comparison is interesting, since \method-Entropy adapts classic NBV approaches based on information theory to a deep learning framework. Figures~\ref{fig:trajectories} and \ref{fig:reconstruction} provide qualitative results. Figure~\ref{tab:scene_reconstruction} shows the convergence speed of covered surface by \method and our two baselines, averaged on all scenes of the dataset. 

\begin{figure}%
    \centering
    \begin{tabular}{@{}cc@{}}
  \hspace{-0.0cm}\scalebox{0.75}{ 
  \begin{tabular}{@{}lccccccc@{}}
    \toprule
    \multicolumn{1}{c}{} & \multicolumn{7}{c}{3D scene} \\
    \cmidrule(r){2-8}
     & & {\scriptsize Statue of} & {\scriptsize Manhattan} & {\scriptsize Fushimi} & {\scriptsize Dunnottar} & {\scriptsize Neuschwanstein} &  \\[-1mm]
    Method & {\scriptsize Colosseum} & {\scriptsize Liberty} & {\scriptsize Bridge} & {\scriptsize Castle} & {\scriptsize Castle} & {\scriptsize Castle} & Mean \\
    \midrule
    Random Walk & 0.308 & 0.469 & 0.405 & 0.584 & 0.355 & 0.403 & 0.421\\
    SCONE-Entropy & 0.512 & 0.681 & 0.361 & 0.802 & 0.456 & 0.538 & 0.558\\
    SCONE & {\bf 0.571} & {\bf 0.693} & {\bf 0.685} & {\bf 0.841} & {\bf 0.739} & {\bf 0.653} & {\bf 0.697}\\
    \bottomrule
  \end{tabular}
 } 
 &
\raisebox{-.5\height}{\includegraphics[width=3.4cm]{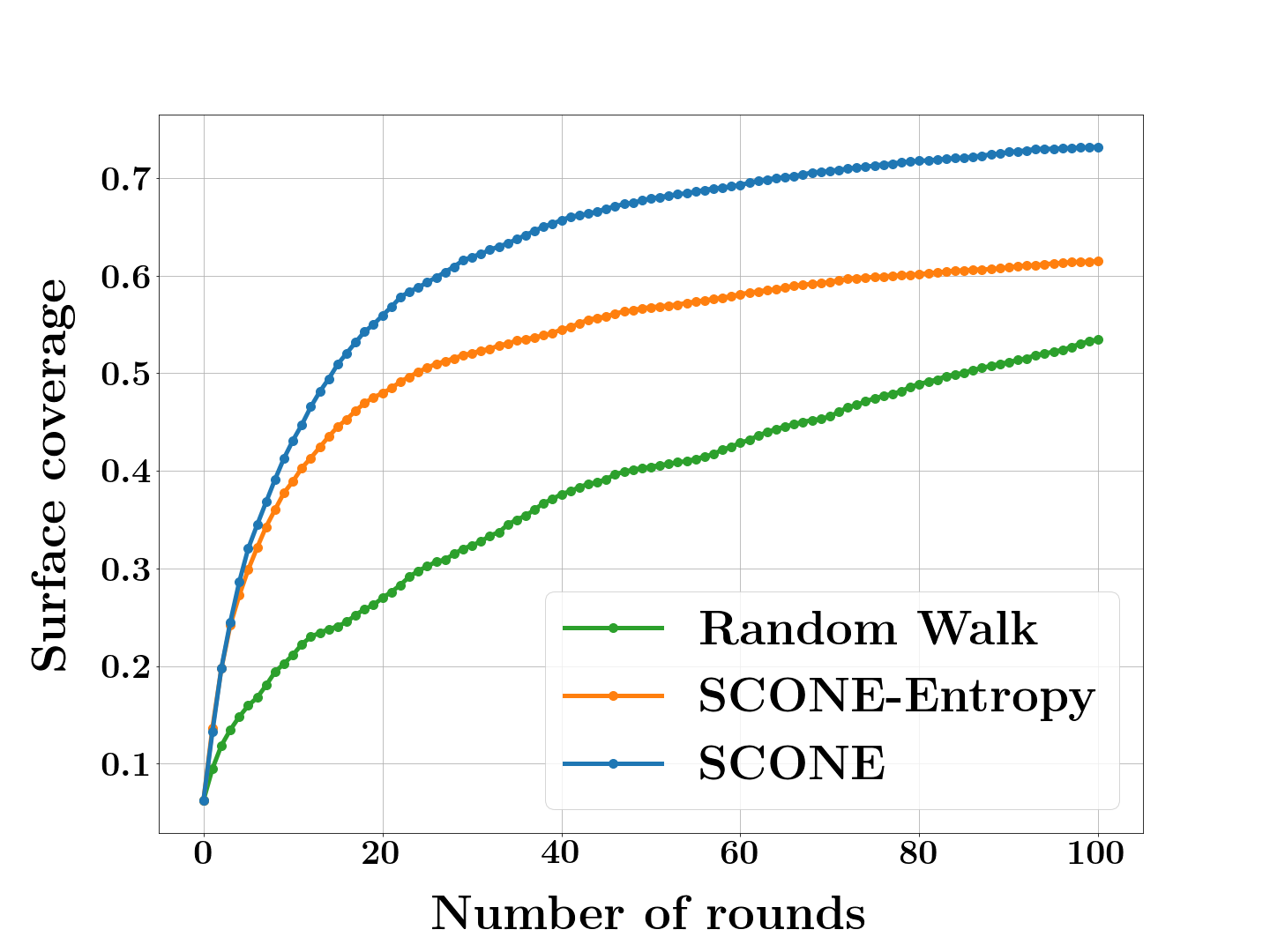}}\\[3mm]
(a) & (b)\\
    \end{tabular}
    \caption{\label{tab:scene_reconstruction} {\bf (a)~AUCs of surface coverage for several NBV selection methods during active view planning experiments on several 3D scenes from our dataset.} Results are averaged on several trajectories. {\bf (b)~Surface coverage curves for the different NBV selection methods, averaged on all 13 3D scenes.}
    Despite being trained only on centered ShapeNet 3D models, the second module of \method is able to generalize to complex scenes when predicting the visibility scores.}
\end{figure}

Despite being trained only on ShapeNet 3D models, our method is able to compute meaningful paths around the structures and consistently reach satisfying global coverage.
Since we focused on building a metric for NBV computation, this experiment is simple and does not implement any further prior from common path planning strategies: For instance, we do not compute distant NBV, nor optimal paths to move from a position to a distant NBV with respect to our volumetric mapping. There is no doubt the model could benefit from further strategies inspired by the path planning literature.

\subsection{Ablation Study}

We now provide an ablation study for both modules of our full model: The prediction of occupancy probability, and the computation of coverage gain from predictions about visibility gains. To evaluate both modules separately, we compared their performance on their training critera: MSE for occupancy probability, and softmax followed by KL-Div on a dense set of cameras for coverage gain. The values are reported in Figure~\ref{fig:ablation_study}. \vincentrmk{Do we still have a supp mat? :}We provide extensive details and analysis in the appendix.

\begin{figure}%
    \centering
    \begin{tabular}{@{}cc@{}}
  \hspace{-0.5cm}\scalebox{0.7}{ 
  \begin{tabular}{@{}lcc@{}}
    \toprule
    Architecture & Mean Squared Error & IoU\\
    \midrule
    Base Architecture & 0.0397 & 0.843\\
    No Neighborhood Feature & 0.0816 & 0.702\\
    With Camera History & \textbf{0.0386} & \textbf{0.844}\\
    \bottomrule
  \end{tabular}
 }
 &
\scalebox{0.7}{ 
  \begin{tabular}{@{}lc@{}}
    \toprule
    Architecture & Kullback-Leibler Divergence \\
    \midrule
    Base Architecture & \textbf{0.00039}\\
    No prediction $\probfield$ & 0.00051 \\
    No Camera History & 0.00045 \\
    \bottomrule
  \end{tabular}
 }\\[7mm]
(a) Occupancy probability & (b) Coverage gain \\
    \end{tabular}
    \caption{\label{fig:ablation_study} {\bf (a) Comparison of Mean Squared Error and IoU for variations of our occupancy probability prediction model. (b) Comparison of Kullback-Leibler divergence for variations of our coverage gain computation model.} Thanks to the volumetric prediction $\probfield$, the full model predicts a better distribution of coverage gains on the whole space as it is indicated by KL divergence, which is convenient for full path planning in a 3D scene.}%
\end{figure}

\textbf{Occupancy probability.}  Apart from the base architecture presented in Figure~\ref{fig:occupancy_probability_prediction}, we trained the first module under two additional settings. First, we removed the multi-scale neighborhood features computed by downsampling the partial point cloud $P_H$, and trained the module to predict an occupancy probability value $\probfield$ only from global feature $g(P_H)$ and encoding $f(x)$. As anticipated, the lack of neighborhood features not only prevents the model from an efficient scaling to large scenes, but also  causes a huge loss in performance. We also trained a slightly more complex version of the module by feeding it the camera history harmonics $h_H(x)$ as an additional feature. It appears this helps the model to increase its confidence and make better predictions, but the gains are quite marginal.

\textbf{Visibility gain.}  We trained two variations of our second module. First, we completely removed the geometric prediction: We use directly the surface points as proxy points, mapped with an occupancy probability equal to 1. As a consequence, the model suffers from a significant loss in performance to compute coverage gain from its predicted visibility gain functions. Thus, we can confirm that the volumetric framework improves the performance of \method. This result is remarkable, since surface representations usually make better NBV predictions for dense reconstruction of detailed objects in literature. On the contrary, we show that our formalism is able to achieve higher performance by leveraging a volumetric representation with a high resolution deep implicit function $\probfield$. We trained a second variation without the spherical mappings $h_H$ of camera history. We verified that such additional features increase performance, especially at the end of a 3D reconstruction.

\section{Limitations and Conclusion}

Beyond the prediction of the Next Best View, our method \method is able to evaluate the value of any camera pose given a current estimate of the scene volume. We demonstrated this ability with a simple path planning algorithm that looks for the next pose around the current pose iteratively. However:
\begin{itemize}[leftmargin=5mm]
    \item Like current methods, \method relies on a depth sensor and assumes that the captured depth maps are perfect. In practice, such a sensor is not necessarily available and can also be noisy. It would be very interesting to extend the approach to color cameras.
    \item Also like current methods, it is limited to the prediction of the camera pose for the next step. This is greedy, non-optimal as multiple poses are required anyway for a complete 3D reconstruction. Thanks to its scalability, we believe that our method could be extended to the prediction of the ``Next Best Path'', where future camera poses would be predicted jointly for their complementarity.
\end{itemize}

\section*{Acknowledgements}

This work was granted access to the HPC resources of IDRIS under the allocation 2022-AD011013387 made by GENCI. We thank Renaud Marlet, Elliot Vincent, Romain Loiseau, and Tom Monnier for inspiring discussions and valuable feedback.

\bibliographystyle{plainnat}
\bibliography{camera_ready}

\section*{Checklist}


\begin{enumerate}

\item For all authors...
\begin{enumerate}
  \item Do the main claims made in the abstract and introduction accurately reflect the paper's contributions and scope?
    \answerYes{}{}
  \item Did you describe the limitations of your work?
    \answerYes{} See the last section.
  \item Did you discuss any potential negative societal impacts of your work?
    \answerYes{} A discussion can be found in the supplementary material.
  \item Have you read the ethics review guidelines and ensured that your paper conforms to them?
    \answerYes{}
\end{enumerate}

\item If you are including theoretical results...
\begin{enumerate}
  \item Did you state the full set of assumptions of all theoretical results?
    \answerYes{}{}
        \item Did you include complete proofs of all theoretical results?
    \answerYes{}
\end{enumerate}

\item If you ran experiments...
\begin{enumerate}
  \item Did you include the code, data, and instructions needed to reproduce the main experimental results (either in the supplemental material or as a URL)?
    \answerNo{} Not yet, but we will release our source code and dataset.
  \item Did you specify all the training details (e.g., data splits, hyperparameters, how they were chosen)?
    \answerYes{} In the supplementary material.
        \item Did you report error bars (e.g., with respect to the random seed after running experiments multiple times)?
    \answerNo{}
        \item Did you include the total amount of compute and the type of resources used (e.g., type of GPUs, internal cluster, or cloud provider)?
    \answerYes{} See supplementary material.
\end{enumerate}

\item If you are using existing assets (e.g., code, data, models) or curating/releasing new assets...
\begin{enumerate}
  \item If your work uses existing assets, did you cite the creators?
    \answerYes{}
  \item Did you mention the license of the assets?
    \answerYes{}
  \item Did you include any new assets either in the supplemental material or as a URL?
    \answerNo{}
  \item Did you discuss whether and how consent was obtained from people whose data you're using/curating?
    \answerYes{}
  \item Did you discuss whether the data you are using/curating contains personally identifiable information or offensive content?
    \answerNo{} Our data do not have such issue.
\end{enumerate}

\item If you used crowdsourcing or conducted research with human subjects...
\begin{enumerate}
  \item Did you include the full text of instructions given to participants and screenshots, if applicable?
    \answerNA{}
  \item Did you describe any potential participant risks, with links to Institutional Review Board (IRB) approvals, if applicable?
    \answerNA{}
  \item Did you include the estimated hourly wage paid to participants and the total amount spent on participant compensation?
    \answerNA{}
\end{enumerate}

\end{enumerate}

\newpage
\appendix

{\LARGE 
\begin{center}
    \textbf{Appendix}
\end{center} 
\hrule
\vspace{3mm}}

In this appendix, we provide the following elements:
\begin{enumerate}
    \item A complete proof of our main theorem~\ref{thm:volumetric_approximation_of_coverage} in Section~\ref{sec:supp_mat_1_proof}.
    \item Further details about the training process, as well as the implementation of \method, in Section~\ref{sec:supp_mat_2_training}.
    \item Further details about the experiments we conducted in Section~\ref{sec:supp_mat_3_experiments}.
\end{enumerate}
Code and data will be available on our dedicated webpage: \url{https://github.com/Anttwo/SCONE}.

\section{Proof of Theorem~\ref{thm:volumetric_approximation_of_coverage}}
\label{sec:supp_mat_1_proof}

In this section, we keep all notations introduced in the main paper and give further details about the proof of Theorem \ref{thm:volumetric_approximation_of_coverage}.

In particular, we start by listing all assumptions we make about the scene's geometry and discuss whether they are appropriate or not to real-case scenarios. Then, we present the complete derivations leading to Theorem~\ref{thm:volumetric_approximation_of_coverage}.

\subsection{Main Assumptions}


\paragraph{Assumption 1: The surface of the scene consists in a finite collection of bounded watertight surfaces.} 
To derive Theorem~\ref{thm:volumetric_approximation_of_coverage}, we need the scene to be represented as a closed volume $\chi$, and its surface as the boundary $\partial\chi$ of $\chi$, which is trivially true for a finite collection of bounded watertight surfaces. Such an assumption is realistic since a real 3D object has a non-zero volume and is actually made of watertight surfaces. Note that, in practice, the floor is scanned as a plane surface by a depth sensor. The same observation applies to walls in indoor scenes, as well as all surfaces that delimit unreachable parts of 3D space (except for the inside of objects). However, these surfaces still delimit volumes in reality, and an arbitrary thickness can be predicted for them, as it won't change the result of the scan.

Following such assumptions, the Signed Distance Function~(SDF) $f_\chi:\IR^3 \rightarrow \IR$ of the volume $\chi$ is defined as:
\begin{equation}
    \label{sdf}
    f_\chi(x) =
    \begin{cases}
            d(x, \partial\chi) & \text{if } x\in \chi\\
            -d(x, \partial\chi) & \text{if } x \notin \chi\> ,
        \end{cases}
\end{equation}
where $d(x, \partial\chi) = \inf_{y\in\partial\chi}\|x-y\|_2$ for any $x\in \chi$.

As we never reconstruct infinitely large scenes in practice, we also consider the scene to be bounded. As a consequence, both $\chi$ and $\partial\chi$ are compact as they are bounded, closed subsets of $\IR^3$.

\paragraph{Assumption 2: The boundary $\partial \chi$ is $C^2$.}

We suppose the scene's surface is regular enough to be, locally, the image of a $C^2$ embedding of $\IR^2$ into $\IR^3$. This is a non-trivial assumption since in theory, the surface of every object with sharp edges or corners is not a $C^2$ boundary. However, in practice most of real objects with edges (like a door, a table, etc.) are smoother than they appear, and can be accurately represented by a $C^2$ boundary. In general, we believe that a $C^2$ approximation of most realistic non-smooth surface is enough to compute a meaningful NBV anyway, as shown in Figure~\ref{fig:smooth_approximation}.

\begin{figure}%
    \centering
    \subfloat{{\includegraphics[width=5cm, trim={10cm 0 10cm 0}, clip]{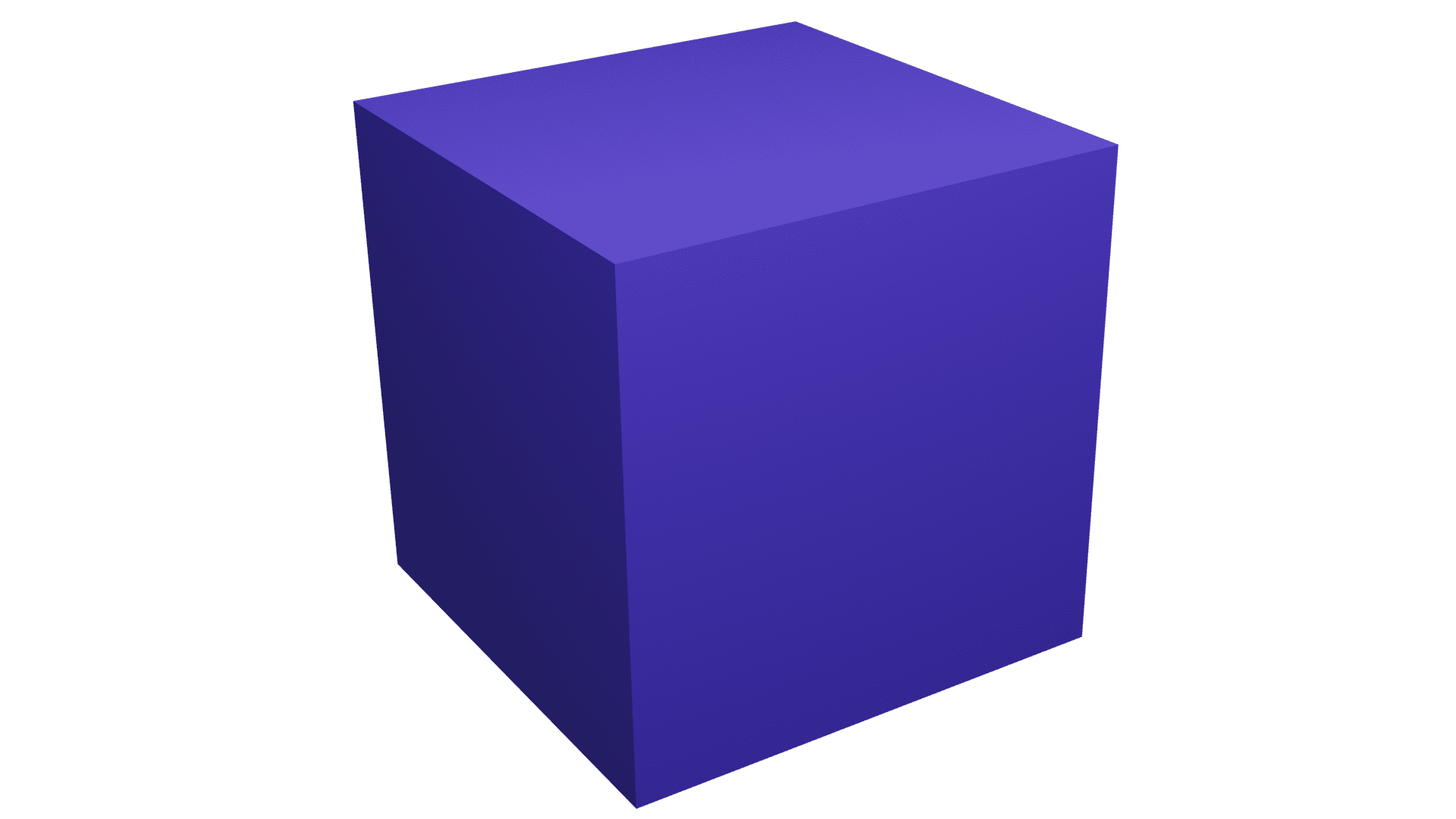}} }%
    \subfloat{{\includegraphics[width=5cm, trim={10cm 0 10cm 0}, clip]{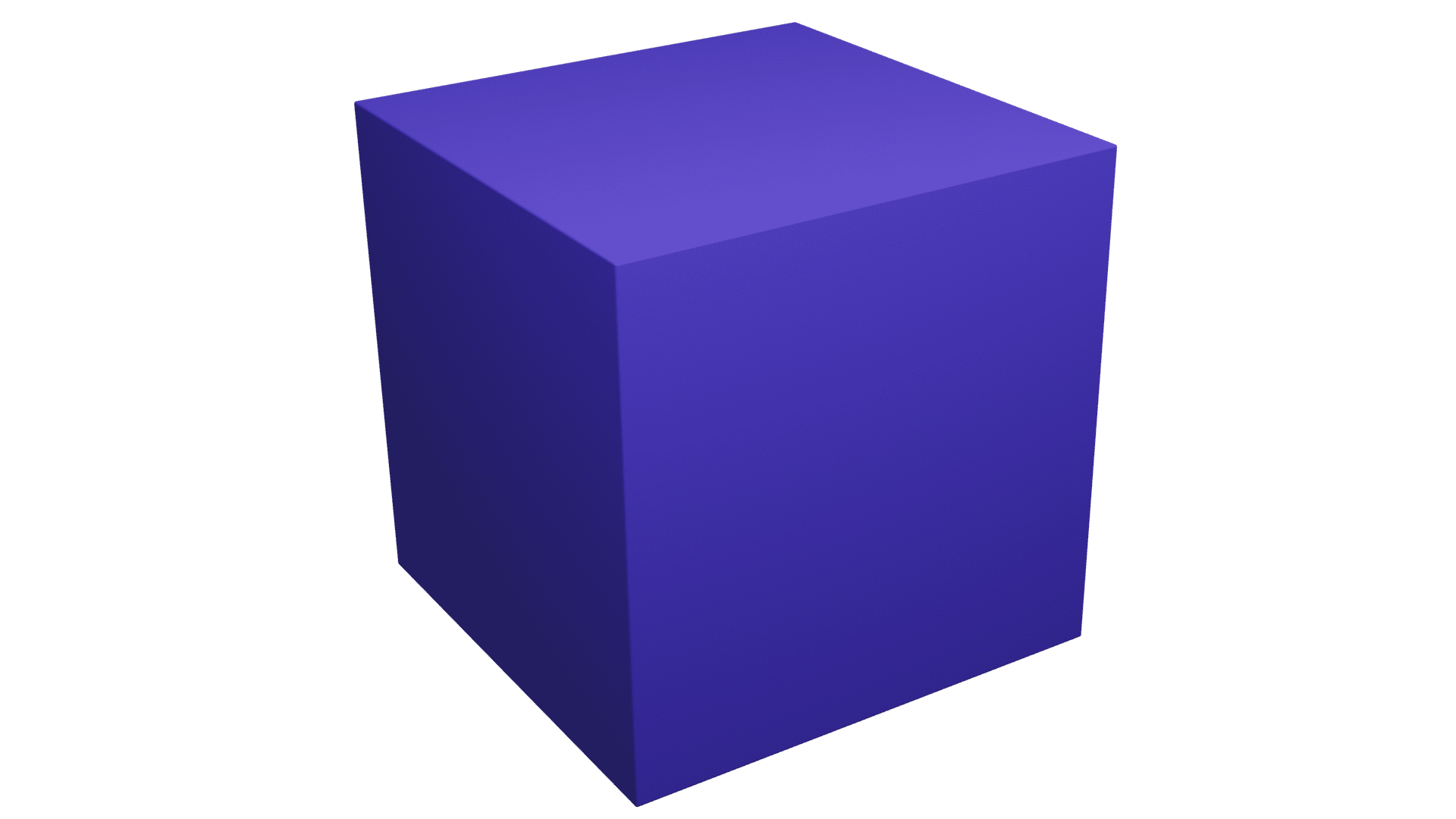} }}\\%
    \subfloat{{\includegraphics[width=5cm, trim={20cm 0 0cm 0}, clip]{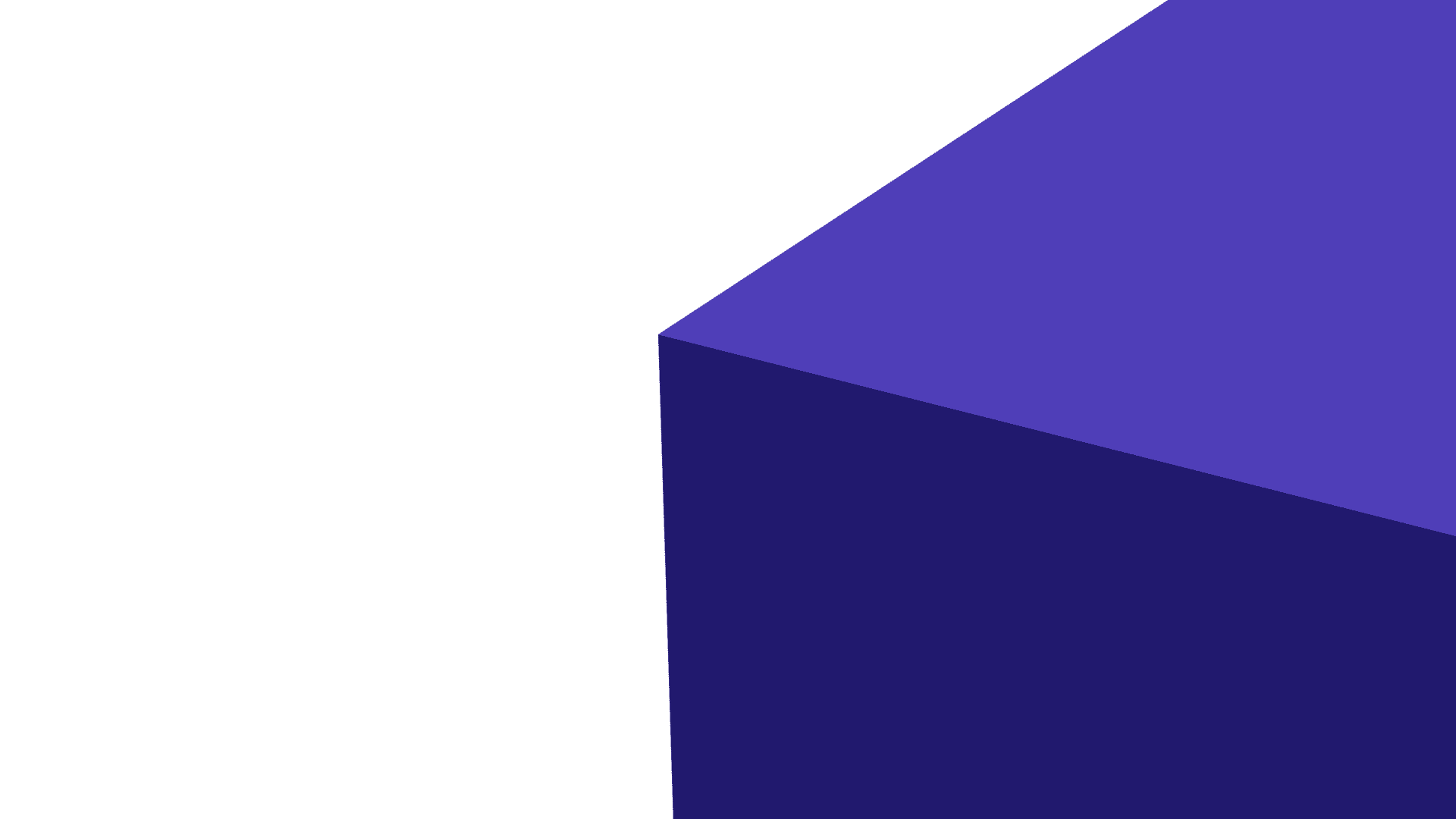} }}%
    \subfloat{{\includegraphics[width=5cm, trim={20cm 0 0cm 0}, clip]{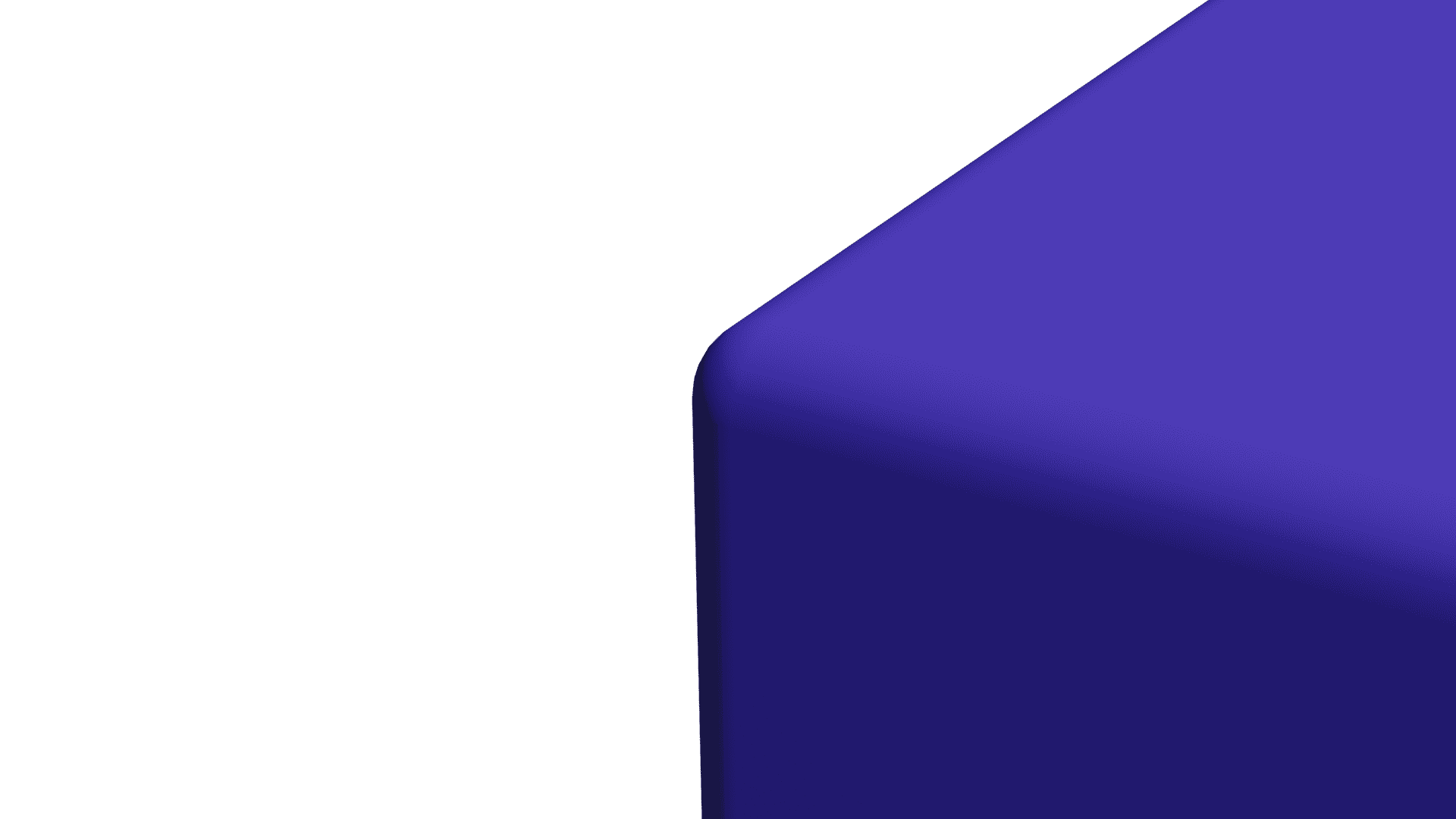} }}%
    \caption{\label{fig:smooth_approximation} {\bf Example of a $C^2$ approximation of a surface}. On the left, a cube mesh with 6 faces and 12 sharp edges. On the right, a smoother $C^2$ approximation of the same cube.}
\end{figure}

Following such assumptions, according to \cite{gilbarg}, there exists a quantity $\mu_0>0$ such that the SDF $f_\chi$ is twice continuously differentiable on the spherical neighborhood $T(\partial\chi, \mu_0) := \left\{ p \in \IR^3 \ | \ \exists x \in \partial\chi, \|x-p\|_2 < \mu_0 \right\}$.

In particular, for all $x_0\in \partial\chi$, the function $f_\chi$ satisfies $\nabla f_\chi(x_0) = N(x_0)$ where $N$ is the inward normal vector field \cite{gilbarg}. Moreover, for all absolutely integrable function $g:T(\partial\chi, \mu_0)\rightarrow \IR$, and for all $\mu<\mu_0$, the following identity is satisfied~\cite{gilbarg}:
\begin{equation}
    \label{eqn:gilbarg_link_surface_volume}
        \int_{T(\partial\chi, \mu)}  g(x) \text{d}x
    = \int_{\partial\chi} \int_{-\mu}^\mu g(x_0 + \lambda N(x_0))\,\det(I-\lambda W_{x_0})\,\text{d}\lambda\,\text{d}x_0,
\end{equation}
where $W_{x_0}$ is the Weingarten map at $x_0$, that is, the Hessian of $f_\chi$, which is continuous on $T(\partial\chi, \mu_0)$ since $f_\chi$ is $C^2$ on $T(\partial\chi, \mu_0)$. The integral on the left of Equation~\ref{eqn:gilbarg_link_surface_volume} is a volumetric integral on a spherical neighborhood, while the integral on the right is a surface integral on $\partial\chi$.

Please note that Equation~\ref{eqn:gilbarg_link_surface_volume} from \cite{gilbarg} actually only applies to tubular neighborhoods, which are specific neighborhoods of submanifolds resembling the normal bundle. However, since the spherical neighborhoods of $C^2$ watertight surfaces also are tubular neighborhoods, we prefer to use the simpler definition of spherical neighborhoods.

\subsection{Proof of Theorem \ref{thm:volumetric_approximation_of_coverage}}

To derive the theorem, we first prove the following lemma.

\begin{lemma}
\label{thm:lemma}
Under the previous assumptions, there exists $\lambda_0>0$ such that, for all $\lambda<\lambda_0$ and $x_0\in \partial\chi$, the point $x_0 + \lambda N(x_0)$ is located inside the volume, \ie, $f_\chi(x_0+\lambda N(x_0)) \geq 0$.
\end{lemma}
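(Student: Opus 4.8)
The plan is to study, for each boundary point $x_0 \in \partial\chi$, the one-dimensional function $\varphi_{x_0}(\lambda) := f_\chi\big(x_0 + \lambda N(x_0)\big)$ by a second-order Taylor expansion at $\lambda = 0$, and to make the remainder estimate uniform in $x_0$ using the compactness of $\partial\chi$.

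First I would collect the zeroth- and first-order data at $\lambda = 0$. Since $x_0 \in \partial\chi$, we have $\varphi_{x_0}(0) = f_\chi(x_0) = 0$. Because $f_\chi$ is $C^2$ on $T(\partial\chi,\mu_0)$ with $\nabla f_\chi(x_0) = N(x_0)$ and $\|N(x_0)\|_2 = 1$, the chain rule gives $\varphi_{x_0}'(0) = \nabla f_\chi(x_0)^\top N(x_0) = 1 > 0$; this is the quantitative form of the intuition that moving along the inward normal enters the volume.

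Next I would bound the second derivative uniformly. For $|\lambda| < \mu_0/2$ one has $\varphi_{x_0}''(\lambda) = N(x_0)^\top \nabla^2 f_\chi\big(x_0 + \lambda N(x_0)\big) N(x_0)$. The set $\overline{T(\partial\chi,\mu_0/2)}$ is compact (it is closed and bounded, since $\partial\chi$ is compact by Assumption~1) and contained in $T(\partial\chi,\mu_0)$, where $\nabla^2 f_\chi$ is continuous by Assumption~2; hence there exists $M_2 < \infty$ bounding the operator norm of $\nabla^2 f_\chi$ on $\overline{T(\partial\chi,\mu_0/2)}$. Since $\|N(x_0)\|_2 = 1$, this gives $|\varphi_{x_0}''(\lambda)| \le M_2$ for every $x_0 \in \partial\chi$ and every $|\lambda| < \mu_0/2$.

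Finally I would conclude by Taylor's theorem with the Lagrange form of the remainder: for $0 \le \lambda < \mu_0/2$ there is $\xi$ between $0$ and $\lambda$ with $\varphi_{x_0}(\lambda) = \varphi_{x_0}(0) + \lambda\,\varphi_{x_0}'(0) + \tfrac12\lambda^2\varphi_{x_0}''(\xi) \ge \lambda - \tfrac12 M_2\lambda^2 = \lambda\big(1 - \tfrac12 M_2\lambda\big)$, which is nonnegative as soon as $\lambda \le 1/M_2$. Taking $\lambda_0 := \min\{\mu_0/2,\ 1/M_2\} > 0$ then yields $f_\chi(x_0 + \lambda N(x_0)) \ge 0$, that is $x_0 + \lambda N(x_0) \in \chi$, for all $0 \le \lambda < \lambda_0$ and all $x_0 \in \partial\chi$. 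The only delicate point is the uniformity of $M_2$ over all $x_0$, which is exactly where the compactness of $\partial\chi$ and the $C^2$ regularity of $f_\chi$ on the whole tubular neighborhood~\cite{gilbarg} are used: without them $M_2$ could degenerate as $x_0$ varies and no single $\lambda_0$ would work. An alternative, slightly slicker route is to invoke the positive-reach / tubular-neighborhood structure of $\partial\chi$ to get the exact identity $f_\chi(x_0 + \lambda N(x_0)) = \lambda$ for $|\lambda| < \mu_0$, but the Taylor argument above is more elementary and fully self-contained.
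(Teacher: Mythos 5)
Your proposal is correct and follows essentially the same route as the paper's proof: a second-order Taylor expansion of $f_\chi$ along the inward normal, using $f_\chi(x_0)=0$, $\nabla f_\chi(x_0)^\top N(x_0)=1$, and a uniform bound on the Hessian obtained from compactness of the closure of $T(\partial\chi,\mu_0/2)$. The only cosmetic differences are that you phrase it via the one-dimensional function $\varphi_{x_0}$ rather than the multivariate Lagrange form, and your threshold $1/M_2$ is slightly more conservative than the paper's $2/B$; both are valid.
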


\begin{proof}
Following our main assumptions, $f_\chi$ is $C^2$ on the spherical neighborhood $T(\partial\chi, \mu_0)$. We define $\Gamma$ as the closure of $T(\partial\chi, \frac{\mu_0}{2})$. As a bounded, closed subset of $T(\partial\chi, \mu_0)$, $\Gamma$ is compact. Since the Hessian of $f_\chi$ is continuous on $\Gamma$, it is a bounded function on $\Gamma$. We denote by $B>0$ a bound verifying $\|W_{x_0}\|_{op} \leq B$ for all $x_0 \in \Gamma$, with $\|\cdot\|_{op}$ the operator norm.

Then, for a given surface point $x_0 \in \partial\chi$, we are interested in the sign of $f_\chi(x_0 + \lambda N(x_0))$ for $\lambda>0$. In this regard, we use the Lagrangian form of the Taylor expansion of $f_\chi$ at $x_0$: For all $\lambda < \frac{\mu_0}{2}$, there exists $w_{x_0, \lambda}$ that lies on the line connecting $x_0$ and $x_0 + \lambda N(x_0)$ such that
\begin{equation}
    f_\chi(x_0 + \lambda N(x_0)) = f_\chi(x_0) + \lambda \nabla f_\chi(x_0)^T N(x_0) + \frac{\lambda^2}{2} N(x_0)^T W_{w_{x_0, \lambda}} N(x_0).
\end{equation}
Since $x_0$ is located on the surface, $f(x_0)=0$. As we mentioned in the previous subsection, $\nabla f_\chi(x_0) = N(x_0)$ so that $\nabla f_\chi(x_0)^T N(x_0) = \|N(x_0)\|_2^2 = 1$. 

Moreover, $w_{x_0, \lambda} \in \Gamma$. We apply Cauchy-Schwarz inequality and deduce, by definition of the operator norm: 
\begin{equation*}
    \begin{split}
        |N(x_0)^T W_{w_{x_0, \lambda}} N(x_0)| & \leq \|N(x_0)\|_2 \cdot \| W_{w_{x_0, \lambda}}\|_{op} \cdot \|N(x_0)\|_2 \\
        |N(x_0)^T W_{w_{x_0, \lambda}} N(x_0)| & \leq B.
    \end{split}
\end{equation*}
Consequently,
\begin{equation*}
    \begin{split}
        f_\chi(x_0 + \lambda N(x_0)) & = \lambda + \frac{\lambda^2}{2} N(x_0)^T W_{w_{x_0, \lambda}} N(x_0)\\
        & \geq \lambda - \frac{\lambda^2}{2} B.
    \end{split}
\end{equation*}
The polynomial function $\lambda\mapsto \lambda - \frac{\lambda^2}{2}B$ being positive on $(0,\frac{2}{B})$, we define $\lambda_0=\min(\frac{2}{B}, \frac{\mu_0}{2})$ and conclude that $f_\chi(x_0 + \lambda N(x_0)) > 0$ for all positive $\lambda<\lambda_0$.
\end{proof}

Finally, we prove the main theorem.
\begin{T1}
    Under the previous regularity assumptions on the volume $\chi$ of the scene and its surface $\partial\chi$, there exist $\mu_0 > 0$ and $M>0$ such that for all $\mu < \mu_0$, and any camera $c\in \calC$:
    \begin{equation}
         \left| \frac{1}{|\chi|_V}\int_{\chi}  g_c^H(\mu;x) \text{d}x -  \mu \frac{|\partial\chi|_S}{|\chi|_V} G_H(c) \right| \leq M \mu^2 \> ,
    \end{equation}
    where $|\chi|_V$ is the volume of $\chi$.
\end{T1}
\begin{proof}
We keep the notations introduced in the previous lemma and, if needed, we update the value of $\mu_0$ so that $\mu_0 \leq \lambda_0$. Then, we start back from equation (3) of the main paper, where we introduced a new visibility gain function $\newvis_c^H$ to adapt the definition of the former visibility gain $\vis_c^H$ on spherical neighborhoods. For any $0<\mu<\mu_0$, we define this function as:
\begin{equation}
        \newvis_{c}^H(\mu; x) = 
        \begin{cases}
            1 & \text{if } \exists x_0 \in \partial\chi, \lambda <\mu \text{ such that } x = x_0 + \lambda N(x_0) \text{ and } \vis_c^H(x_0) = 1,\\
            0 & \text{otherwise} \> ,
        \end{cases}
\end{equation}
where $N$ is the inward normal vector field, which is well defined according to our main regularity assumptions. Once again, for $c\in\calC, \mu < \mu_0$, $g_c^{H}$ is obviously bounded on the bounded subset $T(\partial\chi, \mu_0)$. Thus, $g_c^{H}(\mu,\cdot)$ is absolutely integrable on $T(\partial\chi, \mu_0)$. Consequently, we apply the formula \ref{eqn:gilbarg_link_surface_volume} from \cite{gilbarg} and find that, for all $c \in \calC$ and $\mu < \mu_0$:
\begin{equation}
    \label{eqn:link_volume_surface_appendix}
    \int_{T(\partial\chi, \mu)}  g_c^H(\mu; x) \text{d}x
    = \int_{\partial\chi} \int_{-\mu}^\mu g_c^H(\mu; x_0 + \lambda N(x_0))\,\det(I-\lambda W_{x_0})\,\text{d}\lambda\,\text{d}x_0.
\end{equation}
Now, we are going to show that $\det(I - \lambda W_{x_0}) = 1 + \lambda b(\lambda, x_0)$ where $b$ is a bounded function on the compact space $[-\mu_0, \mu_0] \times \partial \chi$. To this end, we could either develop directly the determinant or use results about characteristic polynomials to speed up the proof. 

We choose the second approach: since $W_{x_0}$ is a square $3\times 3$ matrix, there exist polynomial functions $f_i:\IR^{3\times 3} \rightarrow \IR, i=1,...,3$ such that, for all $x_0\in\partial\chi$, the characteristic polynomial of $W_{x_0}$ verifies 
$$\det(XI_3 - W_{x_0}) = X^3 + f_1(W_{x_0}) X^2 + f_2(W_{x_0}) X + f_3(W_{x_0}).$$
Consequently, for all $x_0\in\partial\chi$ and $0<\lambda<\mu_0$,
\begin{equation*}
    \begin{split}
        \det(I_3 - \lambda W_{x_0}) & = \lambda^3 \det(\frac{1}{\lambda}I_3 - W_{x_0})\\ 
        & = \lambda^3 \left(\frac{1}{\lambda^3} + f_1(W_{x_0}) \frac{1}{\lambda^2} + f_2(W_{x_0}) \frac{1}{\lambda} + f_3(W_{x_0}) \right)\\
        & = 1 + \lambda \left(f_1(W_{x_0}) + f_2(W_{x_0}) \lambda + f_3(W_{x_0}) \lambda^2\right).
    \end{split}
\end{equation*}
We define $b:(\lambda, x_0) \mapsto f_1(W_{x_0}) + f_2(W_{x_0}) \lambda + f_3(W_{x_0}) \lambda^2$. Since the functions $f_i$ are polynomial and $x_0\mapsto W_{x_0}$ is continuous on $\partial\chi$, we deduce that $b$ is bounded on $[-\mu_0, \mu_0] \times \partial \chi$ as a continuous function defined on a compact space. We denote by $M>0$ a bound verifying $|b(\lambda, x_0)| \leq M$ for all $(\lambda, x_0) \in [-\mu_0, \mu_0] \times \partial \chi$.

Subsequently, for all $x_0\in \partial\chi$, we have by definition $g_c^H(\mu;x_0 + \lambda N(x_0)) = g_c^H(\mu;x_0) = \vis_c(x_0)$ when $0 \leq \lambda < \mu$, and $g_c^H(\mu;x_0 + \lambda N(x_0))= 0$ when $-\mu < \lambda < 0$. By rewriting equation \ref{eqn:link_volume_surface_appendix}, it follows that, for every $0<\mu<\mu_0$:
\begin{equation}
\begin{split}
    \int_{T(\partial\chi, \mu)}  g_c^H(\mu;x) \text{d}x & = \int_{\partial\chi} \int_{0}^\mu g_c^H(\mu;x_0)(1 + \lambda b(\lambda, x_0)) \,\text{d}\lambda \,\text{d}x_0\\
    & = \mu \int_{\partial\chi} g_c^H(\mu;x_0) \,\text{d}x_0 + \int_{\partial\chi} \int_{0}^\mu \lambda g_c^H(\mu;x_0) b(\lambda, x_0)\,\text{d}\lambda \,\text{d}x_0\\
    & = \mu |\partial\chi|_S G_H(c) + \int_{\partial\chi} \int_{0}^\mu \lambda g_c^H(\mu;x_0) b(\lambda, x_0)\,\text{d}\lambda \,\text{d}x_0.
\end{split}
\end{equation}

Since the volume is supposed to be opaque, function $g_c^H(\mu;\cdot)$ is equal to 0 for every point outside $T(\partial\chi, \mu)$. Moreover, according to lemma \ref{thm:lemma}, for all $x_0\in \partial \chi, \mu < \mu_0$, the point $x_0 + \mu N(x_0)$ is located inside the volume $\chi$, such that $\int_{T(\partial\chi, \mu)}  g_c^H(\mu;x)\,\text{d}x = \int_{\chi}  g_c^H(\mu;x)\,\text{d}x$.

Since $|g_c^H(\mu;\cdot)|\leq 1$ for all $c\in\mathcal{C}$ and $\mu<\mu_0$:
\begin{equation}
    \begin{split}
        \left| \int_{\chi}  g_c^H(\mu;x) \text{d}x - \mu |\partial\chi|_S G_H(c) \right| & \leq \int_{\partial\chi} \int_{0}^\mu \lambda \left| g_c^H(\mu;x_0) b(\lambda, x_0)\right| \,\text{d}\lambda \,\text{d}x_0 \\
        & \leq \int_{\partial\chi} \int_{0}^\mu \lambda \left|b(\lambda, x_0)\right| \,\text{d}\lambda \,\text{d}x_0\\
        & \leq \int_{\partial\chi} \int_{0}^\mu \lambda M \,\text{d}\lambda \,\text{d}x_0 \\
        & \leq \frac{M\mu^2}{2} \int_{\partial\chi} \,\text{d}x_0 \\
        & \leq \frac{M\mu^2}{2} |\partial\chi|_S = |\chi|_V \cdot M'\mu^2 
    \end{split}
\end{equation}
with $M' = \frac{M}{2} \frac{|\partial\chi|_S}{|\chi|_V}$.
\end{proof}
\section{Training}
\label{sec:supp_mat_2_training}

\antoine{
Motivated by the literature about NBV reconstruction for objects, we trained our model on the simple case of single, centered object reconstruction (see \nameref{sec:supp_mat_single_object}), with meshes from ShapeNetCore v1~\cite{shapenet2015}, following the same training, validation and test distributions than \cite{zeng-icirs20-pcnbv}. Indeed, we wanted our model to learn general geometric prior and compute coverage gain predictions on various shapes. Since \method relies on neighborhood geometric features as well as proxy point-level information to compute various metrics, we made the assumption that training only on an object dataset should not prevent the model from scaling to 3D scenes. We confirmed this hypothesis with an experiment on large 3D structures with free motion on a 5D grid, detailed in \nameref{sec:supp_mat_view_planning_in_scene}.

In theory, the full model could be trained directly in an end-to-end fashion with a single loss. However, in practice we schedule the process, and train the two modules consecutively. In particular, we first train the geometric prediction module alone so that predicted occupancy mappings become meaningful; then, we use the geometric predictions as an input to train the second module, and make it learn to predict accurate visibility gains. Indeed, training the entire model from scratch makes convergence difficult since the second module of the model learns to predict visibility gains from meaningless geometric reconstructions at the beginning.

\subsection{Training the geometric prediction module}
\label{sec:training_occupancy}
Inspired by \cite{lars-18-occupancy_network, xu-19-disn}, we follow a simple pipeline to train the geometric prediction model: For each mesh $M_i$ in a batch $(M_1,...,M_{N_{\text{mesh}}})$, we sample a random number $n_i$ of camera poses, capture $n_i$ depth maps from these positions, and compute the corresponding partial point cloud $P_i$. Then, for each mesh we sample $N_X$ proxy points $X^{(i)} = (x_1^{(i)},...,x_{N_X}^{(i)})$ and compute their predicted occupancy probability $\probfield(P_i;\ x_j^{(i)})$. Finally, we use an MSE loss to compare the predictions with ground truth occupancy values. 

More exactly, the loss $\calL_{\text{occ}}$ used to train the geometric prediction module of \method is the following:
\begin{equation}
    \calL_{\text{occ}} = \frac{1}{N_{\text{mesh}}} 
    \sum_{i=1}^{N_{\text{mesh}}} \frac{1}{N_X}
    \sum_{j=1}^{N_X} \|\probfield(P_i;\ x_j^{(i)}) - \occfield(x_j^{(i)})\|^2.
\end{equation}

\subsection{Training the coverage prediction module}

Once the geometric prediction model converges, we start to train the visibility gain $I_H$ prediction module. To this end, we first select a batch of meshes. For each mesh, we capture a random number of initial depth map observations (up to 10) with a ray-casting renderer and apply \method to predict visibility gain function coordinates in spherical harmonics. Since all cameras are sampled on a sphere in this setup, they share the same proxy points \vincentrmk{proxy point?} in their field of view and we can directly and efficiently compute coverage gains for a dense set of cameras on the sphere from predicted visibility gains, in a single forward pass.

Since our predicted volumetric gains are not equal to the true coverage gains but only proportional, we cannot compare directly the predicted values to ground-truth coverage gains but have to normalize them first. Moreover, we want \method not only to select the right NBVs consistently but also predict an accurate distribution of coverage gains in the volume for further path planning applications. Consequently, we consider the predicted $I_H(c)$ as a distribution on $c$ and compare it to the ground truth coverage gains using the Kullback-Leibler divergence $D_{KL}$ after a softmax normalization. 

More exactly, the training process starts in the same way as the previous one: For each mesh $M_i$ in a batch $(M_1,...,M_{N_{\text{mesh}}})$, we sample a random number $n_i$ of camera poses (which correspond to the history $H_i$), capture $n_i$ depth maps from these positions, and compute the corresponding partial point cloud $P_i$. For each mesh we sample $N_X$ proxy points $X^{(i)} = (x_1^{(i)},...,x_{N_X}^{(i)})$ and compute their predicted occupancy probability $\probfield(P_i;\ x_j^{(i)})$. 

Then, for each mesh we sample a subset of proxy points according to their occupancy probability; we concatenate these proxy points with their occupancy values, compute the camera history features $(h_{H_i}(x_1^{(i)}), ..., h_{H_i}(x_{N_X}^{(i)}))$ and feed these inputs to the second module of \method to predict the spherical mappings $\phi_l^m$ of visibility gains with a single forward pass. 

Next, we sample a dense set of $N_{\text{cam}}$ camera poses $\calC^{(i)} = (c_1^{(i)}, ..., c_{N_{\text{cam}}}^{(i)})$ on a sphere around the object and compute the predicted coverage gains $(I_{H_i}(c_1^{(i)}), ..., I_{H_i}(c_{N_{\text{cam}}}^{(i)}))$ for all cameras using Monte-Carlo integration on the predicted spherical mappings of visibility gains. Finally, we compute the following loss $\calL_{\text{cov}}$ to supervise training for the second module of \method:
\begin{equation}
    \begin{split}
        \calL_{\text{cov}} & = \frac{1}{N_{\text{mesh}}} \sum_{i=1}^{N_{\text{mesh}}} D_{KL}(\softmax(G_{H_i})\ ||\ \softmax(I_{H_i}))\\
        & = \frac{1}{N_{\text{mesh}}} \sum_{i=1}^{N_{\text{mesh}}} 
        \sum_{j=1}^{N_{\text{cam}}} 
        s_j^{(i)} \log\left( \frac{s_j^{(i)}}{\hat{s}_j^{(i)}} \right)
        ,
    \end{split}
\end{equation}

where the softmax normalization is defined as follows:
$$s_j^{(i)} = \frac{\exp\left(G_{H_i}(c_j^{(i)})\right)}{\sum_{k=1}^{N_{\text{cam}}}\exp\left(G_{H_i}(c_k^{(i)})\right)} \quad \text{and} \quad \hat{s}_j^{(i)} = \frac{\exp\left(I_{H_i}(c_j^{(i)})\right)}{\sum_{k=1}^{N_{\text{cam}}}\exp\left(I_{H_i}(c_k^{(i)})\right)}.$$
%

We supervise on the final coverage value (\ie, the value of the Monte-Carlo integration) rather than per-point visibility gains, since it would be heavier to compute and would require further assumptions. Indeed, we would have to compute ground truth visibility gains for probabilistic points that may not be in $\chi$, which needs further hypothesis to handle properly. On the contrary, supervising on the integral value prevents us from handling directly this problem: it is up to the model to process information as a volumetric integral, identify which point could be in the spherical neighborhood in an implicit manner and learn a meaningful per-point visibility gain metric. We can stick to our volumetric framework and have no need to make further assumptions on geometry or directly extract surfaces from our predicted probability field, which could lead to inaccurate results.

Overall, our model has 3,650,657 parameters: 2,257,769 for the occupancy probability prediction module, and 1,392,888 for the visibility gain prediction module. Each module is trained 20~hours on 4~GPUs Nvidia~Tesla~V100~SXM2~16~Go. 
We use a linear warmup strategy on the learning rate $\lambda$ to make training more stable: $\lambda$ linearly increases from 0 to its default value $\lambda=10^{-4}$ during the first 1,000 iterations. The learning rate is ultimately decreased to $10^{-5}$ after 60,000 iterations for further improvement.

All experiments were run on a single GPU Nvidia~GeForce~GTX~1080~Ti. Further details about the model's architecture are given in Section~\nameref{sec:implementation}.}


\subsection{Implementation details}
\label{sec:implementation}

We implemented and trained our model with PyTorch~\cite{paszke_pytorch}. In particular, we used ray-casting renderers from PyTorch3D~\cite{ravi_pytorch3d} to generate and use depth maps as inputs to our model.

To compute all spherical mappings involved in our method, we use the orthonormal basis of spherical harmonics with rank lower or equal to 7, which makes a total of 64 harmonics. In this regard, both the camera history features $h_H$ and the predicted visibility gain functions $\phi$ are mapped as vectors with $64$ coordinates.

Code and data will be available on our dedicated webpage: \url{https://github.com/Anttwo/SCONE}.

\section{Experiments}
\label{sec:supp_mat_3_experiments}

In this section, we give further details about our experiments: how the dataset is constructed, the hyperparameters involved in the prediction, and the evaluation metrics. We also provide additional results.

\subsection{Next Best View for Single Object Reconstruction}
\label{sec:supp_mat_single_object}

We first compare the performance of our model to the state of the art on a subset of the ShapeNet dataset~\cite{shapenet2015}.

\paragraph{Dataset.}
We follow the protocol of \cite{zeng-icirs20-pcnbv}: Using the train/validation/test split of \cite{yuan-corr18-pcn} for ShapeNet dataset~\cite{shapenet2015}, we sample 4,000 training meshes from 8 categories of objects (Airplane, Cabinet, Car, Chair, Lamp, Sofa, Table, Vessel), as well as 400 validation meshes and 400 test meshes from the same categories. Note that the full test set used in \cite{yuan-corr18-pcn} contains 1200 meshes. To make sure we provide a fair comparison with the state of the art, we sampled 10 different test subsets of 400 meshes among all 1200 meshes, ran the experiment 10 times and averaged the metrics. The results were really close from one subset to another, and our model systematically provided better coverage values than other methods in literature.

Following \cite{zeng-icirs20-pcnbv}, we reconducted the same experiment with subsets of 400 test meshes from 8 categories unseen during training (Bed, Bench, Bookshelf, Bus, Guitar, Motorbike, Pistol, Skateboard). 

\paragraph{Prediction.}
All camera poses are sampled on a sphere around the object. For each mesh, we follow the protocol of \cite{zeng-icirs20-pcnbv} and capture a first depth map from a random camera pose, which is reprojected in 3D as a cloud of 1024 points. Then, we use our model to predict coverage gain for all camera poses, and select the NBV as the pose with the highest gain. We concatenate the partial point cloud captured from the new position to the previous one, and we iterate the process until we have 10 views of the object. Note that we reconstruct a small scale object which is entirely contained in the field of view of every camera, \ie, $\chi_c=\chi$ for all camera poses $c$. 
Therefore, at each round we compute coverage gains for all camera poses in a single forward pass thanks to the spherical mappings of visibility gain functions.

\paragraph{Evaluation metric.}
Once again, we follow the protocol of \cite{zeng-icirs20-pcnbv} to compute ground-truth coverage scores and evaluate our method. For each mesh, we uniformly sample a cloud $P_0$ of 16,384 points on the surface, which represent the ground truth surface. Then, the total coverage $C(P_H)$ of a partial point cloud $P_H$ obtained by merging depth maps captured from camera poses in $H$ is computed as follows:
\begin{equation}
    \label{eqn:gt_total_surface_coverage}
    C(P_H) = \frac{1}{|P_0|} \sum_{p_0\in P_0} U(\epsilon - \min_{p\in P_H}\|p_0-p\|_2),
\end{equation}
where $\epsilon$ is a distance threshold. $C(P_H)$ simply evaluates the number of points in $P_0$ that have at least a neighbor in $P_H$ closer than $\epsilon$. For the experiment, we used the same threshold as \cite{zeng-icirs20-pcnbv}, \ie, $\epsilon=0.00707m$.

\paragraph{Results.}
We provide additional results for this experiment. In particular, figure \ref{fig:shapenet_convergence_speed} illustrates the evolution of surface coverage during reconstruction for several methods.

\begin{figure}
  \centering
  \includegraphics[width=7cm]{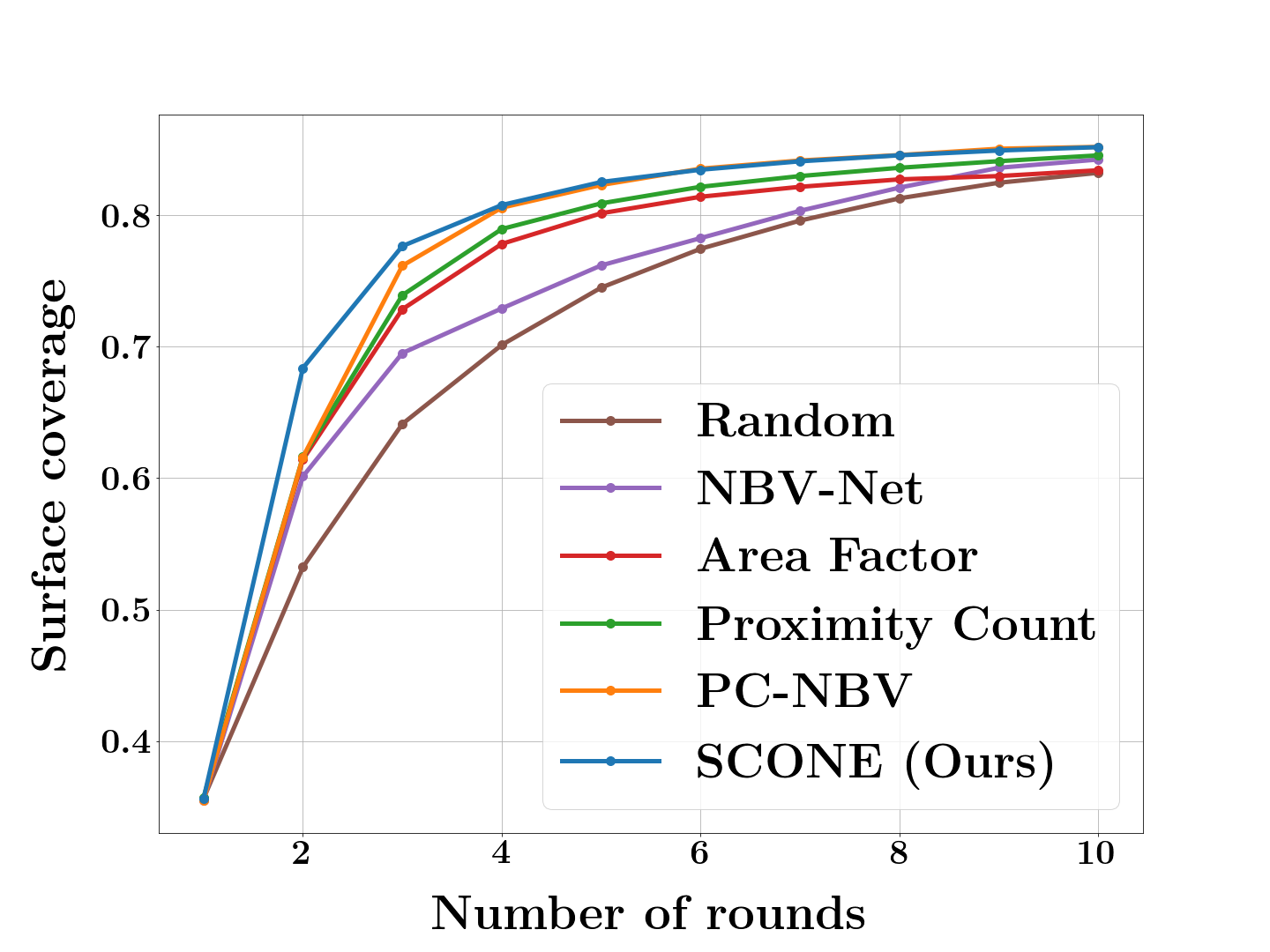}
  \caption{\label{fig:shapenet_convergence_speed} {\bf Convergence speed of the covered surface by several methods~(NBV-Net~\cite{Mendoza-2019}, Area Factor~\cite{vasquez-14-volumetric-nbv}, Proximity Count~\cite{delmerico-18-acomparison}, PC-NBV~\cite{zeng-icirs20-pcnbv}, ours) on a subset of ShapeNet dataset~\cite{shapenet2015}}. Our \method method has the highest reconstruction performance in terms of AUC, and performs significantly better when only a small number of views are available. For this experiment, we constrain the camera to stay on a sphere centered on the objects in order to compare with previous methods.}
\end{figure}

\subsection{Active View Planning in a 3D Scene}
\label{sec:supp_mat_view_planning_in_scene}

\paragraph{Dataset.}
Since, to the best of our knowledge, we propose the first supervised Deep Learning method for free 6D motion of the camera, we created a dataset made of 13 large-scale scenes under the CC License for quantitative evaluation (3D models courtesy of Brian Trepanier, Andrea Spognetta, and 3D Interiors; all models were downloaded on the website Sketchfab). For each scene, we defined a bounding box delimiting the main structure to reconstruct. 

To scale the geometric prediction module of \method to large 3D scenes, we partition the scene in 3D cells depending on the dimensions of the bounding box: the larger the bounding box, the more cells in the scene. Then, each point belonging to the partial point cloud $P_H$ gathered by the sensor (computed by reprojecting all depth maps in 3D) is stored in the corresponding cell. To predict the occupancy probability of each proxy point, we only use the points located in neighboring cells. Therefore, the growth of the global partial point cloud does not influence the computation time of the occupancy probability prediction: We avoid unnecessary computation but keep meaningful predictions since our geometric prediction model relies on local neighborhood features.

Note that we do not reproject every point of the depth maps to compute the partial point cloud $P_H$. Indeed, we introduce a distance threshold $\epsilon$ to avoid unnecessary large amounts of points in the cells. Let $p$ be a point belonging to a new depth map computed by the sensor; then, we identify the cell in which $p$ is located, and add $p$ to the point cloud $P_H$ if and only if for all points $p_0$ in the cell, $\|p-p_0\|_2 > \epsilon_\text{cloud}$. Therefore, the threshold $\epsilon_\text{cloud}$ is a parameter that influence the resolution of the reconstructed point cloud $P_H$.

\paragraph{Prediction.}
To evaluate the scalability of our model to large environments as well as free camera motion in 3D space, we use once again a ray-casting renderer and follow the protocol described in section Active View Planning in a 3D Scene of the main paper. However, to predict surface coverage gain in a large 3D scene, we slightly modify the computation of per-point visibility gains. 

Indeed, note that the density of points gathered by a depth sensor like a LiDAR decreases with the distance to the surface, as well as the angle between the surface normal and the direction of observation. Since \method was trained on small scale objects with camera poses sampled on a sphere, our model learned to predict optimized angle for observation, but did not learn to predict optimized distance. Actually, the predicted visibility gain of proxy points sampled in space should reflect the variations in LiDAR density, and decrease inversely with the squared distance to the camera. 

To this end, given a camera pose $c$, we penalize the distance by multiplying the predicted visibility gain of a proxy point $x\in\chi_c$ by a factor $\frac{1}{\eta + \|x-c_{pos}\|_2^2}$. We apply the same strategy to the baseline \method-Entropy, and multiply the Shannon Entropy of a proxy point by the same factor.

\paragraph{Evaluation metric.}
To compute ground-truth total surface coverage for evaluation, we use the same approach than the previous experiment detailed in subsection \ref{sec:supp_mat_single_object}. In particular, we sample 100,000 points on the surface to obtain a ground-truth cloud $P_0$. Moreover, we use $\epsilon_\text{cloud}$, the parameter that defines the resolution of the reconstructed point cloud $P_H$, to compute the total surface coverage following equation \ref{eqn:gt_total_surface_coverage}.

\begin{figure}%
    \centering
    \subfloat[\centering Average on all 13 scenes]{{\includegraphics[width=4.1cm, trim={0 0 0 0}, clip]{images/supp_mat/all_path_planning_curve.png} }}%
    \subfloat[\centering Dunnottar Castle]{{\includegraphics[width=4.1cm, trim={0 0 0 0}, clip]{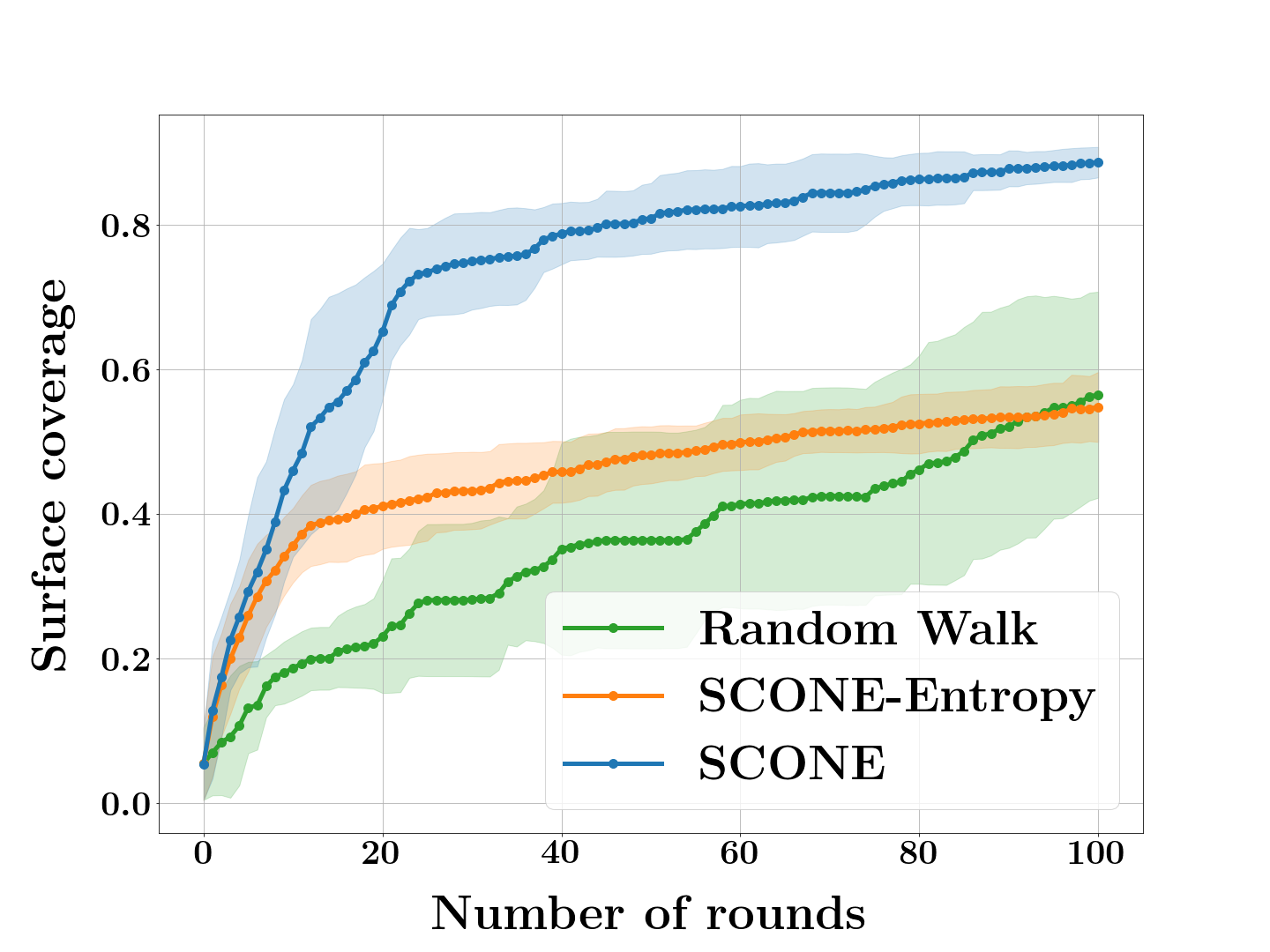} }}%
    \subfloat[\centering Manhattan Bridge]{{\includegraphics[width=4.1cm, trim={0 0 0 0}, clip]{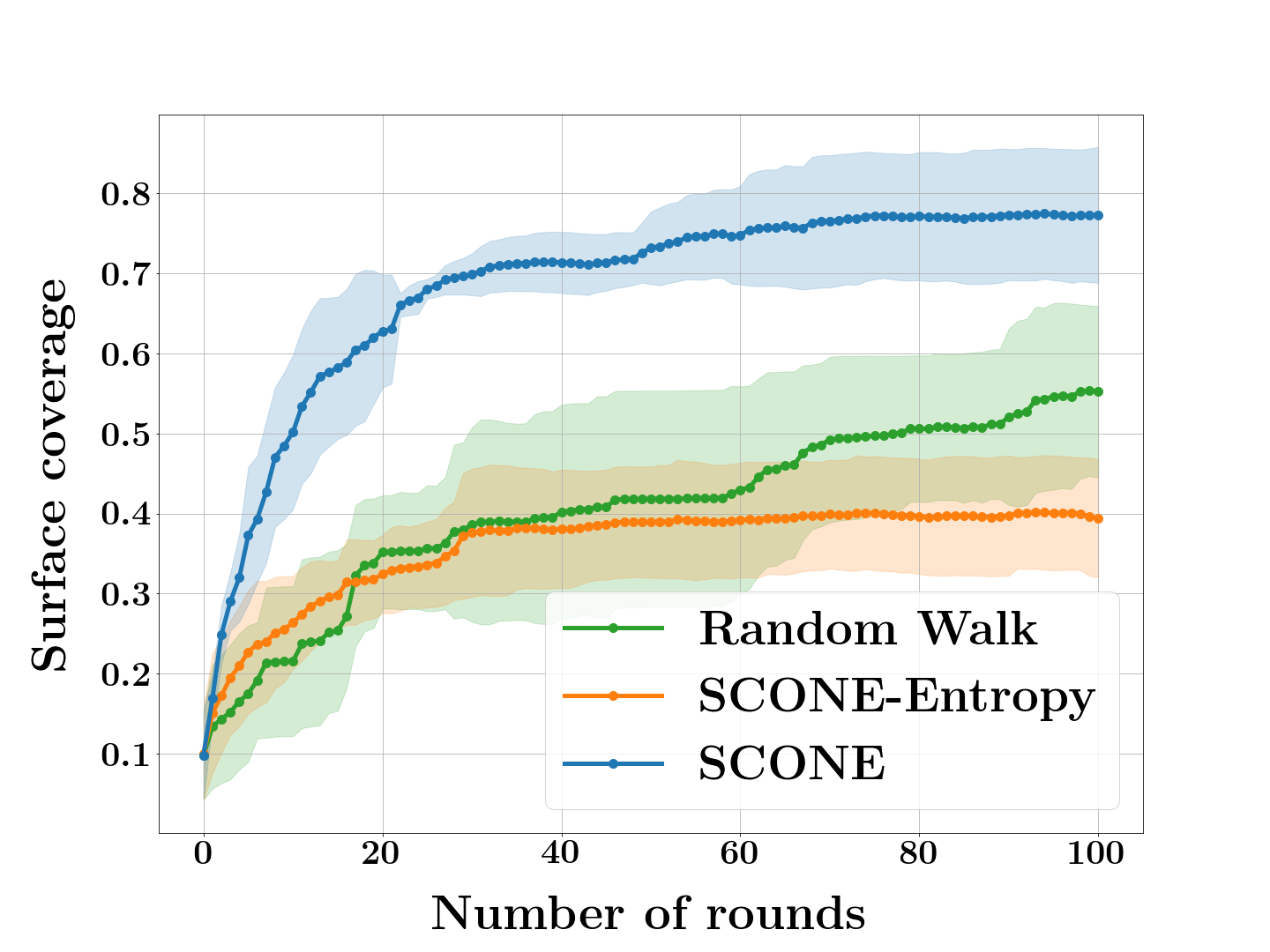} }}\\%
    
    \subfloat[\centering Alhambra Palace]{{\includegraphics[width=4.1cm, trim={0 0 0 0}, clip]{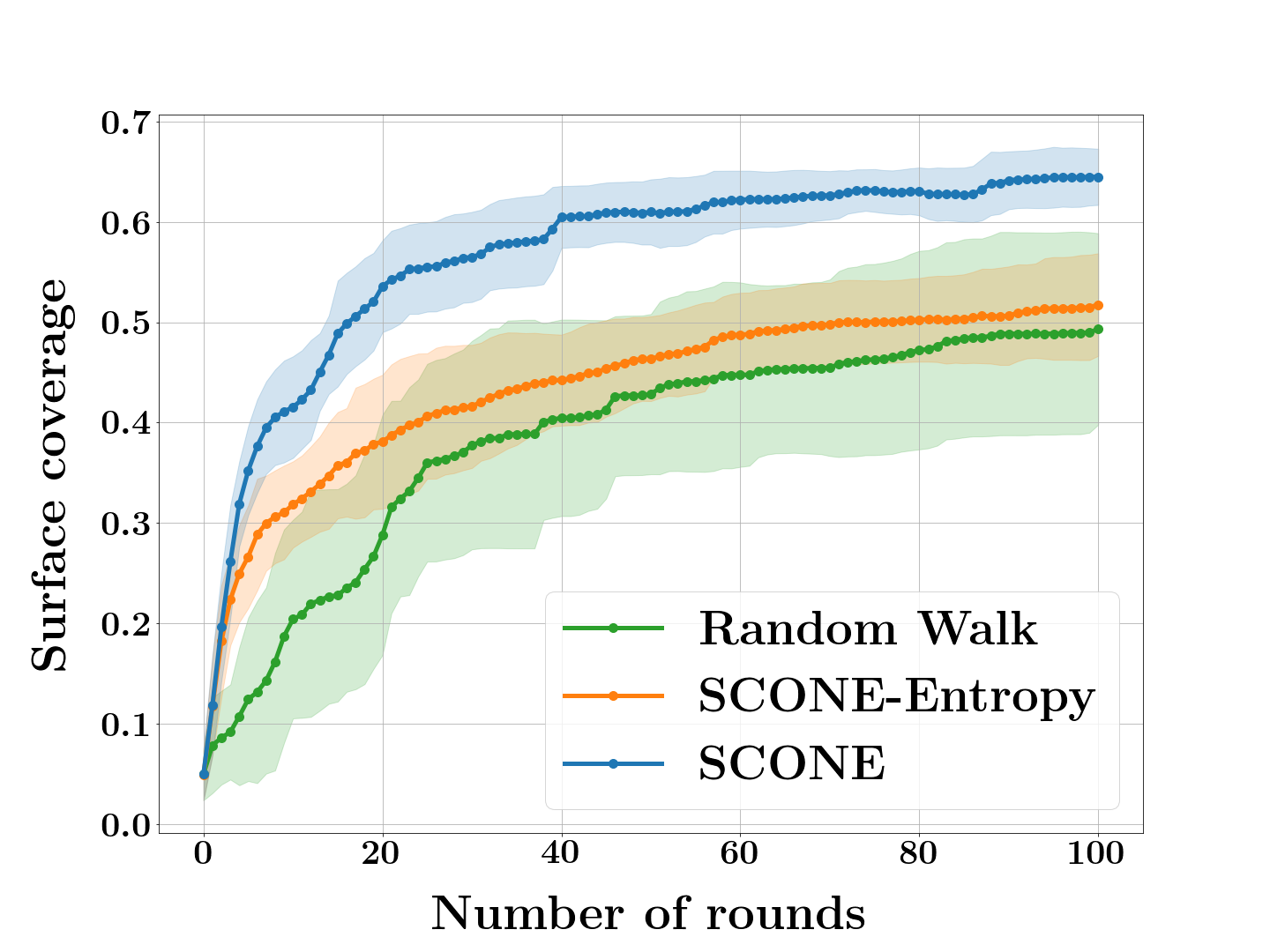} }}%
    \subfloat[\centering Leaning Tower, Pisa]{{\includegraphics[width=4.1cm, trim={0 0 0 0}, clip]{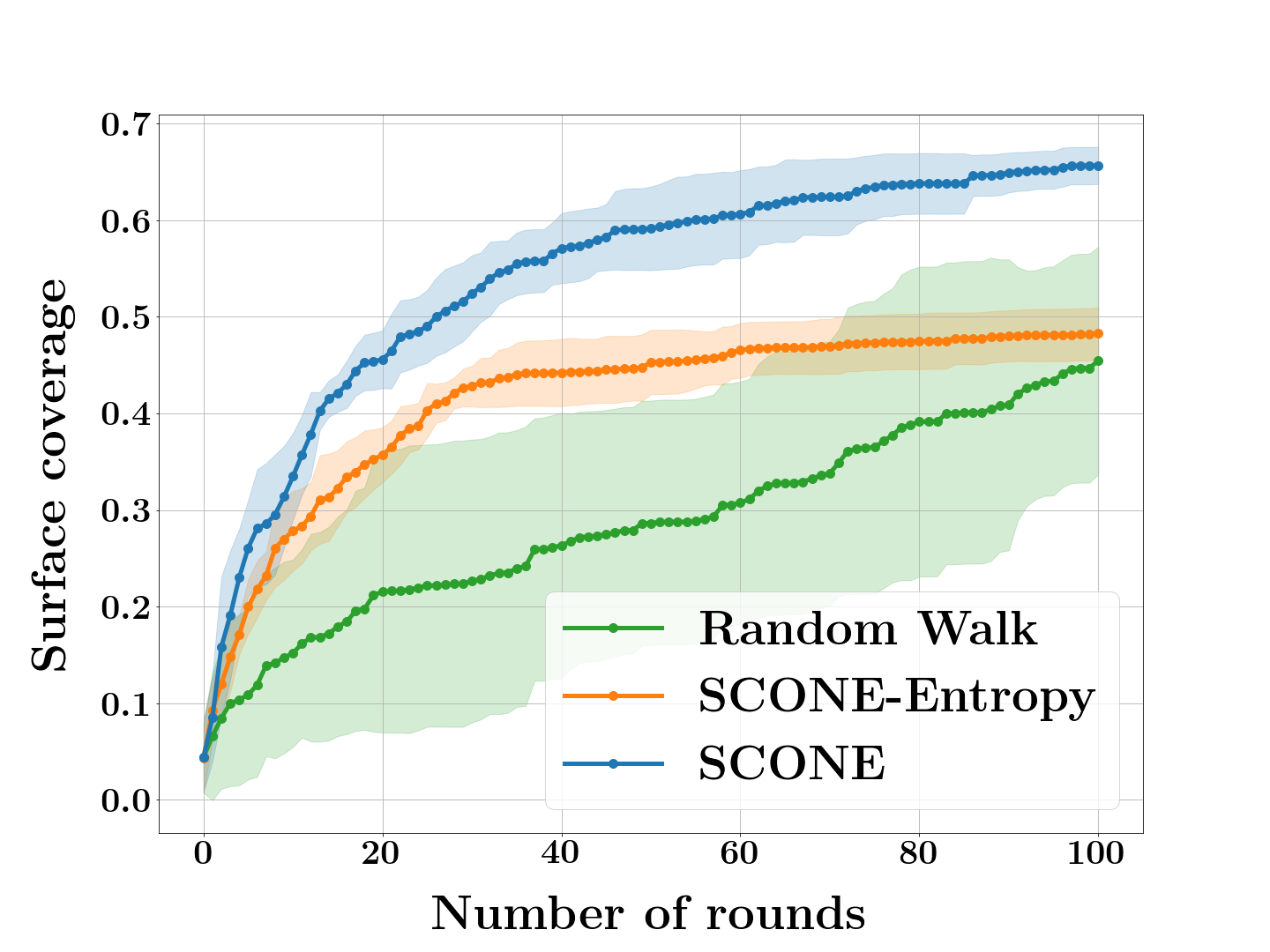} }}%
    \subfloat[\centering Neuschwanstein Castle]{{\includegraphics[width=4.1cm, trim={0 0 0 0}, clip]{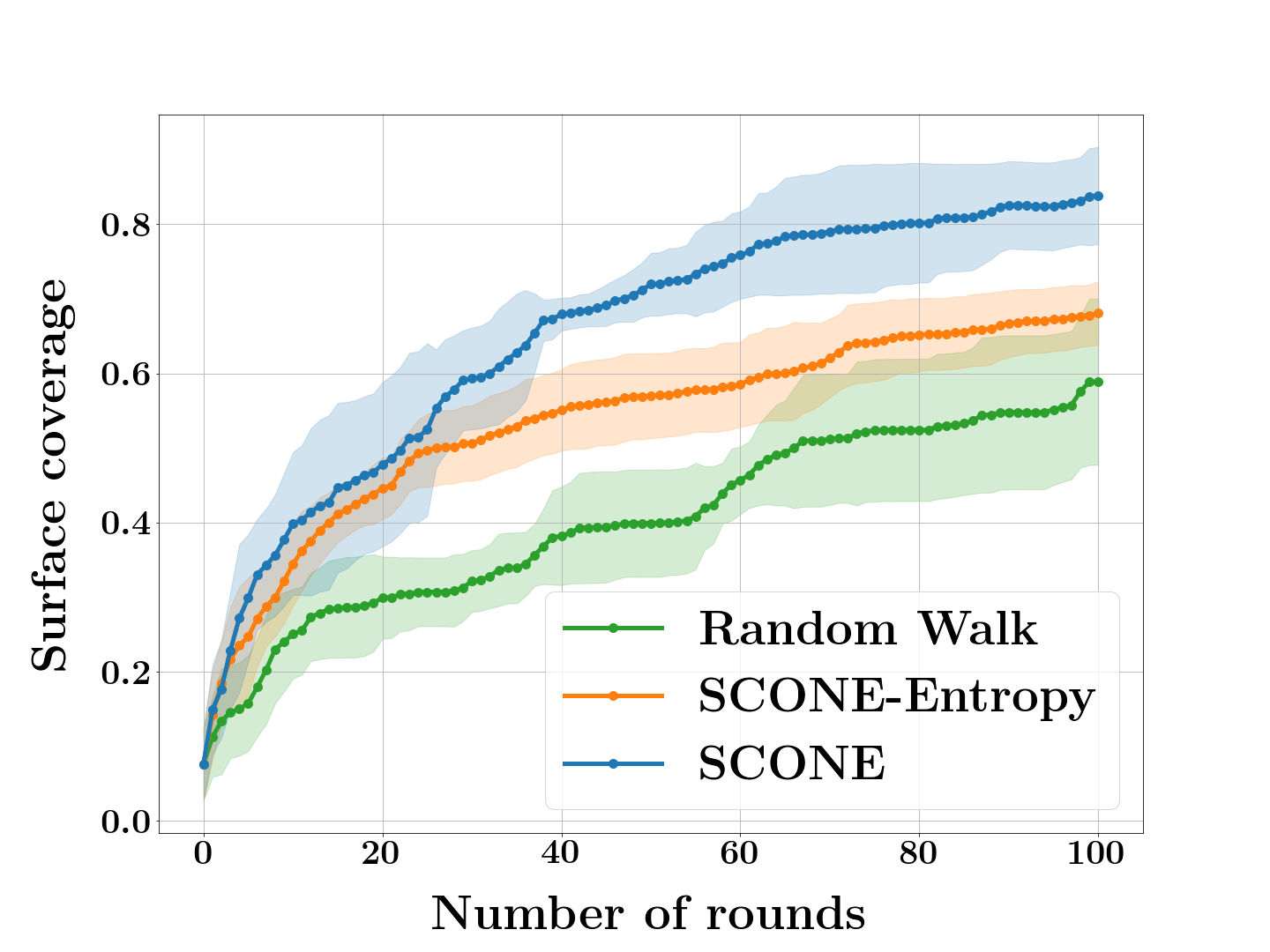} }}\\%
    
    \subfloat[\centering Colosseum]{{\includegraphics[width=4.1cm, trim={0 0 0 0}, clip]{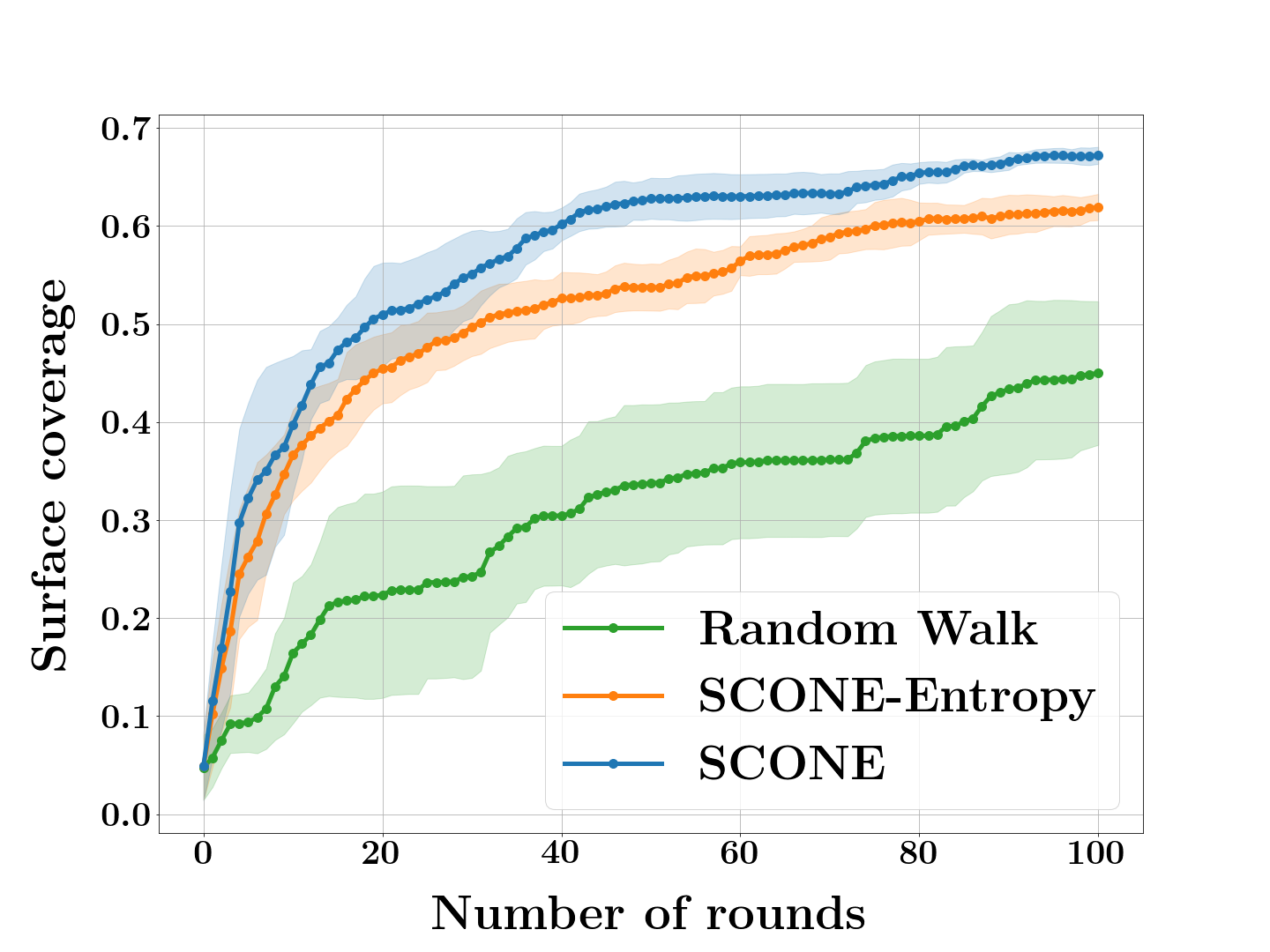} }}%
    \subfloat[\centering Eiffel Tower]{{\includegraphics[width=4.1cm, trim={0 0 0 0}, clip]{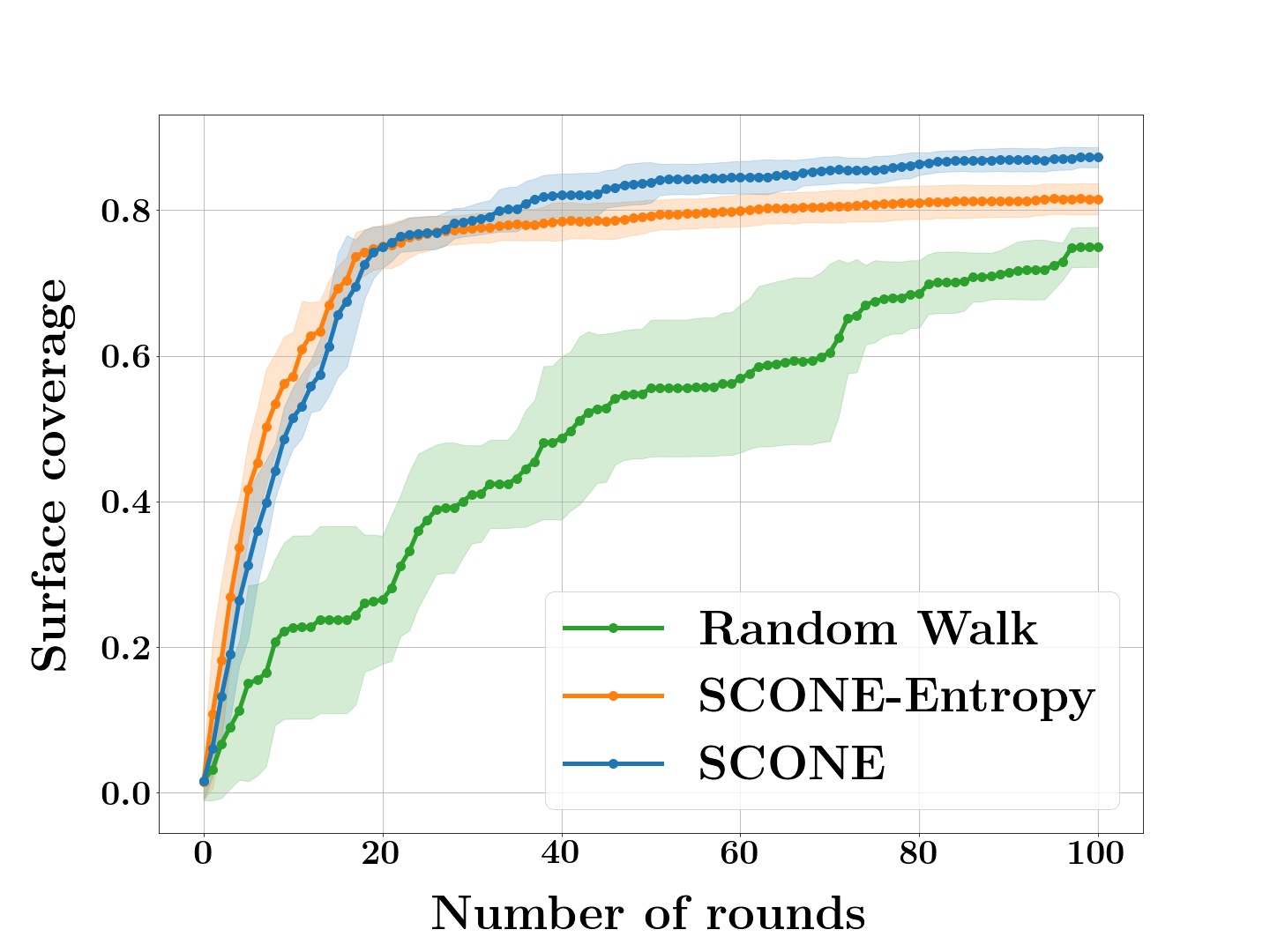} }}%
    \subfloat[\centering Fushimi Castle]{{\includegraphics[width=4.1cm, trim={0 0 0 0}, clip]{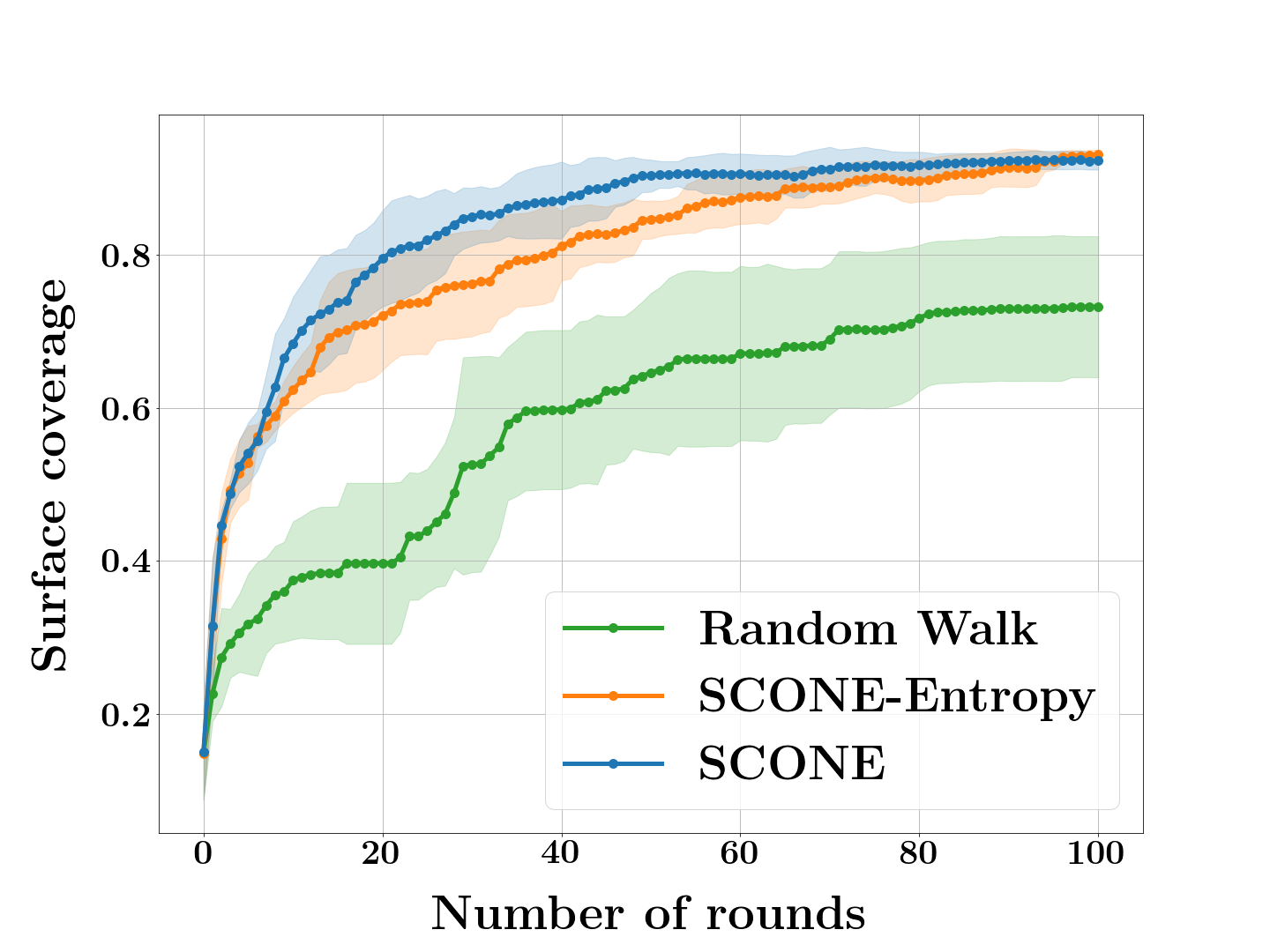} }}\\%
    
    \subfloat[\centering Pantheon]{{\includegraphics[width=4.1cm, trim={0 0 0 0}, clip]{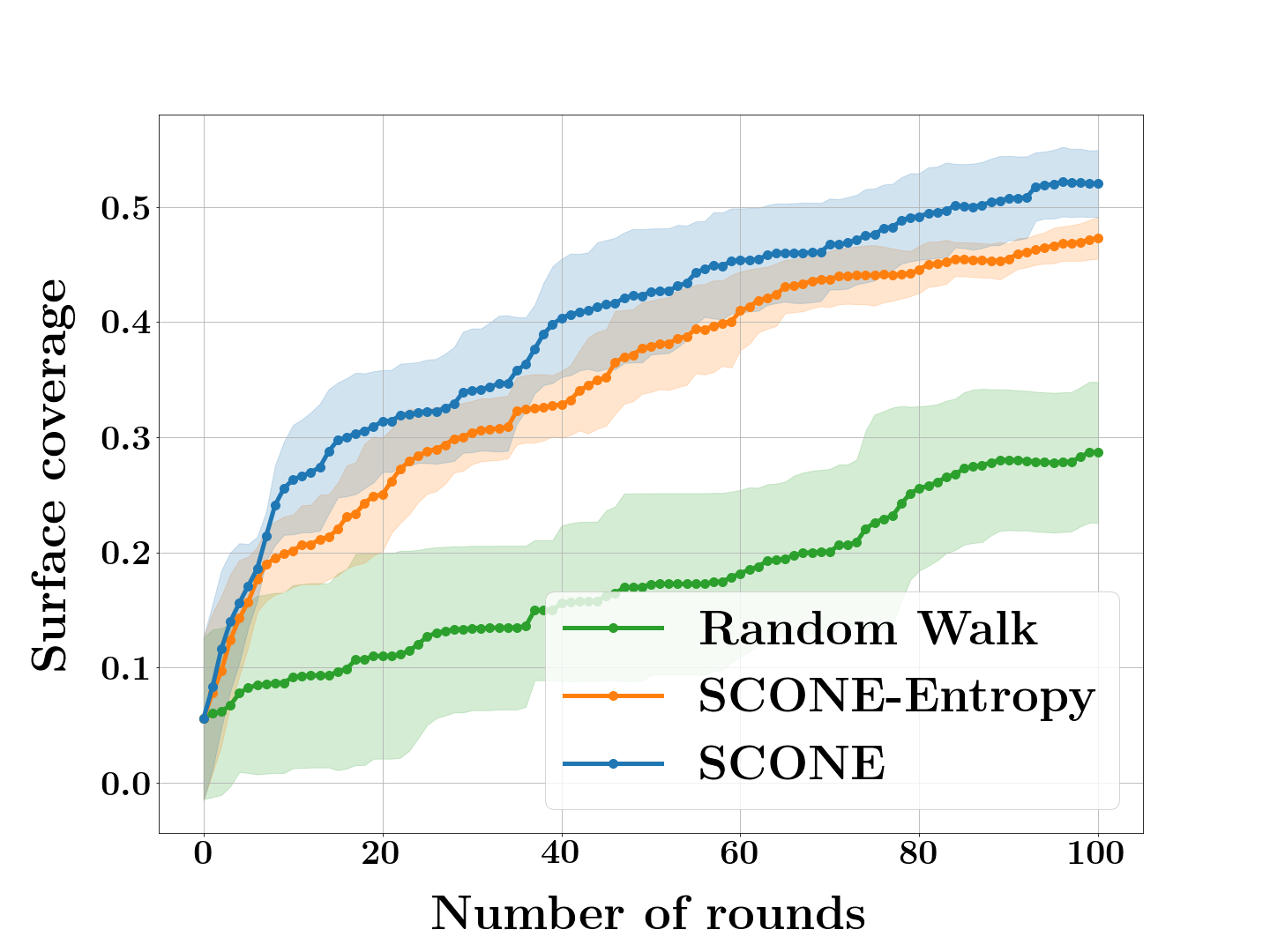} }}%
    \subfloat[\centering Bannerman Castle]{{\includegraphics[width=4.1cm, trim={0 0 0 0}, clip]{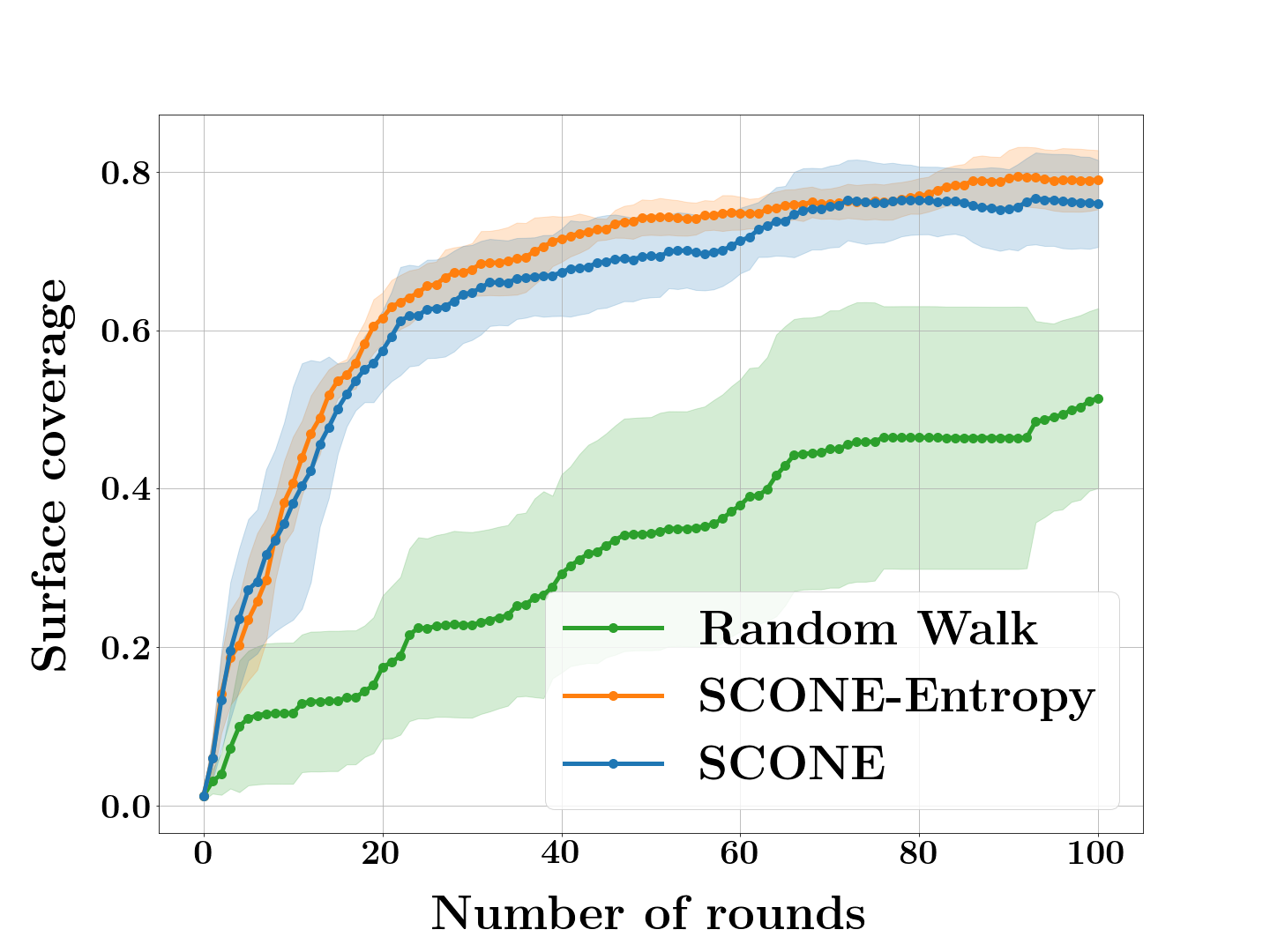} }}%
    \subfloat[\centering Christ the Redeemer]{{\includegraphics[width=4.1cm, trim={0 0 0 0}, clip]{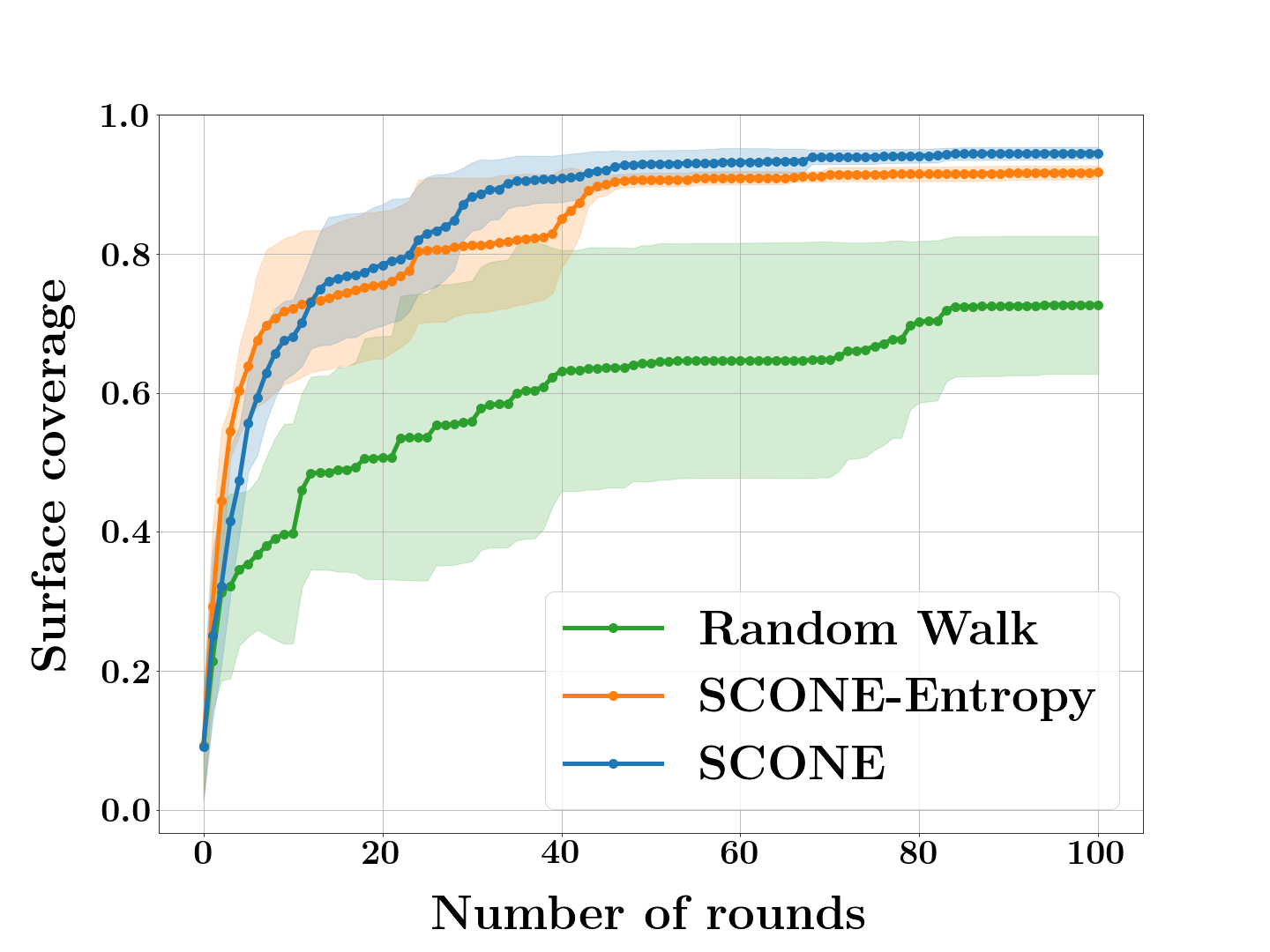} }}%

    \subfloat[\centering Statue of Liberty]{{\includegraphics[width=4.1cm, trim={0 0 0 0}, clip]{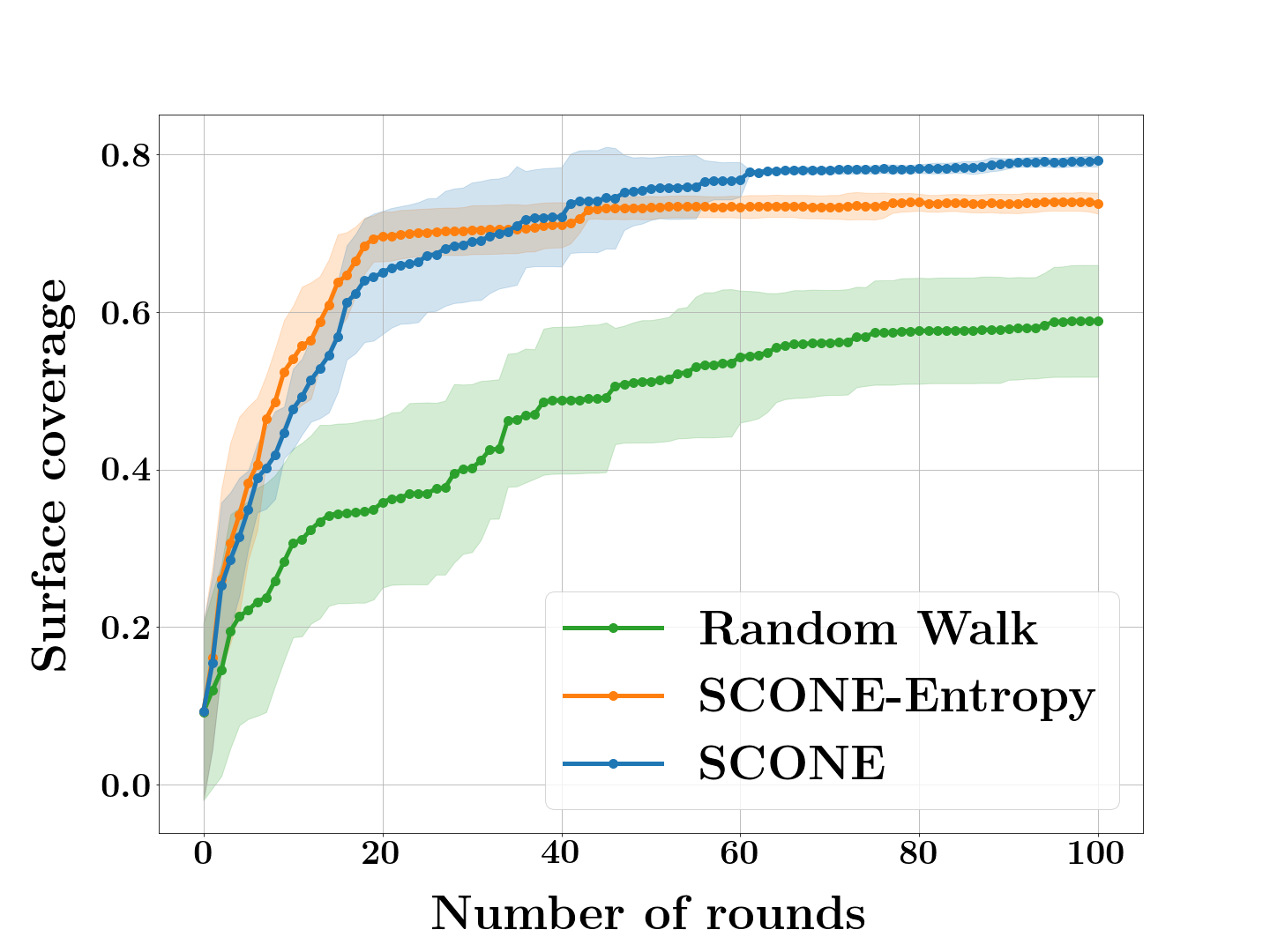} }}%
    \subfloat[\centering Natural History Museum]{{\includegraphics[width=4.1cm, trim={0 0 0 0}, clip]{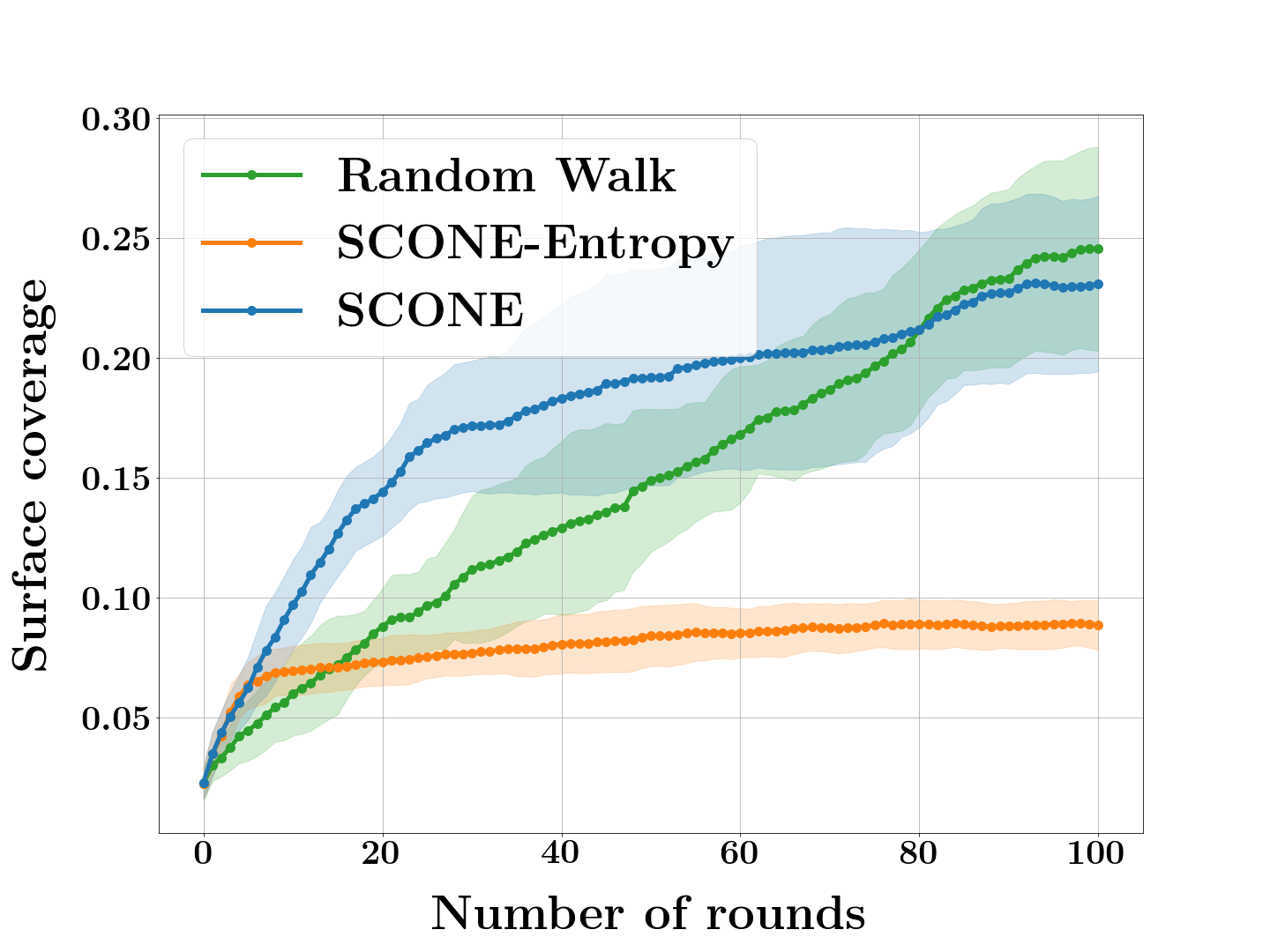} }}
    
    \caption{\label{fig:scene_experiment_coverage}  {\bf Convergence speed of the covered surface in large 3D scenes by \method and our two baselines.} The first image shows the average on all scenes. For each scene, surface coverage is averaged on several trajectories starting from different camera poses. Standard deviations are shown on the figures. Despite being trained only on centered ShapeNet 3D models, the second module of \method is able to generalize to complex scenes and consistently reaches better coverage than the baselines.}
\end{figure}

\begin{table}
  \caption{\label{tab:scene_experiment_auc}  {\bf AUCs of surface coverage in large 3D scenes by \method and our two baselines} after averaging over multiple trajectories, with standard deviations. Despite being trained only on centered ShapeNet 3D models, the second module of \method is able to generalize to complex scenes and consistently reaches better AUC than the baselines.}
  \centering
  \scalebox{0.88}{ 
  \begin{tabular}{@{}lccc@{}}
    \toprule
    \multicolumn{1}{c}{} & \multicolumn{3}{c}{Method} \\
    \cmidrule(r){2-4}
     3D scene & Random Walk & \method-Entropy & \method\\
    \midrule
     Dunnottar Castle & 0.355 $\pm$ 0.106 & 0.456 $\pm$ 0.041 & \textbf{0.739} $\pm$ 0.050 \\
     Manhattan Bridge & 0.405 $\pm$ 0.089 & 0.361 $\pm$ 0.065 & \textbf{0.685} $\pm$ 0.034 \\
     Alhambra Palace & 0.384 $\pm$ 0.086 & 0.437 $\pm$ 0.047 & \textbf{0.567} $\pm$ 0.031 \\
     Leaning Tower & 0.286 $\pm$ 0.122 & 0.415 $\pm$ 0.023 & \textbf{0.542} $\pm$ 0.026 \\
     Neuschwanstein Castle & 0.403 $\pm$ 0.032 & 0.538 $\pm$ 0.040 & \textbf{0.653} $\pm$ 0.025 \\
     Colosseum & 0.308 $\pm$ 0.061 & 0.512 $\pm$ 0.024 & \textbf{0.571} $\pm$ 0.024 \\
     Eiffel Tower & 0.495 $\pm$ 0.062 & 0.741 $\pm$ 0.017 & \textbf{0.762} $\pm$ 0.020 \\
     Fushimi Castle & 0.584 $\pm$ 0.078 & 0.802 $\pm$ 0.022 & \textbf{0.841} $\pm$ 0.027 \\
     Pantheon & 0.175 $\pm$ 0.065 & 0.351 $\pm$ 0.020 & \textbf{0.396} $\pm$ 0.036 \\
     Bannerman Castle & 0.321 $\pm$ 0.121 & \textbf{0.667} $\pm$ 0.023 & 0.642 $\pm$ 0.047 \\
     Christ the Redeemer & 0.600 $\pm$ 0.146 & 0.839 $\pm$ 0.038 & \textbf{0.859} $\pm$ 0.022 \\
     Statue of Liberty & 0.469 $\pm$ 0.075 & 0.681 $\pm$ 0.018 & \textbf{0.693} $\pm$ 0.032 \\
     Natural History Museum & 0.147 $\pm$ 0.024 & 0.080 $\pm$ 0.010 & \textbf{0.177} $\pm$ 0.031 \\
     \midrule
     Mean & 0.380 & 0.529 & \textbf{0.625} \\
  \end{tabular}
  }
\end{table}

\subsection{Ablation study}
In this subsection, we provide further analysis about both prediction modules of \method.

\paragraph{Occupancy probability.}
As we explained in the main paper, the lack of neighborhood features causes a huge loss in performance. On the contrary, using the spherical mappings $h_H(x)$ of camera history $H$ as an additional feature offers a marginal increase in performance.

In this appendix, we develop our analysis and provide not only the values of the MSE at the end of training but also IoU and training losses that support our conclusions in figure \ref{fig:ablation_occupancy}. In particular, we compute a Continuous IoU which extends the definition of IoU to a non-binary occupancy probability field. More exactly, we keep notations from subsection \ref{sec:training_occupancy}, and define the continuous IoU for the $i^{\text{th}}$ mesh of the test dataset as
\begin{equation}
    \text{IoU}_{\text{continuous}} = \frac{
    \sum_{j=1}^{N_X} \probfield(P_i;\ x_j^{(i)}) \cdot \occfield(x_j^{(i)})
    }{
    \sum_{j=1}^{N_X} \probfield(P_i;\ x_j^{(i)}) + \occfield(x_j^{(i)}) - 
    \probfield(P_i;\ x_j^{(i)}) \cdot \occfield(x_j^{(i)})
    }
\end{equation}
We also compute a more conventional IoU by thresholding occupancy probability: We define the set of predicted occupied points as the set of all points with a predicted occupancy probability above 0.5. Then, we compute the IoU with the set of ground truth occupied points.

\begin{figure}%
    \centering
    \begin{tabular}{@{}cc@{}}
  \hspace{-0.5cm}\scalebox{0.8}{ 
  \begin{tabular}{@{}lccc@{}}
    \toprule
    Architecture & Mean Squared Error & Continuous IoU & IoU \\
    \midrule
    Base Architecture & 0.0397 & 0.777 & 0.843 \\
    No Neighborhood Feature & 0.0816 & 0.611 & 0.702 \\
    With Camera History & \textbf{0.0386} & \textbf{0.782} & \textbf{0.844} \\
    \bottomrule
  \end{tabular}
 }
 &
 \raisebox{-.5\height}{\includegraphics[width=4.5cm]{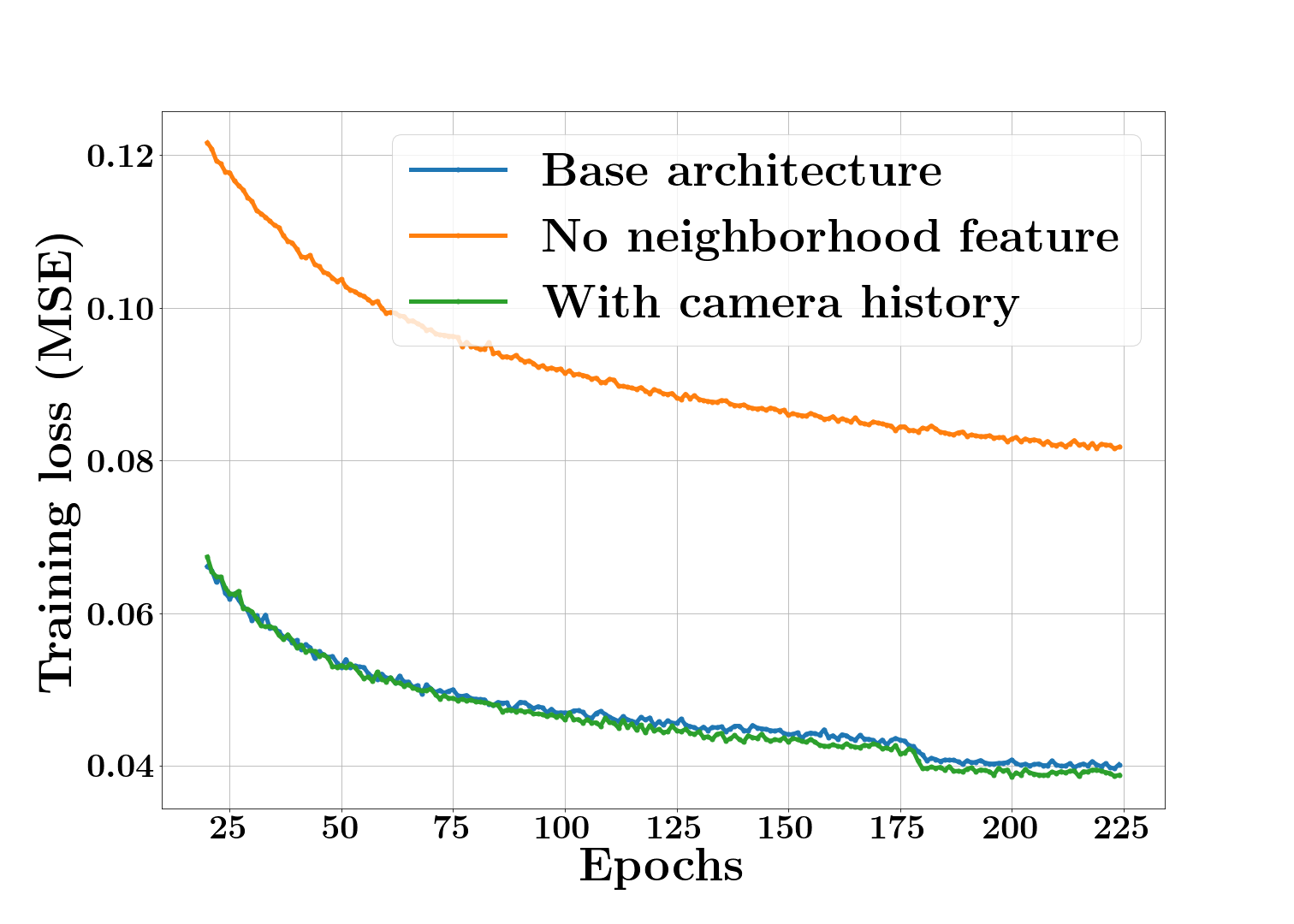}}\\[3mm]
(a) MSE and IoU after training & (b) Training loss (MSE) \\
    \end{tabular}
    \caption{\label{fig:ablation_occupancy} {\bf (a) Comparison of Mean Squared Error and IoU for variations of our occupancy probability prediction model after training. (b) Comparison of training losses (MSE) for variation of our model.} Without the multi-scale neighborhood features, the occupancy probability prediction module of \method suffers from a large decrease in performance. On the contrary, using camera history as an input feature only offers a marginal increase in performance.}%
\end{figure}

\paragraph{Visibility gain.}
We provide in figure \ref{fig:ablation_visibility} additional results supporting the observations we made in the main paper: the geometric prediction computed in a volumetric framework greatly increases performance for computing an accurate distribution of coverage gains for all camera poses in the scene, as the Kullback-Leibler divergence loss suggests. On the contrary, using spherical mappings $h_H$ of camera history as an additional input only offers a marginal increase in performance for computing the distribution of coverage gains. However, camera history features drastically improve the identification of the maximum of the distribution of coverage gains (\ie, the selection of a single NBV), as shown by the evolution of surface coverage during reconstruction.

\begin{figure}%
    \centering
    \subfloat[\centering Training loss ]{{\includegraphics[width=4.55cm, trim={0 0 3cm 0}, clip]{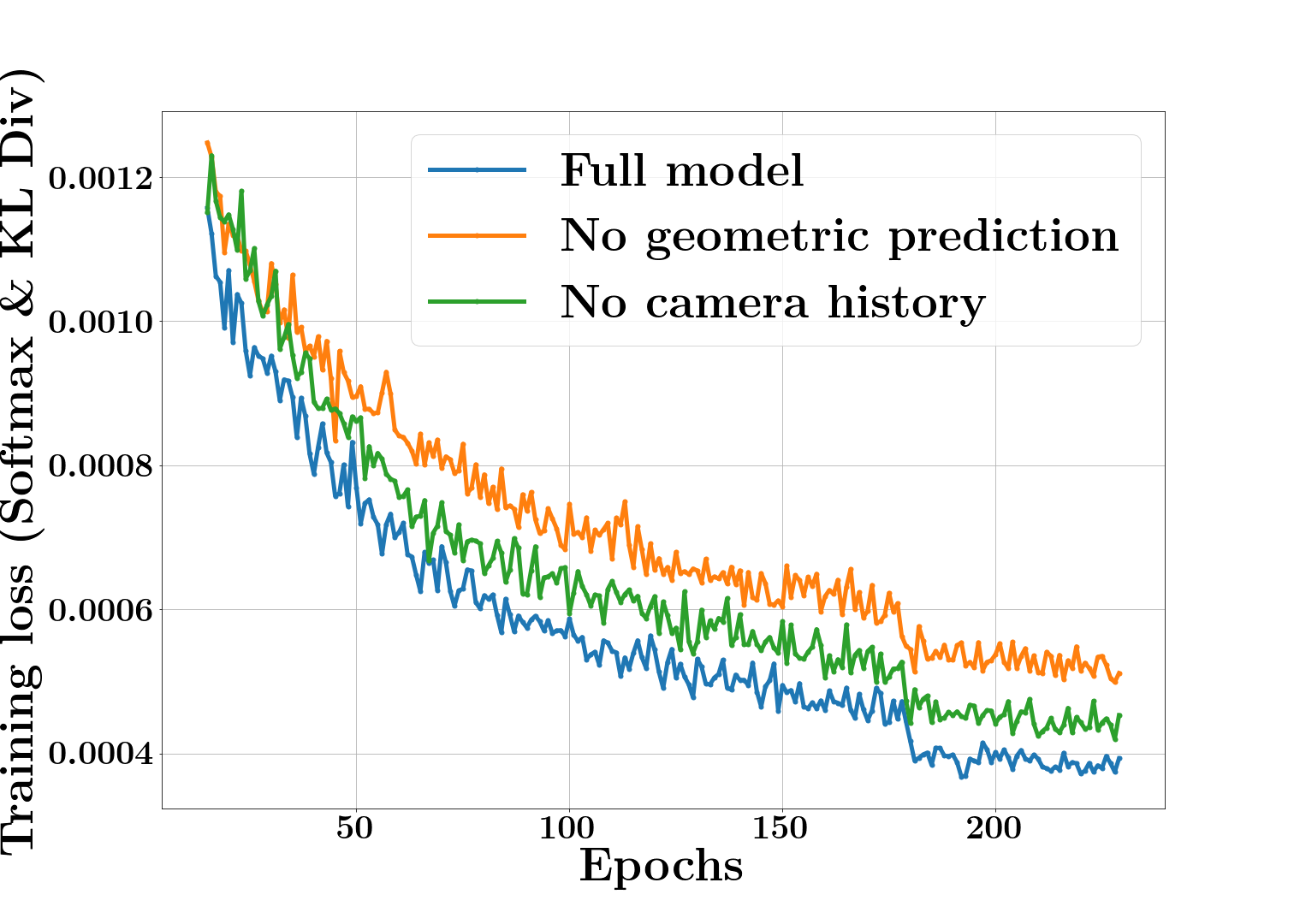} }}%
    \subfloat[\centering Validation loss ]{{\includegraphics[width=4.55cm, trim={0 0 3cm 0}, clip]{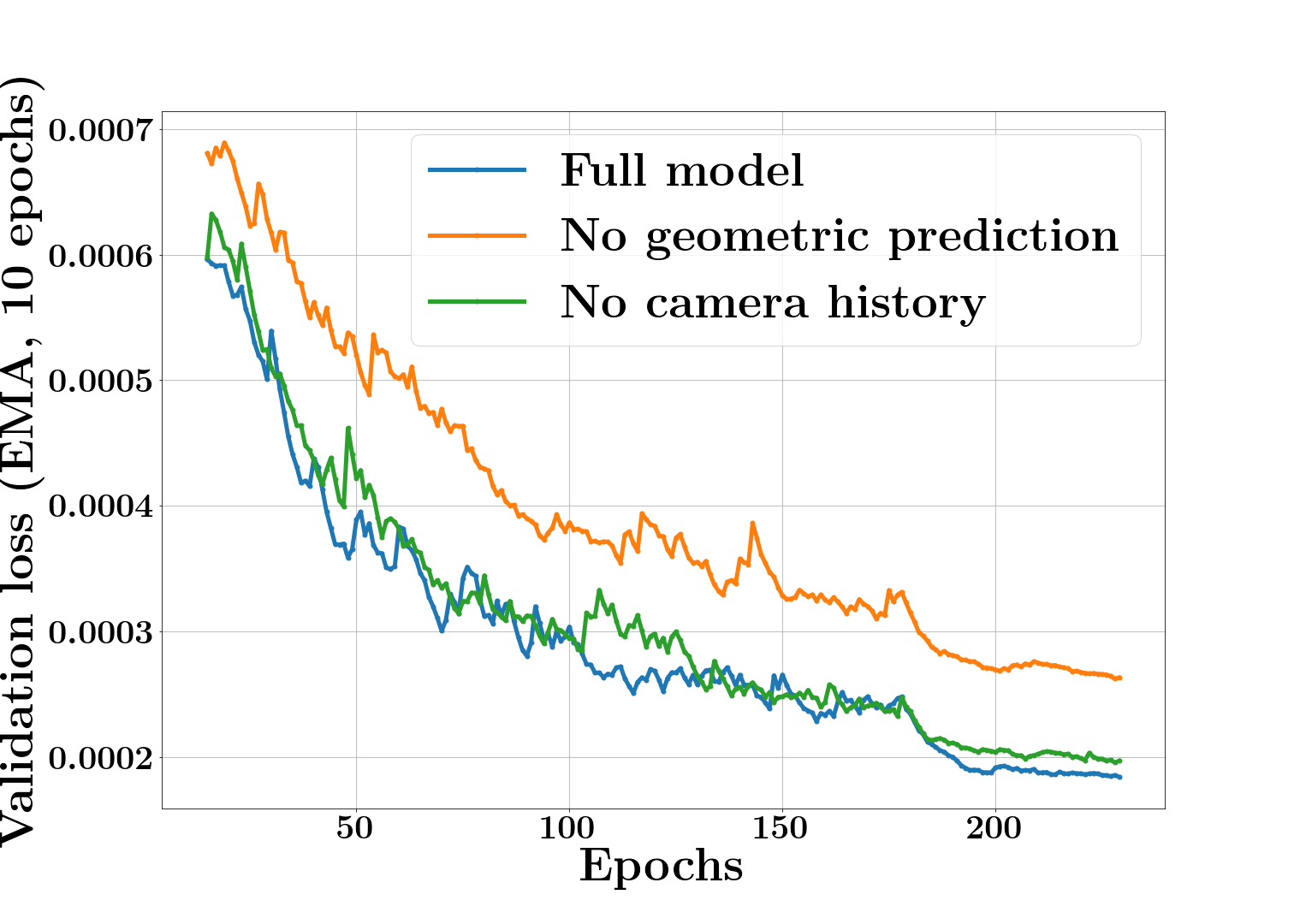} }}%
    \subfloat[\centering Surface coverage ]{{\includegraphics[width=4.55cm, trim={0 0 3cm 0}, clip]{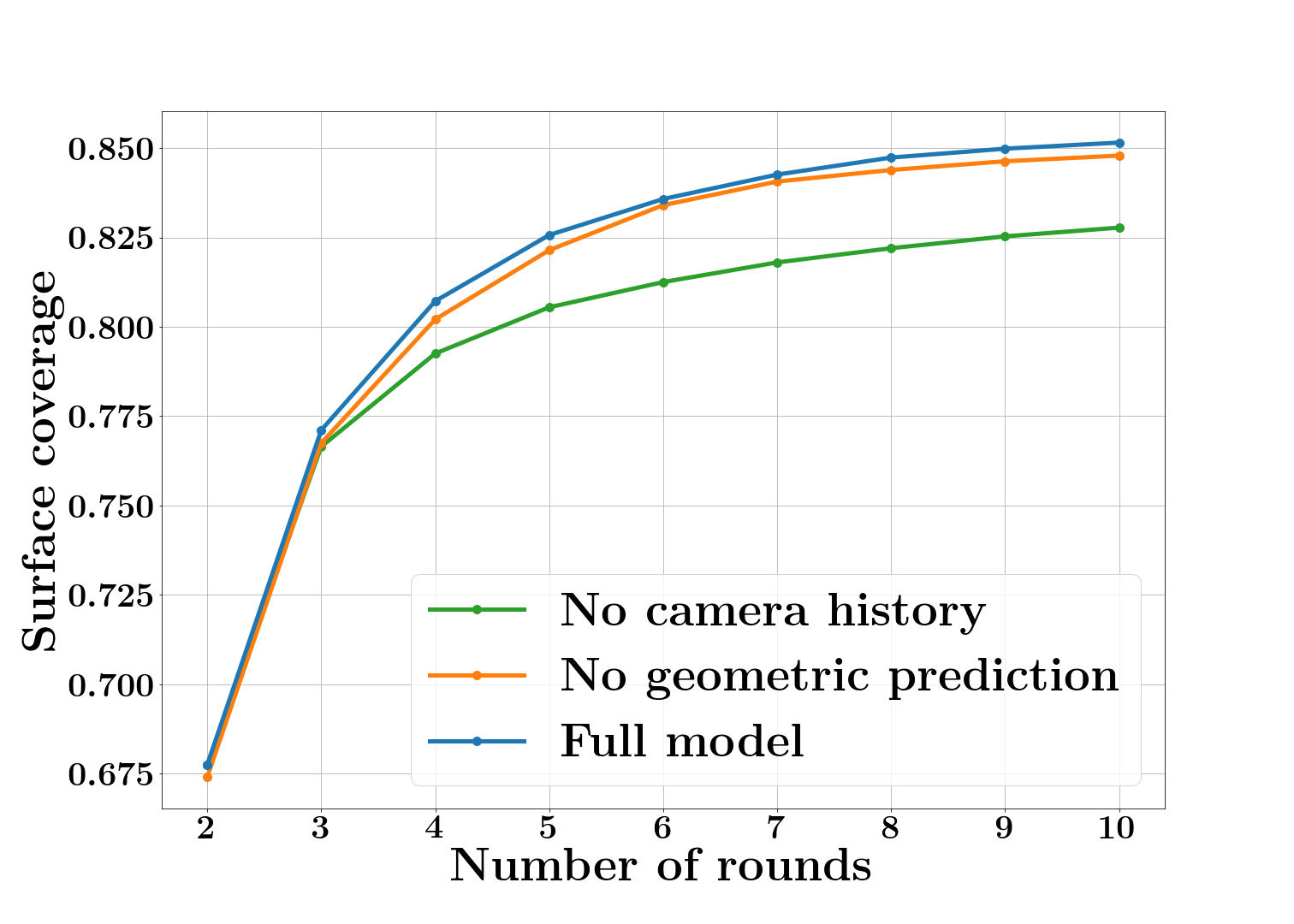} }}%
    \caption{\label{fig:ablation_visibility} {\bf Comparison of (a) training losses, (b) validation losses, and (c) surface coverage for variations of our visibility gain prediction model. The surface coverage is computed during a reconstruction process that follows the protocol presented in section \ref{sec:supp_mat_single_object}}. The validation loss is plotted with exponentially weighted moving average over 10 epochs. For surface coverage, the first round of reconstruction is not plotted since all curves start from the same point. Thanks to its volumetric approach, the full model predicts a better distribution of coverage gains on the whole space as it is indicated by the KL Divergence training loss, which is convenient for full path planning and trajectory computation in a 3D scene. Moreover, the full model does not suffer from a loss of performance in coverage when selecting a single NBV---\ie, identifying the maximum of the coverage gain distribution---compared to the version that uses directly the dense surface points, which is generally the case when working with volumetric approaches.}
\end{figure}

\end{document}